\documentclass[a4paper]{article}

\usepackage{natbib}
\usepackage{graphicx}
\usepackage{amsmath,amsthm,amssymb}
\theoremstyle{plain}
\newtheorem{thm}{Theorem}     
\newtheorem{lem}{Lemma}         
\newtheorem{prop}{Proposition} 

\usepackage{enumitem}
\usepackage{algorithm}
\usepackage{algorithmic}

\usepackage{mathrsfs}
\usepackage{mathtools}
\usepackage[a4paper,margin=1in]{geometry}
\usepackage{xcolor}
\usepackage{hyperref} 
\hypersetup{
    colorlinks=true,       
    linkcolor=blue,        
    citecolor=green!50!black, 
    urlcolor=cyan,         
}

\newtheoremstyle{namedassumption}
  {\topsep}{\topsep}
  {\itshape}{}
  {\bfseries}{.}
  {.5em}
  {\thmnote{(\textbf{#3})}} 

\newtheoremstyle{namedassumption}
  {\topsep}{\topsep}
  {\itshape}{}
  {\bfseries}{}
  {.5em}
  {(\thmnote{#3})} 

\theoremstyle{namedassumption}
\newtheorem*{namedass}{}

\def\R{\mathbb{R}}
\def\Rd{\mathbb{R}^d}
\def\Z{\mathcal{Z}}
\def\FZ{\mathcal{F}_{\Z}}
\def\P{\mathcal{P}}
\def\d{\mathrm{d}}
\def\Sd{\mathbb{S}^{d-1}}
\def\data{\pi_{\mathrm{data}}}

\def\BR{\mathcal{B}(\Z,\Rd)}
\def\BS{\mathcal{B}(\Z,\Sd)}

\def\bz{\mathbf{Z}}

\newcommand{\simiid}{\overset{\mathrm{iid}}{\sim}}

\DeclareMathOperator*{\argmin}{\arg\!\min}

\title{Similarity search generalisation in contrastive learning with InfoNCE loss}
\author{Nick Whiteley\\
School of Mathematics, University of Bristol, U.K.}

\begin{document}

\maketitle
\begin{abstract}
Similarity search is a primary application of embedding models trained by contrastive learning. For one of the most popular contrastive learning loss functions, InfoNCE, we show that the  population risk with $k$ negative samples is $O(1/k)$ close to an expected cross-entropy which quantifies deviation between i) a softmax similarity search over unseen data using the learned embedding function, and ii) an idealised softmax search over the same data but using similarity implicitly represented in the positive sample generator. This complements existing interpretations of InfoNCE in the $k\to\infty$ limit which are phrased in terms of mutual information,  and alignment versus uniformity in embeddings. To quantify generalisation performance, we introduce a new continuity bound for the InfoNCE loss, obtained via G\^{a}teaux differentiation. The bound preserves the structure of averaging over negative samples present in the loss function and features an ``inverse temperature'' parameter which can be tuned to account for the algorithmic temperature. For embedding functions which are Lipschitz in a parameter, this yields a simple demonstration that the averaging effect of $k$ negative samples in the InfoNCE loss carries over to stabilisation of the generalisation error as $k$ grows.

\end{abstract}

\section{Introduction}

The InfoNCE loss \citep{vandenOord2018cpc} is one of the most popular loss functions in contrastive learning and is a fundamental ingredient in hugely impactful systems such as SimCLR \citep{chen2020simclr}, MoCo \citep{he2020moco} and CLIP \citep{radford2021clip}. Similarity search is a primary application of these technologies and other embedding models trained using contrastive learning; the learned embedding is used to calculate cosine similarities and hence evaluate closeness among unseen data. The goal of the present work is to add clarity to our theoretical understanding of InfoNCE, similarity search and generalisation.

To date, theoretical generalisation analysis of contrastive learning has largely focused on downstream classification. In that context, ``generalisation'' has the conventional meaning of a model's ability to make accurate predictions on unseen data:  a pioneering step forward was made by \citet{saunshi2019theoretical}, who showed that the InfoNCE population risk can contribute to  bounding the population risk of a downstream linear classifier, thus quantifying classification accuracy when an unseen input data point, such as an image or text document, is represented by its embedding vector. Subsequent refinements and extensions, discussed in more detail later, have been made by \citep{lei2023generalization,ghanooni2024generalization,  hieu2025generalization}.

In the present work we also analyse InfoNCE risk but from a different perspective, which we call \emph{similarity search generalisation}: the performance of similarity search on unseen data using the learned embedding, compared to an idealised search which we show is implicitly defined by the ingredients of contrastive learning.  We draw inspiration not only from learning-theoretic analyses of e.g., \citep{saunshi2019theoretical,lei2023generalization,  ghanooni2024generalization, hieu2025generalization}, but also from widely referenced interpretations of contrastive learning in terms of mutual information \citep{vandenOord2018cpc}, ``alignment'' versus ``uniformity'' in embeddings \citep{wang2020understanding}, and cross-entropy between conditional probability densities \citep{zimmermann2021contrastive}.

\subsection{Interpretations  of InfoNCE}
Each training tuple or mini-batch in contrastive learning comprises  ``anchor'', ``positive'' and ``negative'' samples\footnote{The terms ``anchor'', ``positive'' and ``negative'' samples are standard in the contrastive learning literature. With a warning to the uninitiated reader that ``positive'' and ``negative'' should not be interpreted in the conventional mathematical sense, the quotation marks will be dropped from here on.}. Pairs of anchor and positive samples are generated in order to convey some notion of semantic similarity which is specific to input data modality, such as text, images, etc. Negative samples are usually additional data points which are independent of the anchor and positive samples.  

Conventional theoretical interpretations of contrastive learning with InfoNCE loss address the regime where the number of negative samples grows. The InfoNCE loss function was introduced by \citet{vandenOord2018cpc} as the categorical cross-entropy associated with correctly classifying a positive sample versus  negative samples. They showed that minimising  InfoNCE risk maximises a lower bound on mutual information, and argued that bound becomes tight as the number of negative samples grows. \citet{wang2020understanding} uncovered another interpretation of InfoNCE associated with taking the number of i.i.d. negative samples $k\to\infty$. They showed that, in this limit, the logarithmically normalised InfoNCE population risk converges to a limiting risk function which separates into a sum of two terms, arguing that minimising InfoNCE loss promotes a balance between ``alignment'' and ``uniformity'' of embeddings.  The same 
$k\to\infty$ limit was given  another interpretation by \citet{zimmermann2021contrastive}, as cross-entropy between conditional probability density  functions, in a setting where the input samples are assumed to be drawn from a distribution defined by pushing the uniform density on a hypersphere or convex subset of Euclidean space through an invertible transformation. They argued that contrastive learning can invert that data generating model.

In all of these interpretations, the convergence of the logarithmically normalised InfoNCE population risk as $k\to\infty$ can be viewed as a consequence of a law of large numbers for the negative samples. The risk converges because averaging over these negative samples occurs within the loss function. In the  present work we explore yet another interpretation of the $k\to\infty$ averaging, relating InfoNCE population risk to similarity search on unseen data.

\subsection{Generalisation analysis of contrastive learning}

In order to quantify downstream classification performance, \cite{saunshi2019theoretical} and in turn \citet{lei2023generalization} assumed a specific data-generating model involving discrete classes and class labels, with positive samples generated by drawing from the same class-conditional distribution as the anchor sample. However, their complexity analysis of InfoNCE risk (rather than classification risk) depends very little on this model. To apply a standard generalisation bound \citep[Thm. 3.3]{mohri2018foundations} they only really require that training tuples are i.i.d., and their empirical complexity bounds hold for any realisation of the data hence do not require any distributional assumptions at all. We would like to exploit this fact in order to apply their results outside of the context of classification.

Considering the interest in the regime $k\to\infty$ described above, this naturally leads us to the question of how InfoNCE generalisation error  behaves as $k$ grows. As discussed by \citet{lei2023generalization}, the complexity bounds of \citet{saunshi2019theoretical} feature an explicit factor which grows with $k$ (e.g.,  see the factor of $\sqrt{k}$ in \citep{saunshi2019theoretical}[Supp. material, eq. (30)]),  seemingly at odds with studies indicating that a large number of negative samples is necessary for good generalisation performance in practice \citep{chen2020simclr, he2020moco, henaff2020data, khosla2020supervised, tian2020contrastive}.

\citet{lei2023generalization} revisited this generalisation analysis and used alternative mathematical techniques to obtain refined complexity bounds, improving over \citep{saunshi2019theoretical} by eliminating the explicit factor of $\sqrt{k}$ for $\ell_2$-Lipschitz loss functions and reducing by a further factor of $\sqrt{k}$ for $\ell_\infty$-Lipschitz losses (up to factors logarithmic in $k$ and the number of training tuples $n$). \citet{lei2023generalization}  also obtained data-dependent bounds in the case of self-bounding loss functions.  Nevertheless, the complexity estimates obtained by \citet{lei2023generalization} for $\ell_2$ or $\ell_\infty$-Lipschitz loss functions still depend on $k$ and may grow with $k$ in general; they involve summing and/or maximising over an index set which grows with $k$ (see the quantities denoted $\mathfrak{A}$ and $\mathfrak{C}$ in \citep{lei2023generalization}[eqs. 4.2 and 4.5]). The resulting bounds for linear and nonlinear features \citep{lei2023generalization}[sec. 5, quantity $B_x$] involve the maximum of the norms of the input data vectors, which will grow with $k$ in general when the input domain is unbounded. One of the aims of the present work is to clarify whether such dependence on $k$ is avoidable.

In more recent developments, the assumption of i.i.d. data tuples made by \citet{saunshi2019theoretical} and \citet{lei2023generalization}  was loosened by \citet{hieu2025generalization}, bringing the analysis closer to practical dependency between samples using $U$-statistics. \citet{ghanooni2024generalization} developed generalisation analysis for adversarial contrastive learning.

\subsection{Outline and contributions}\label{sec:contributions}

\begin{itemize}[leftmargin=*]
\item Section \ref{sec:contrastive_learning} presents the basic ingredients of  contrastive learning with the InfoNCE loss. Our setup is purposefully general in some ways: our measure-theoretic notation is chosen to help interpret the InfoNCE population loss in terms of Markov kernels and softmax similarity search. However, our presentation is purposefully narrow in other ways: our priority is to give the reader, in just a few pages, an end-to-end account from interpretation of the InfoNCE loss,  to easily interpretable generalisation bounds which exhibit the role of $k$ and other quantities. We thus introduce contrastive learning as obtaining an embedding function $\phi$ by empirical risk minimisation (although strict minimisation is not required for our generalisation results to apply), and do not enter into details of how specific neural network architectures, gradient algorithms, mini-batches etc., are used in practice. 
\item In section \ref{sec:sec:sim_search_basics} we introduce a Markov kernel $Q_{\tau}^\phi$ on the input space $\Z$, depending on the embedding function $\phi$ and temperature parameter $\tau$, and in proposition \ref{prop:bias_simple} in section \ref{sec:pop_risk_int_cross} show that as $k\to\infty$, the (logarithmically normalised) InfoNCE population risk converges to  cross-entropy between $Q_{\tau}^\phi$ and the Markov kernel $M$ which generates positive samples: $Z^+\sim M(Z,\cdot)$ where $Z\sim\data$ and $\data$ is a probability measure on $\Z$. This presentation is partly inspired by \citep[Thm 1.]{zimmermann2021contrastive} but does not require the specifics of their data-generating model. It is already known that the population risk converges in this limit, \citet[Thm 1.]{wang2020understandingv10} report a rate $O(1/\sqrt{k})$; we clarify in proposition \ref{prop:bias_simple} the rate is $O(1/k)$. We also highlight the regularising role of the temperature parameter: the higher $\tau$ is, the more $Q_\tau^\phi(z,\cdot)$ is constrained to be close to $\data$, uniformly in $z$ and $\phi$.
\item In section \ref{sec:pop_risk_expected} we  present an interpretation of the InfoNCE population risk which, to the knowledge of the author, is new. We introduce an empirical softmax similarity  Markov kernel $\widehat{Q}_{\tau,k}^\phi$  defined in terms of  $k$ unseen (i.e., independent of training data) draws from $\data$. In proposition \ref{prop:Q_hat} we show that when $M(z,\cdot)$ is dominated by $\data$ for all $z\in \Z$, the (logarithmically normalised) InfoNCE population risk is $O(1/k)$ close to the expected cross-entropy between $\widehat{Q}_{\tau,k}^\phi$ and an idealised empirical Markov $\widehat{M}_k$ kernel built from the same $k$ unseen data points importance weighted according to the density of $M(z,\cdot)$ with respect to $\data$, where $z$ is the search query point. In this sense $\widehat{M}_k$ conveys whatever notion of similarity is implicitly represented in $M$, and sampling from $\widehat{M}_k$ has the interpretation of an idealised softmax similarity search.
\item Motivated by these considerations of InfoNCE when $k$ is large, we turn to generalisation analysis in section \ref{sec:generalisation}. The key mathematical contribution in  section \ref{sec:gen_and_cont} is a new  continuity bound for the InfoNCE loss, presented in proposition \ref{prop:ell_lipschitz}. This bound is obtained by via G\^{a}teaux differentiation, exploiting the specific structure of the InfoNCE loss, whereas  \citet{lei2023generalization}'s analysis applies more generally to $\ell_2/\ell_\infty$-Lipschitz or self-bounding loss functions. The continuity bound is applied to bounding InfoNCE Rademacher complexity in section \ref{sec:app_to_lipschitz_functions} when the embedding function is chosen from a class of functions Lipschitz in a parameter. This demonstrates the structure of averaging over $k$ negative samples in the InfoNCE loss carries over to stabilisation of generalisation error as $k$ grows.
\item Possible extensions are discussed in section \ref{sec:discussion}. All proofs are in the appendix.
\end{itemize}

\subsection{Notation}

Throughout this work,  $\mathcal{Z}$ is a set and $\mathcal{F}_{\mathcal{Z}}$ is a $\sigma$-algebra
of subsets of $\mathcal{Z}$. The set of probability measures on the measurable space $(\Z,\FZ)$ is denoted $\P(\Z)$. 
The delta-Dirac measure located at $z\in\Z$ is denoted $\delta_{z}$.  For $d\geq2$, $p\geq1$ and $v=(v_{1},\ldots,v_{d})\in\mathbb{R}^{d}$
we write the norm $\|v\|_{p}\coloneqq(\sum_{j=1}^{d}|v_{j}|^{p})^{1/p}$,
and denote by $\langle\cdot,\cdot\rangle$ the Euclidean inner product. We write $\Sd$ for the set all of $v\in\Rd$
such that $\|v\|_{2}=1$. For functions $f:\mathcal{Z}\to\mathbb{R}^{d}$ we define the norm $\|f\|_{2,\infty}\coloneqq\sup_{z\in\Z}\|f(z)\|_{2}$ and denote by $\BR$ the Banach space of all measurable $f:\mathcal{Z}\to\mathbb{R}^{d}$ such that $\|f\|_{2,\infty}<\infty$. We denote by $\BS \subset \BR$ the set of 
those functions $f$ 
such that $\|f(z)\|_{2}=1$ for all $z\in Z$. For scalars $a,b$, the maximum and minimum are denoted $a\vee b$ and $a\wedge b$, respectively.

\section{Contrastive learning with the InfoNCE loss}\label{sec:contrastive_learning}

\subsection{Preliminaries}

The model we assume for contrastive learning training data has three ingredients:
\begin{itemize}
\item a set $\Z$;
\item a probability measure $\data\in\P(\Z)$;
\item a Markov kernel   $M:\Z\times\FZ\to[0,1]$, i.e., for each $z\in\Z$, $M(z,\cdot)\in\P(\Z)$,  and for each $A\in\FZ$, $z\mapsto M(z,A)$ is a measurable function.
\end{itemize}
The training data comprises $n\geq 1$ tuples, $\bz_1,\ldots,\bz_n$, where, for $k\geq 1$, each tuple $\bz_i \in \Z^{2+k}$ consists of an anchor sample, $Z_i^a$; a positive sample, $Z_i^+$; and negative samples, $Z_{i1}^-,\ldots,Z_{ik}^-$.  The following is taken as a standing assumption throughout the entirety of this work.
\begin{namedass}[A]\label{ass:dist2}
$\bz_1,\ldots,\bz_n$ are i.i.d. and each tuple  $\bz_i$ is distributed: $Z_i^a\sim \data$, $Z_i^+\sim M(Z_i,\cdot)$, $Z_{i1}^-,\ldots,Z_{ik}^-\simiid \data$.
\end{namedass}

In practice, positive samples are generated by applying various transformations to the anchor samples, sometimes called ``views''. Examples include, for images: cropping, resizing, rotation, colour adjustment, jitter and blurring; and for text: synonym replacement, back-translation, and swapping, insertion and deletion of characters, tokens, or words. All of these transformations are usually  subject to some degree of randomisation, for example randomly choosing the area to be cropped, the rotation angle, etc., and the choice of transformation, or the order in which to compose transformations, may also be randomised. We assume that any and all such randomisation is encapsulated in the Markov kernel $M$. We do not assume any particular functional form or algorithmic description of $M$ is known; for our purposes it can be thought of as a ``black box'' sample generator (later subject to assumption $\mathrm{(\nameref{ass:dom})}$, which is applied solely within section \ref{sec:sim_search}). Similarly, we do assume any particular functional form for $\data$, in practice this distribution is unknown.

For our purposes, we may think of contrastive learning as choosing an embedding function $\widehat{\phi}$  from some class $\Phi\subset \BS$ by minimising the empirical InfoNCE risk associated with $\mathbf{Z}_1,\ldots,\mathbf{Z}_n$. The procedure is outlined in algorithm \ref{alg:InfoNCE}. Later on, in section \ref{sec:generalisation}, we will consider specific choices of $\Phi$.  
\begin{algorithm}
\caption{InfoNCE Empirical Risk Minimization \label{alg:InfoNCE}}
\begin{algorithmic}
\STATE {\bf inputs}: anchor samples  $Z_1^a,\ldots,Z_n^a\simiid \data$; integer $k\geq1$, set of functions $\Phi$; temperature param. $\tau>0$.	
\FOR{$i=1,\ldots,n$, }
\STATE -- draw one positive sample $Z_i^+\sim M(Z_{i}^a,\cdot)$
\STATE -- draw $k$ negative samples $Z_{i1}^-,\ldots,Z_{ik}^- \simiid \data$
\ENDFOR
\STATE {\bf do} minimisation:
$$
\widehat{\phi} = \argmin_{\phi\in\Phi} \sum_{i=1}^n  \log\left(1+ \sum_{j=1}^k \exp\left[\langle \phi(Z_i^a),\phi(Z_{ij}^-)-\phi(Z_i^+)\rangle/\tau \right] \right).
$$
\end{algorithmic}
\end{algorithm}

The empirical and population InfoNCE risk functionals, $\widehat{R}_n(\cdot;k,\tau)$ and $R(\cdot;k,\tau)$, are defined as follows.  For $\phi\in \BS$, $k\geq 1$ and $\tau>0$, 
\begin{align}
\widehat{R}_n(\phi;k,\tau) &\coloneqq \frac{1}{n}\sum_{i=1}^n  \log \left(1+ \sum_{j=1}^k e^{\langle\phi(Z_i^a,\phi(Z_{ij}^-) - \phi(Z_i^+)\rangle/\tau} \right) - \log k,\label{eq:R_hat_defn}  \\
R(\phi;k,\tau) &\coloneqq \mathbb{E}\left[ \log \left(1 + \sum_{j=1}^k e^{\langle\phi(Z^a),\phi(Z_{j}^-) - \phi(Z^+)\rangle/\tau}\right)\right] - \log k,\label{eq:R_k_defn} 
\end{align}
where in \eqref{eq:R_k_defn}, expectation is over $Z^a\sim \data,  Z^+ \sim M(Z^a,\cdot)$, $Z^-_{1},\ldots,Z^-_{k}\simiid \data$, so  $\mathrm{(\nameref{ass:dist2})}$ implies $R(\phi;k,\tau) = \mathbb{E} [ \widehat{R}_n(\phi;k,\tau) ]$. The empirical risk $\widehat{R}_n(\phi;k,\tau)$  differs from the quantity in Algorithm \ref{alg:InfoNCE} only by the multiplicative $1/n$ and logarithmic $-\log k$ normalisation factors, which the $\argmin$ operation is invariant to.

\subsection{Discussion of the setup}\label{sec:discuss_the_setup}

\begin{itemize}[leftmargin=*]
\item The assumption within $\mathrm{(\nameref{ass:dist2})}$ that the tuples $\mathbf{Z}_1,\ldots,\mathbf{Z}_n$ are i.i.d. was made in \citep{saunshi2019theoretical,lei2023generalization}. Similarly to those works, we use the assumption of  i.i.d. tuples to apply a standard generalisation bound \citep[Thm. 3.3]{mohri2018foundations} in the proof of  proposition \ref{prop:uniform},  section \ref{sec:generalisation}. The assumption that, within each tuple, negative samples are i.i.d. from the same distribution as the anchor samples is quite common in the literature, e.g., \citep{wang2020understanding,zimmermann2021contrastive}. This assumption of i.i.d. negative samples is used in the proofs of propositions \ref{prop:bias_simple} and \ref{prop:Q_hat} in section \ref{sec:sim_search}, concerning interpretation of the InfoNCE population risk. However, this part of $\mathrm{(\nameref{ass:dist2})}$ is not needed for any of the results in section \ref{sec:generalisation} concerning generalisation. In fact, propositions \ref{prop:ell_lipschitz} and lemma \ref{lem:metric}  in section \ref{sec:generalisation}, used to bound empirical Rademacher complexity, hold for any realised values $\mathbf{z}_1,\ldots,\mathbf{z}_n$ of $\mathbf{Z}_1,\ldots,\mathbf{Z}_n$, and hence do not require assumption $\mathrm{(\nameref{ass:dist2})}$ at all.
\item In some presentations of contrastive learning and the InfoNCE loss, it is assumed that the joint distribution of anchor and positive pairs has a  symmetric density and/or that the marginal distributions of the positive and negative samples are the same, e.g., \citep{vandenOord2018cpc,wang2020understanding}. We make no such assumptions. Indeed the author considers it unrealistic to assume that if $Z\sim\data$ and $Z^+\sim M(Z,\cdot)$ then the marginal distribution of $Z^+$ is  exactly $\data$. This would amount to saying that $M$ admits $\data$ as an invariant distribution; in practice the distribution $\data$ is unknown and nothing is built in to achieve such invariance.   
\item A temperature parameter was not presented in the InfoNCE loss by \citep{vandenOord2018cpc} but was included in, e.g., \cite{wu2018unsupervised,chen2020simclr}, where the loss is sometimes called the normalized
temperature-scaled cross entropy loss (NT-Xent). In their generalisation analyses, \citet{saunshi2019theoretical},\citet{lei2023generalization} did not explicitly consider a temperature parameter, but considered embedding functions $\phi $ satisfying $\|\phi\|_{2,\infty}\leq R$ for some finite $R>0$. In the present work our embedding functions are always members of $\BS$, i.e., $\|\phi(z)\|_2=1$ for all $z\in \Z$. This is not essential for many of our results, but we make this assumption to conform with common practice and presentation of contrastive learning; for $\phi\in\BS$, $\langle\phi(z),\phi(z^\prime)\rangle$ is the cosine similarity between $\phi(z)$ and $\phi(z^\prime)$.  

\item  For the purposes of our analysis it will not be important that $\widehat{\phi}$ is an exact minimiser as in algorithm \ref{alg:InfoNCE}, we write it as such just for sake of illustration; our various bounds presented in sections \ref{sec:sim_search} and \ref{sec:generalisation} hold uniformly over embedding functions belonging $\BS$ or some some $\Phi\subset\BS$. 
\end{itemize}

\section{Similarity search interpretation of the InfoNCE loss}\label{sec:sim_search}
\subsection{Basics of similarity search}\label{sec:sec:sim_search_basics}
Suppose that we are given an embedding function $\phi\in \BS$, for example  $\widehat{\phi}$ from algorithm \ref{alg:InfoNCE}, and we use it for similarity search on unseen data, as follows: for some $m\geq1$, let $\xi_1,\ldots,\xi_m$ be points in $\Z$ (we use the notation $\xi_i$ to distinguish these samples from any of the constituents of the training tuples $\mathbf{Z}_1,\ldots,\mathbf{Z}_n$) and let $z\in\Z$ be a query point. Similarity search is the task:
\begin{equation}\label{eq:sim_search}\text{find  }\quad\xi_\star\in\{\xi_1,\ldots,\xi_m\}\quad \text{ which maximises }\quad \langle\phi(z),\phi(\xi_\star)\rangle.
\end{equation}

A ``softmax'', randomised relaxation of this similarity search with the same query point $z$ is to  instead sample $\xi_\star$ from the set $\{\xi_1,\ldots,\xi_m\}$ as follows:
\begin{equation}\label{eq:soft_sim_search}
\text{set }\quad \xi_\star=\xi_i\quad \text{ with probability  }\quad\frac{W_i^{1/\tau}(z)}{\sum_{j=1}^m W_j^{1/\tau}(z)},\quad \text{where}\quad W_i(z)\coloneqq e^{\langle \phi(z),\phi(\xi_i)\rangle}.
\end{equation}
When $\tau\to 0$, \eqref{eq:soft_sim_search} reduces to \eqref{eq:sim_search}. Our next objective is to explain the connection between \eqref{eq:soft_sim_search} and the InfoNCE population risk $R(\phi;k,\tau)$. We need some further definitions. For any $\phi\in \BS$ and $\tau>0$, define the Markov kernel\footnote{A closely related but technically different conditional probability density was introduced by \citet{zimmermann2021contrastive}[Thm. 1] in the setting of a latent variable model for inputs to contrastive learning. We note their purposes were somewhat different to ours, focusing on the question of whether contrastive learning can invert a data generating process, rather than focusing on similarity search.}:
\begin{equation}\label{eq:Q_defn}
Q_\tau^\phi(z,A)\coloneqq \frac{\displaystyle\int_{A}e^{\langle \phi(z),\phi(z^\prime)\rangle / \tau}\data(\d z^\prime)}{\displaystyle\int_{\Z}e^{\langle \phi(z),\phi(z^\prime)\rangle / \tau}\data(\d z^\prime)},\qquad z\in\Z,\,A\in\FZ.
\end{equation}

We also introduce the following  empirical counterpart of $Q_{\tau}^\phi$ built from the unseen data $\xi_1,\ldots,\xi_m$,
\begin{equation}\label{eq:Q_hat_defn}
\widehat{Q}_{\tau,m}^\phi(z,A)\coloneqq \frac{\sum_{i=1}^m e^{\langle\phi(z),\phi(\xi_i)\rangle/\tau}\delta_{\xi_i}(A)}{\sum_{i=1}^m e^{\langle\phi(z),\phi(\xi_i)\rangle/\tau}},\quad z\in\Z,\,A\in\FZ.
\end{equation}
These Markov kernels have the following interpretations: sampling $\xi_\star \sim \widehat{Q}_{\tau,m}^\phi(z,\cdot)$ is equivalent to \eqref{eq:soft_sim_search}. In that sense, $\widehat{Q}_{\tau,m}^\phi$ just encapsulates the probabilities in the softmax similarity search. If the unseen data are drawn from $\data$, i.e., $\xi_1,\ldots,\xi_m\simiid \data$, then by the strong law of large numbers, for any $z\in\Z$ and $A\in\FZ$, $ \widehat{Q}_{\tau,m}^\phi(z,A)$ converges almost surely to $ Q_{\tau}^\phi(z,A)$ as $m\to\infty$. Thus $Q_{\tau}^\phi$ captures the behaviour of the softmax similarity search \eqref{eq:soft_sim_search}  in the limit of a large amount of unseen data.

\subsection{Relating population risk to integrated cross-entropy}\label{sec:pop_risk_int_cross}

We define the cross-entropy between $M(z,\cdot)$ and $Q_{\tau}^\phi(z,\cdot)$ (with $\data$ taken as a dominating measure for the latter) as: 
\begin{align}
\mathrm{CrossEnt}[M(z,\cdot)\| Q_\tau^\phi(z,\cdot)] & \coloneqq - \int_{\Z} \log \left[\frac{\d Q_\tau^\phi(z,\cdot)}{\d \data}(z^\prime)\right]\, M(z,\d z^\prime)\label{eq:cross_ent_defn}\\
&= -\frac{1}{\tau}\int_{\Z} \langle \phi(z),\phi(z^\prime) \rangle M(z,\d z^\prime) + \log \int_{\Z} e^{\langle \phi(z),\phi(z^\prime) \rangle/\tau} \data(\d z^\prime),\nonumber
\end{align}
where $\frac{\d\cdot}{\d\cdot}$ denotes Radon-Nikodym derivative \footnote{Strictly speaking, $e^{\langle\phi(z),\phi(z^\prime)\rangle/\tau}/\int_{\Z} e^{\langle\phi(z),\phi(x\rangle/\tau}\data(\d x)$ is a version of the R.-N. derivative $\frac{\d Q_\tau^\phi(z,\cdot)}{\d \data}(z^\prime)$, and we are choosing to make the stated definition of cross-entropy in terms of this specific version.}, and the second equality follows from the definition \eqref{eq:Q_defn}. The following proposition relates this cross-entropy, integrated with respect to $\data$, to the population risk $R(\phi;k,\tau)$.
\begin{prop}\label{prop:bias_simple}For any $\phi\in \BS$, $k\geq 1$ and $\tau>0$,
$$
-\log\left(1+\frac{e^{2/\tau}}{k}\right)\leq \int_{\Z}\mathrm{CrossEnt}[M(z,\cdot)\| Q_\tau^\phi(z,\cdot)]\data(\d z)- R(\phi;k,\tau)  \leq  \frac{1}{8k}(e^{2/\tau }-1)^2.
$$
\end{prop}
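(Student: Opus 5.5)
The plan is to condition on the anchor and rewrite both quantities in terms of an empirical average over the negatives. Fix $z\in\Z$ and set
\[
f_z(z')\coloneqq e^{\langle\phi(z),\phi(z')\rangle/\tau},\qquad \mu(z)\coloneqq\int_\Z f_z\,\d\data,\qquad S_k(z)\coloneqq\frac1k\sum_{j=1}^k f_z(Z_j^-).
\]
Also let $s\coloneqq\langle\phi(z),\phi(Z^+)\rangle/\tau$. Inside the expectation defining $R(\phi;k,\tau)$, factor out $e^{s}$ and absorb the $-\log k$:
\[
\log\Bigl(1+kS_k e^{-s}\Bigr)-\log k=-s+\log\Bigl(\tfrac{e^{s}}{k}+S_k\Bigr).
\]
By the second line of \eqref{eq:cross_ent_defn}, the integrated cross-entropy equals $\mathbb{E}[-s]+\int\log\mu\,\d\data$. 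Hence
\[
\int_\Z\mathrm{CrossEnt}[M(z,\cdot)\|Q_\tau^\phi(z,\cdot)]\,\data(\d z)-R(\phi;k,\tau)=\mathbb{E}\Bigl[\log\mu(Z^a)-\log\Bigl(\tfrac{e^{s}}{k}+S_k(Z^a)\Bigr)\Bigr].
\]
The only structural facts needed are the following. Since $\phi\in\BS$, Cauchy--Schwarz gives $f_z\in[e^{-1/\tau},e^{1/\tau}]$, so $\mu(z)$, $S_k(z)$ and $e^{s}$ all lie in that interval. Moreover, given $Z^a=z$, the negatives are i.i.d.\ $\data$ by $\mathrm{(\nameref{ass:dist2})}$, so $\mathbb{E}[S_k\mid Z^a=z]=\mu(z)$.

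\textbf{Lower bound.} Use $e^{s}\le e^{1/\tau}$ and then Jensen's inequality (concavity of $\log$) conditionally on $Z^a=z$:
\[
\mathbb{E}\bigl[\log(e^{s}/k+S_k)\mid z\bigr]\le\log\bigl(e^{1/\tau}/k+\mu(z)\bigr).
\]
The integrand is therefore at least $-\log\bigl(1+e^{1/\tau}/(k\mu(z))\bigr)$. Since $\mu(z)\ge e^{-1/\tau}$, this is at least $-\log(1+e^{2/\tau}/k)$. Integrating over $z$ gives the left inequality.

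\textbf{Upper bound.} Drop the positive term, $\log(e^{s}/k+S_k)\ge\log S_k$. It then suffices to show
\[
\mathbb{E}\bigl[\log\mu-\log S_k\mid z\bigr]\le\frac{(e^{2/\tau}-1)^2}{8k}.
\]
This is the step needing care, because naive concentration would only give an $O(1/\sqrt k)$ rate. To get $O(1/k)$, I will use a second-order Taylor expansion of $-\log$ around $\mu$ with Lagrange remainder:
\[
-\log S_k+\log\mu=-\frac{S_k-\mu}{\mu}+\frac{(S_k-\mu)^2}{2\xi^2},
\]
where $\xi$ lies between $S_k$ and $\mu$, so $\xi\ge e^{-1/\tau}$. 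The linear term has zero conditional mean, which is what kills the $1/\sqrt k$ term. The quadratic term is bounded by $e^{2/\tau}\,\mathrm{Var}(S_k\mid z)/2$. By Popoviciu's inequality for a variable supported in $[e^{-1/\tau},e^{1/\tau}]$,
\[
\mathrm{Var}(S_k\mid z)\le\frac{(e^{1/\tau}-e^{-1/\tau})^2}{4k}.
\]
Combining, the bound is $e^{2/\tau}(e^{1/\tau}-e^{-1/\tau})^2/(8k)=(e^{2/\tau}-1)^2/(8k)$, exactly the stated constant. Integrating over $z$ finishes the proof.
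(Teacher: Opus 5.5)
Your proof is correct and follows essentially the same route as the paper's Lemma~\ref{lem:bias_taylor}. You reduce to the anchor-conditional comparison of $\log\mu$ with the expected log of an offset empirical mean, and the steps match: Jensen plus the bounds $e^{s}\le e^{1/\tau}$ and $\mu\ge e^{-1/\tau}$ for the lower bound, and a second-order Lagrange-remainder Taylor expansion with Popoviciu's inequality for the $O(1/k)$ upper bound. The only cosmetic difference is that you drop the $e^{s}/k$ offset first and expand plain $\log$ around $\mu$, whereas the paper expands $y\mapsto\log(1/k+y)$ around $b\mu$ and separately uses $\log(b\mu)-\log(1/k+b\mu)\le 0$.
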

Proposition \ref{prop:bias_simple} implies that, as $k\to\infty$,   $R(\phi;k,\tau)$ converges to the $\data$-integrated cross-entropy between $M(z,\cdot)$ and $Q_{\tau}^\phi(z,\cdot)$.  Modulo the technical details of our measure-theoretic setup, it is already well known that $R(\phi;k,\tau)$ approaches a limit as $k\to\infty$, e.g., \citep{vandenOord2018cpc,wang2020understanding, zimmermann2021contrastive}. However, noting that for $x\geq 0$, $\log(1+x)\leq x$, proposition \ref{prop:bias_simple} implies a rate $O(1/k)$ uniformly over $\phi\in\BS$, whereas a rate $O(1/\sqrt{k})$ was reported by \citep{wang2020understandingv10}.

What more does proposition \ref{prop:bias_simple} tell us? The integrated cross-entropy considered in proposition \ref{prop:bias_simple} quantifies  discrepancy between the Markov kernels $M$ and $Q^{\phi}_\tau$.
Thus proposition \ref{prop:bias_simple} indicates that, if we were to hypothetically choose $\phi$ by minimising $R(\phi;k,\tau) $ when $k$ is large, this would amount to minimising discrepancy between $M$ and $Q^{\phi}_\tau$. We might therefore take the view that choosing $\phi$ by minimising InfoNCE risk amounts, in effect, to learning the Markov kernel $M$. Consideration of $Q^{\phi}_\tau$ sheds some light on the role of the temperature parameter $\tau$ here.  Since $\|\phi(z)\|_2=1$ for all $z$, the following two inequalities follow from the definition of $Q_\tau^\phi$ in \eqref{eq:Q_defn} and hold for all $z\in\Z$ and $A\in \FZ$,
\begin{equation}\label{eq:pi_Q_pi}
e^{-2/\tau} \data(A)\leq Q_\tau^\phi(z,A) \leq e^{2/\tau } \data(A). 
\end{equation}
Thus we  see that when $\tau\to\infty$, $Q_{\tau}^\phi(z,\cdot)$ is constrained to be closer and closer to $\data$, for all $z\in\Z$, no matter what the choice of $\phi\in\BS$.  In this sense, a large value of $\tau$ limits the ability of $ Q_{\tau}^\phi$ to closely approximate an arbitrary $M$. 

\subsection{Relating population risk to expected empirical cross-entropy}\label{sec:pop_risk_expected}

Even if we are given $\phi$, the Markov kernel $Q_\tau^{\phi}$ is not available in practice because the data-generating distribution $\data$ is unknown. Let us now look more closely at the empirical Markov kernel $\widehat{Q}_{\tau,k}^\phi$,  defined in \eqref{eq:Q_hat_defn} which we \emph{can} access in practice for any given $\phi$ and $\xi_1,\ldots,\xi_m$. Consider the domination assumption: 
\begin{namedass}[B]\label{ass:dom} for all $z\in\Z$, $M(z,\cdot)\ll \data$,
\end{namedass}
\noindent   From a practical point of view,  $\mathrm{(\nameref{ass:dom})}$ has a simple interpretation: it says that if the data-generating distribution $\data$ assigns zero probability to any set $A\in\FZ$, then there must be zero probability of generating a positive sample in $A$, i.e., $M(z,A)=0$, for any point $z\in\Z$.

When $\mathrm{(\nameref{ass:dom})}$ holds, $M(z,\cdot)$ admits a density with respect to $\data$, denoted $\frac{\d M(z,\cdot)}{\d \data} (\cdot)$ and   we may define the empirical Markov kernel and probability measure,
$$
\widehat{M}_m(z,A)\coloneqq \frac{\sum_{i=1}^m \frac{\d M(z,\cdot)}{\d \data} (\xi_i) \delta_{\xi_i}(A)}{\sum_{i=1}^m \frac{\d M(z,\cdot)}{\d \data} (\xi_i)},\qquad \widehat{\pi}_{\mathrm{data},m}(A)\coloneqq \frac{1}{m} \sum_{i=1}^m \delta_{\xi_i}(A),\qquad z\in\Z,\,A\in\FZ.
$$
Here $\xi_1,\ldots,\xi_m$ are the unseen data as in the similarity search \eqref{eq:sim_search}-\eqref{eq:soft_sim_search}. The Markov kernel $\widehat{M}_m$ has the interpretation of importance-weighting each of the points $\xi_1,\ldots,\xi_m$ to account for how likely they are under $M(z,\cdot)$ versus $\data$. We may interpret such weighting as an abstract measure of similarity to the query point $z$, implicitly conveyed by whatever randomised transformations, or ``views'', constitute the Markov kernel $M$.   

Similarly to \eqref{eq:cross_ent_defn}, we define the cross-entropy between 
$\widehat{M}_m$ and $\widehat{Q}_{\tau,m}^\phi$,
\begin{align}
\mathrm{CrossEnt}\left[\widehat{M}_m(z,\cdot)\| \widehat{Q}_{\tau,m}^\phi(z,\cdot)\right]&\coloneqq - \int_{\Z}  \log \left[\frac{\d \widehat{Q}_{\tau,m}^\phi(z,\cdot) }{\d \widehat{\pi}_{\mathrm{data},m}}(z^\prime) \right]\widehat{M}_m(z,\d z^\prime ) \label{eq:cross_ent_emp}\\
& = -\frac{1}{\tau}\frac{\sum_{i=1}^m \frac{\d M(z,\cdot)}{\d \data}(\xi_i) \langle \phi(z),\phi(\xi_i) \rangle}{\sum_{i=1}^m \frac{\d M(z,\cdot)}{\d \data}(\xi_i)}   + \log \left(\frac{1}{m}\sum_{i=1}^m e^{\langle \phi(z),\phi(\xi_i) \rangle/\tau}\right). 
\end{align}

\begin{prop}\label{prop:Q_hat} If $\mathrm{(\nameref{ass:dom})}$ holds and $\xi_1,\xi_2,\ldots\simiid \data$,  then for any $\phi\in \BS$, $k \geq 1$ and $\tau>0$,
\begin{multline*}
\left|\int_{\Z} \mathbb{E}\left(\mathrm{CrossEnt}\left[\widehat{M}_k(z,\cdot)\| \widehat{Q}_{\tau,k}^\phi(z,\cdot)\right]\right) \data(\d z)  - R(\phi;k,\tau)\right|\\
\leq \frac{4}{\tau k}\mathbb{E}\left[\left|\frac{\d M(Z,\cdot)}{\d \data}(Z^\prime)\right|^2\right] + \log\left(1+\frac{e^{2/\tau}}{k}\right),
\end{multline*}
where on the r.h.s., $Z,Z^\prime \simiid \data$.
\end{prop}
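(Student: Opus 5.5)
Write $w_z(\cdot)\coloneqq \frac{\d M(z,\cdot)}{\d \data}(\cdot)$ and split both sides into an ``alignment'' part and a ``log-partition'' part. The plan is to compare the two parts separately.

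By the second line of \eqref{eq:cross_ent_emp} with $m=k$, the expected empirical cross-entropy integrated over $z\sim\data$ equals
$$
-\frac{1}{\tau}\,\mathbb{E}\left[\frac{\sum_{i=1}^k w_Z(\xi_i)\langle\phi(Z),\phi(\xi_i)\rangle}{\sum_{i=1}^k w_Z(\xi_i)}\right] + \mathbb{E}\left[\log\left(\frac1k\sum_{i=1}^k e^{\langle\phi(Z),\phi(\xi_i)\rangle/\tau}\right)\right].
$$
On the other side, $R(\phi;k,\tau)$ can be rewritten using the identity
$$
\log\left(1+\sum_j e^{(a_j-a^+)/\tau}\right)-\log k = -a^+/\tau + \log\left(\frac{e^{a^+/\tau}}{k}+\frac1k\sum_j e^{a_j/\tau}\right).
$$
This gives
$$
R = -\frac1\tau\mathbb{E}\langle\phi(Z^a),\phi(Z^+)\rangle + \mathbb{E}\log\left(\frac{e^{a^+/\tau}}{k}+\frac1k\sum_j e^{a_j/\tau}\right).
$$

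\textbf{Log-partition part.} Identify $\xi_i$ with $Z_i^-$ and $Z$ with $Z^a$. The two log terms differ by
$$
\log\left(1+\frac{e^{a^+/\tau}}{\sum_j e^{a_j/\tau}}\right)\in\left[0,\log\left(1+\frac{e^{2/\tau}}{k}\right)\right],
$$
because every exponent lies in $[-1/\tau,1/\tau]$. This produces the $\log(1+e^{2/\tau}/k)$ term, exactly as in Proposition \ref{prop:bias_simple}.

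\textbf{Alignment part.} By (\nameref{ass:dom}), $\mathbb{E}\langle\phi(Z^a),\phi(Z^+)\rangle = \mathbb{E}[w_Z(\xi_1)\langle\phi(Z),\phi(\xi_1)\rangle]$. By exchangeability of the $\xi_i$ this equals $\mathbb{E}[A_k]$, where $A_k\coloneqq\frac1k\sum_i w_Z(\xi_i) g_i$ with $g_i\coloneqq\langle\phi(Z),\phi(\xi_i)\rangle\in[-1,1]$. Put $B_k\coloneqq\frac1k\sum_i w_Z(\xi_i)$, so $\mathbb{E}[B_k\mid Z]=1$. The task is to bound $|\mathbb{E}[A_k/B_k]-\mathbb{E}[A_k]|$. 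This is the standard self-normalised importance sampling bias, and it is the main step.

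Use the identity
$$
\frac{A}{B}-A = \frac{A}{B}(1-B)
$$
and then write $\frac{A}{B}(1-B) = \left(\frac AB - \mu\right)(1-B)+\mu(1-B)$, where $\mu\coloneqq\mathbb{E}[w_Z(\xi)g\mid Z]$ and $|\mu|\le 1$. The term $\mu(1-B)$ has zero conditional mean given $Z$. For the other term, note $\frac AB-\mu = \frac{A-\mu B}{B}+\mu(B-1)/B$, so a direct division by $B$ is awkward. A cleaner route is to avoid dividing:
$$
\frac AB - A = \frac AB(1-B) = \left(\frac AB - \mu\right)(1-B) + \mu(1-B),
$$
with $|A/B|\le1$. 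Then
$$
\left|\frac AB-\mu\right| = \frac{|A-\mu B|}{B}\le \frac{|A-\mu B|+|\mu||1-B|}{B}.
$$
The difficulty is that $B$ can be small. The fix is to replace $\mu$ by $A/B$'s target in the opposite order: $\frac AB - A = \frac{A}{B}(1-B)$ and $\mathbb{E}[A(1-B)]$ is bounded by Cauchy--Schwarz, while
$$
\frac AB(1-B) - A(1-B) = \frac AB(1-B)^2\le (1-B)^2,
$$
since $|A/B|\le1$. Hence
$$
\left|\mathbb{E}\left[\frac AB - A\right]\right| \le \left|\mathbb{E}[A(1-B)]\right| + \mathbb{E}[(1-B)^2].
$$
Each term is a second moment of an i.i.d. average given $Z$. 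First, $\mathbb{E}[(1-B)^2\mid Z]\le \frac1k\mathbb{E}[w_Z(\xi)^2\mid Z]$. Second, $\mathbb{E}[A(1-B)\mid Z] = -\mathrm{Cov}(A,B\mid Z)$, which is bounded in absolute value by $\frac1k\mathbb{E}[w_Z(\xi)^2|g|\mid Z]\le\frac1k\mathbb{E}[w_Z(\xi)^2\mid Z]$. Integrating over $Z$ and multiplying by $1/\tau$ gives a bound of the form $\frac{c}{\tau k}\mathbb{E}|w_Z(Z')|^2$ with $c=2$, within the stated constant $4$. Combining this with the log-partition part via the triangle inequality gives the claim. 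Some care is needed when $B=0$; on that event $\widehat M_k$ is undefined, and one can adopt any convention with $|A/B|\le1$, since the event only enters through the $(1-B)^2=1$ bound.
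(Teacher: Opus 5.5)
Your route is essentially the paper's. The paper also splits off a log-partition comparison: it goes through the DCL risk $\widetilde{R}$, and $0\le R-\widetilde{R}\le\log(1+e^{2/\tau}/k)$ is exactly your computation. The remaining alignment term is handled as self-normalised importance sampling via the identity $1/B=2-B+(B-1)^2/B$, which is your $\frac{A}{B}-A=A(1-B)+\frac{A}{B}(1-B)^2$ in another form. The one difference is centring. The paper works with $A-\mu B$, so the cross terms in $\mathbb{E}[(A-\mu B)B]$ vanish, but it pays $|f-\mathbb{E}_\nu f|\le 2$ in both terms, giving constant $4$. You keep $A$ uncentred, use $|A/B|\le 1$, and are left with a covariance term. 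This is why you can reach constant $2$.

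That covariance step is wrong as written. The claim $|\mathrm{Cov}(A,B\mid Z)|=\frac1k|\mathrm{Cov}(wg,w\mid Z)|\le\frac1k\mathbb{E}[w^2|g|\mid Z]$ fails, because a covariance is not dominated by $\mathbb{E}|XY|$. For example, let $w=\epsilon<1/2$ on a set $S$ of $\data$-mass $p$, with $w$ renormalised off $S$ so that $\mathbb{E}[w]=1$, and let $g=\mathbf{1}_S$. Then $|\mathrm{Cov}(wg,w)|=p\epsilon(1-\epsilon)>p\epsilon^2=\mathbb{E}[w^2|g|]$.

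The bound you actually need still holds, by Cauchy--Schwarz on the centred variables:
$|\mathrm{Cov}(wg,w\mid Z)|\le\mathrm{Var}(wg\mid Z)^{1/2}\,\mathrm{Var}(w\mid Z)^{1/2}\le\mathbb{E}[w^2\mid Z]$, using $g^2\le 1$.
Your earlier remark that $\mathbb{E}[A(1-B)]$ is ``bounded by Cauchy--Schwarz'' is only right in this centred form, i.e.\ after writing $\mathbb{E}[A(1-B)\mid Z]=\mathbb{E}[(A-\mu)(1-B)\mid Z]$. Applied to the uncentred $A$, Cauchy--Schwarz gives only $O(k^{-1/2})$.

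With this repair your proof is complete, with constant $2$ in place of $4$. Your treatment of the event $B=0$ is also correct: there $A=0$ too, so the identity holds under any convention with $|A/B|\le 1$. This covers a case that the paper's proof, which divides by $B$ without comment, passes over.
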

Proposition \ref{prop:Q_hat} tells us that when the unseen data points $\xi_1,\xi_2,\ldots$ are drawn from $\data$, and are equal in number to the number of  negative samples per training tuple, $k$, then $R(\phi;k,\tau)$ is $O(1/k)$ close to the expected value of \eqref{eq:cross_ent_emp}, integrated with respect to $\data$. Here the expectation, $\mathbb{E}(\cdot)$ in proposition \ref{prop:Q_hat}, integrates out $\xi_1,\ldots,\xi_k\simiid\data$. In this sense, hypothetically minimising $R(\phi;k,\tau)$ with respect to $\phi$, when $k$ is large, pushes $\widehat{Q}_{\tau,k}^\phi $ towards $\widehat{M}_k$. In turn, this can be interpreted as meaning that the softmax similarity search \eqref{eq:soft_sim_search}, with $m=k$, approximates sampling $\xi_\star \sim \widehat{M}_k(z,\cdot)$.

In practice the number of unseen data $m$ will generally not be equal to the number of negative samples $k$ per training tuple. Rather, we present the case $m=k$ because it is relatively simple to analyse mathematically. One could bound the difference between $R(\phi;k,\tau)$ and the expected cross-entropy with some $m\neq k$ using similar arguments to those in the proofs of propositions \ref{prop:bias_simple} and \ref{prop:Q_hat}, but for brevity we do not pursue such a bound here. In any case, propositions \ref{prop:bias_simple} and \ref{prop:Q_hat} together imply that $\int_{\Z} \mathbb{E}\left(\mathrm{CrossEnt}\left[\widehat{M}_k(z,\cdot)\| \widehat{Q}_{\tau,k}^\phi(z,\cdot)\right]\right) \data(\d z)$ converges to $\int_{\Z}\mathrm{CrossEnt}[M(z,\cdot)\| Q_\tau^\phi(z,\cdot)]\data(\d z)$ as $k\to\infty$.

\subsection{Simplified bounds for the DCL loss function}\label{sec:simplified_bounds}

Propositions \ref{prop:bias_simple} and \ref{prop:Q_hat} relate $R(\phi;k,\tau)$ to two different quantities.
Can we say anything about whether, for finite $k$, $R(\phi;k,\tau)$ is closer to one or the other? The picture becomes clearer if we consider a slight change to the InfoNCE loss. Suppose the term ``$1+$'' is omitted from the loss in algorithm \ref{alg:InfoNCE}. This was called the Decoupled Contrastive Learning (DCL) loss by \citet{yeh2022decoupled}, who discussed its properties and gave evidence of superior performance in practice.  The population risk becomes:
$$
\widetilde{R}(\phi;k,\tau)\coloneqq 
\mathbb{E}\left[ \log \left(  \sum_{j=1}^k e^{\langle\phi(Z^a),\phi(Z_{j}^-) - \phi(Z^+)\rangle/\tau}\right)\right] - \log k,$$
instead of \eqref{eq:R_k_defn}. 

It can be checked that with fairly minor modifications to the proofs of propositions \ref{prop:bias_simple} and  \ref{prop:Q_hat} (see appendix \ref{app:calcs_for_DCL}), under all the same conditions as in those propositions, the DCL population risk $\widetilde{R}(\phi;k,\tau)$ satisfies:
\begin{equation}\label{eq:R_tilde_prop1}
0\leq \int_{\Z}\mathrm{CrossEnt}[M(z,\cdot)\| Q_\tau^\phi(z,\cdot)]\data(\d z)- \widetilde{R}(\phi;k,\tau)  \leq  \frac{1}{8k}(e^{2/\tau }-1)^2,
\end{equation}
i.e., compared to the proposition \ref{prop:bias_simple} the lower bound is zero, and
\begin{equation}\label{eq:R_tilde_prop2}
\left|\int_{\Z} \mathbb{E}\left(\mathrm{CrossEnt}\left[\widehat{M}_k(z,\cdot)\| \widehat{Q}_{\tau,k}^\phi(z,\cdot)\right]\right) \data(\d z)  - \widetilde{R}(\phi;k,\tau)\right|
\leq \frac{4}{\tau k}\mathbb{E}\left[\left|\frac{\d M(Z,\cdot)}{\d \data}(Z^\prime)\right|^2\right],
\end{equation}
i.e., the additive term  $\log(1+e^{2/\tau}/k)$ in proposition \ref{prop:Q_hat} vanishes. 

We can see that if $\tau\to0$, $k$ must grow exponentially fast in $1/\tau$ to control the bound in \eqref{eq:R_tilde_prop1}, but only as fast as $1/\tau$ to control the bound in \eqref{eq:R_tilde_prop2}. In practice,  $\tau$ is often chosen somewhere in the range $0.07-0.5$ \citep{chen2020simclr,he2020moco}, so $e^{1/\tau}$ could be quite a large number. This prompts the question: is the bound in \eqref{eq:R_tilde_prop1} tight? The answer is that for small values of $k$ it generally is not tight (lemma \ref{lem:determ_bound_R_Tilde} shows that an alternative  bound $2/\tau$ holds for any $k\geq 1$ and $\tau>0$), but as $k$ grows, exponential dependence on $e^{1/\tau}$ cannot be avoided: it is shown in appendix \ref{app:calcs_for_DCL} that if $e^{1/\tau} >2$, then  an example can be constructed such that
\begin{multline}\label{eq:ratio_bound}
\limsup_{k\to\infty} \frac{\left|\int_{\Z} \mathbb{E}\left(\mathrm{CrossEnt}\left[\widehat{M}_k(z,\cdot)\| \widehat{Q}_{\tau,k}^\phi(z,\cdot)\right]\right) \data(\d z)  - \widetilde{R}(\phi;k,\tau)\right|}{\int_{\Z}\mathrm{CrossEnt}[M(z,\cdot)\| Q_\tau^\phi(z,\cdot)]\data(\d z)- \widetilde{R}(\phi;k,\tau)} \\
\leq \frac{65}{\tau(e^{1/\tau}-2) }\mathbb{E}\left[\left|\frac{\d M(Z,\cdot)}{\d \data}(Z^\prime)\right|^2\right].
\end{multline}
The numerical constant $65$ represents some particular choices made in the construction of this example and so may be improved. Therefore, in this example and for large $k$, the numerator on the left hand side of  \eqref{eq:ratio_bound} will be much smaller than the denominator if
$$
\tau e^{1/\tau} \gg \mathbb{E}\left[\left|\frac{\d M(Z,\cdot)}{\d \data}(Z^\prime)\right|^2\right].
$$

\section{Generalisation analysis}\label{sec:generalisation}

The results of  section \ref{sec:sim_search} explain the behaviour of the population risk $R(\phi;k,\tau)$ when $k$ is large, with the bounds in propositions \ref{prop:bias_simple} and \ref{prop:Q_hat} holding uniformly over any embedding function $\phi\in \BS$. As such, those bounds apply to $R(\widehat{\phi},k,\tau)$ where the embedding function $\widehat{\phi}$ is  obtained by empirical risk minimisation as in algorithm \ref{alg:InfoNCE}, or by any other approximate minimisation scheme which outputs some member of $\BS$. 

In order to quantify generalisation performance we would like to upper bound $R(\widehat{\phi},k,\tau)$ in terms of  $\widehat{R}_n(\widehat{\phi},k,\tau)$. This would tell us how the quality of training, i.e., achieving a small value of $\widehat{R}_n(\widehat{\phi},k,\tau)$, transfers to similarity search on unseen data, as discussed in section \ref{sec:sim_search}. Following the usual workflow of statistical learning theory, instead of analysing $\widehat{R}_n(\widehat{\phi},k,\tau)$ directly, we seek to upper bound $R(\cdot,k,\tau)$ in terms of $\widehat{R}_n(\cdot,k,\tau)$ uniformly over some class of embedding functions  $\Phi\subset \BS$ to which $\widehat{\phi}$ is supposed to belong. Section \ref{sec:gen_and_cont} sets out tools for doing so applicable to a general function class $\Phi$. These tools are applied in section  \ref{sec:app_to_lipschitz_functions} to a specific class of embedding functions which are Lipschitz in a parameter.

\subsection{Generalisation and continuity bounds}\label{sec:gen_and_cont}

For $\mathbf{z}_i=(z_i^a,z_i^+,z^-_{i1},\ldots,z_{ik}^-)\in\Z^{2+k}$ define:
\begin{equation}\label{eq:ell_defn}
\ell(\phi,\mathbf{z}_i,k,\tau) \coloneqq  \log \left(\frac{1}{k}+\frac{1}{k}\sum_{j=1}^k e^{\langle\phi(z_i^a),\phi(z_{ij}^-) - \phi(z_i^+)\rangle/\tau} \right)
\end{equation}
so that substituting the random tuple $\mathbf{Z}_i$ in place of $\mathbf{z}_i$ we have:
$$
\widehat{R}_n(\phi;k,\tau) = \frac{1}{n}\sum_{i=1}^n \ell(\phi,\mathbf{Z}_i,k,\tau).
$$
The following proposition is an application to the InfoNCE risk of a well-known empirical Rademacher complexity generalisation bound for additive loss functions  \citep[Thm. 3.3]{mohri2018foundations}.
\begin{prop}\label{prop:uniform} Let $\Phi$ be a subset of $\BS$.  For any $n\geq1$, $k\geq 1$, $\tau>0$ and $\delta\in(0,1)$, it holds with probability at least $1-\delta$ that for all $\phi\in\Phi$,
$$
R(\phi;k,\tau)\leq \widehat{R}_n(\phi;k,\tau)+ 2 \mathbb{E}\left[\left.\sup_{\phi\in\Phi}\frac{1}{n}\sum_{i=1}^n\sigma_i \ell(\phi,\mathbf{Z}_i,k,\tau)\right|\mathbf{Z}_1,\ldots,\mathbf{Z}_n\right] + \frac{12}{\tau}\sqrt{\frac{\log \frac{2}{\delta}}{2n}},
$$
where $\sigma_1,\ldots,\sigma_n$ are i.i.d. Rademacher variables, independent of $\mathbf{Z}_1,\ldots,\,\mathbf{Z}_n$.
\end{prop}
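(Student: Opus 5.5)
The plan is to apply \citep[Thm. 3.3]{mohri2018foundations}. That theorem says: if $\mathcal{G}$ is a family of functions mapping into $[0,1]$ and the data are i.i.d., then with probability at least $1-\delta$, uniformly over $g\in\mathcal{G}$, the expectation of $g$ is at most its empirical mean plus twice the empirical Rademacher complexity plus $3\sqrt{\log(2/\delta)/(2n)}$. Our loss $\ell$ is not $[0,1]$-valued, so the first step is to locate its range. For $\phi\in\BS$, Cauchy--Schwarz gives
\[
\langle\phi(z^a),\phi(z^-_j)-\phi(z^+)\rangle\in[-2,2],
\]
so each exponential lies in $[e^{-2/\tau},e^{2/\tau}]$. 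Hence
\[
\frac{1}{k}+\frac{1}{k}\sum_{j=1}^k e^{\langle\phi(z^a),\phi(z_j^-)-\phi(z^+)\rangle/\tau}\in\Bigl[e^{-2/\tau},\,\tfrac{1}{k}+e^{2/\tau}\Bigr].
\]
The lower end uses only the average term and discards $1/k\geq0$.

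The upper end needs more care, since $\log(1/k+e^{2/\tau})$ is not exactly $2/\tau$. I would bound it via
\[
\log\bigl(e^{2/\tau}(1+e^{-2/\tau}/k)\bigr)\leq 2/\tau+e^{-2/\tau}.
\]
To keep the range width a clean multiple of $1/\tau$, I would use $e^{-2/\tau}\leq 1/(e\cdot 2/\tau)\cdot(2/\tau)^{\cdots}$, or more simply $e^{-x}\leq 1/x$ with $x=2/\tau$, giving $e^{-2/\tau}\leq \tau/2$. That handles large $\tau$ poorly. A safer route is to write $\ell(\phi,\mathbf{z},k,\tau)\in[-2/\tau,\,2/\tau+\log 2]$ when $e^{2/\tau}\geq 1/k$, which always holds. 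Either way I expect the range to be at most of order $c/\tau$ for some constant $c$; the stated constant $12/\tau=4\cdot 3/\tau$ suggests the authors bound the range width by $4/\tau$. That bound holds if one shows $\ell\in[-2/\tau,2/\tau]$ up to the $1/k$ term, and I would check whether the $1/k$ slack can be absorbed, possibly by bounding $\log(1/k+e^{2/\tau})\leq 2/\tau+1/k\cdot e^{-2/\tau}\leq 2/\tau+\ldots$, with the remainder absorbed by the generous factor $12$ versus $3\cdot 4$.

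Fixing the interval as $[a,b]$ with $b-a\leq L/\tau$, I would define the rescaled family $\mathcal{G}=\{\mathbf{z}\mapsto(\ell(\phi,\mathbf{z},k,\tau)-a)/(b-a):\phi\in\Phi\}$, which is $[0,1]$-valued. Assumption (\nameref{ass:dist2}) supplies i.i.d. tuples $\mathbf{Z}_i$, so Mohri's theorem applies. Its conclusion concerns $\mathbb{E}[g(\mathbf{Z})]$ and the empirical mean of $g$; multiplying through by $b-a$ and adding $a$ turns these into $R(\phi;k,\tau)$ and $\widehat{R}_n(\phi;k,\tau)$, since $R=\mathbb{E}\,\ell$ by the definitions. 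The Rademacher term scales linearly: the shift by $a$ vanishes because $\mathbb{E}\sigma_i=0$ and $a$ does not depend on $\phi$, so it can be pulled out of the supremum. This yields exactly $2\mathbb{E}[\sup_\phi\frac1n\sum\sigma_i\ell\mid\mathbf{Z}_{1:n}]$. The deviation term becomes $3(b-a)\sqrt{\log(2/\delta)/(2n)}$.

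The main obstacle is therefore only bookkeeping: making the range bound $b-a\leq 4/\tau$ rigorous despite the additive $1/k$. If an exact $4/\tau$ fails, I would note $\log(1/k+s)\leq \log(1+s)\leq \log 2+2/\tau$ and, if needed, restrict to the regime the constant $12$ covers. I would also verify the measurability of the supremum, taking it over a countable dense subclass or treating it as an outer expectation, as is standard.
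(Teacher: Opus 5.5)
Your overall route is the same as the paper's: rescale $\ell$ to $g=(\ell-a)/(b-a)$, apply \citep[Thm. 3.3]{mohri2018foundations} using the i.i.d.\ tuples from assumption (A), note that the Rademacher average is invariant to the constant shift $a$ and scales by $1/(b-a)$, then multiply back. The gap is in the one step that produces the constant. You never establish $b-a\leq 4/\tau$, and none of your proposed fixes can give it.

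By discarding the $1/k$ at the lower end you take $a=-2/\tau$. Your width is then $2/\tau+\log(1/k+e^{2/\tau})$, which is strictly larger than $4/\tau$ for every $k\geq1$ and $\tau>0$. Bounding it by $4/\tau+\log 2$ or by $4/\tau+e^{-2/\tau}/k$ gives a deviation term $3(b-a)\sqrt{\log(2/\delta)/(2n)}$ that strictly exceeds $\frac{12}{\tau}\sqrt{\log(2/\delta)/(2n)}$. There is no ``generous'' slack to absorb the remainder, because $12=3\cdot 4$ exactly. Restricting to some regime of $(k,\tau)$ would not prove the statement as written.

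The missing observation is that the $1/k$ should be kept at both ends. For $\phi\in\BS$ set $a\coloneqq\log(1/k+e^{-2/\tau})$ and $b\coloneqq\log(1/k+e^{2/\tau})$, so that $a\leq\ell(\phi,\mathbf{z},k,\tau)\leq b$. Adding the same nonnegative constant to numerator and denominator can only shrink a ratio that is at least one: for $c\geq 0$ and $0<y\leq x$ we have $(c+x)/(c+y)\leq x/y$. Hence
\[
b-a=\log\frac{1/k+e^{2/\tau}}{1/k+e^{-2/\tau}}\leq \log\frac{e^{2/\tau}}{e^{-2/\tau}}=\frac{4}{\tau}.
\]
This is exactly the bound the paper uses, via lemma \ref{lem:ell_determ_bound}. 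With it, the rest of your argument yields the stated $12/\tau$ with no further changes.
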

The conditional expectation term in proposition \ref{prop:uniform} is the empirical Rademacher complexity associated with the InfoNCE loss functional \eqref{eq:ell_defn} evaluated over $\Phi$, with respect to the training sample of tuples $\mathbf{Z}_1,\ldots,\mathbf{Z}_n$. This empirical Rademacher complexity quantifies generalisation error, i.e., the capacity of $\Phi$, via \eqref{eq:ell_defn}, to overfit the training data. 

To obtain insight into this generalisation error, we need to bound the empirical Rademacher complexity in such a way that the roles of $\Phi$, $k$, $\tau$, $n$, etc., become clear. A crucial step towards such a bound  is to obtain some kind of quantitative continuity estimate for the loss function with respect to the embedding function $\phi$. Such an estimate opens the door to bounds on Rademacher complexity using well-known techniques such as contraction \citep{maurer2016vector}, and/or covering numbers, chaining and Dudley's integral lemma, e.g., \citep[Ch. 5]{wainwright2019high}. The following proposition is the main technical contribution of section \ref{sec:generalisation}. Here $\beta\geq 1$ is a mathematical parameter, introduced in order to be able to tune the bound in  proposition \ref{prop:ell_lipschitz} to account for the algorithmic temperature parameter $\tau$ (this tuning is demonstrated in section \ref{sec:app_to_lipschitz_functions}). As such, we call $\beta$ the ``inverse temperature'' parameter.

\begin{prop}\label{prop:ell_lipschitz} For any $k\geq 1$, $\mathbf{z}=(z^a,z^+,z^-_1,\ldots,z_k^-)\in\Z^{2+k}$, $\tau>0$, $\beta\geq 1$, and  $\phi,
\phi^\prime \in \BS$,
\begin{multline*}
\left|\ell(\phi,\mathbf{z},k,\tau) - \ell(\phi^\prime,\mathbf{z},k,\tau)  \right| \\
\leq \frac{4}{\tau}\|\phi(z^a)-\phi^\prime(z^a)\|_2\nonumber  +\frac{1}{\tau}\|\phi(z^+)-\phi^\prime(z^+)\|_2 \nonumber  +\frac{3}{\tau}  e^{2/{\beta\tau}}\left(\frac{1}{k}\sum_{j=1}^k \|\phi(z_j^-)-\phi^\prime(z_j^-)\|_2^{\beta}\right)^{1/\beta}. 
\end{multline*}
\end{prop}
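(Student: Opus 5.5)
The plan is to extend $\ell(\cdot,\mathbf{z},k,\tau)$ from $\BS$ to all of $\BR$, using the same formula \eqref{eq:ell_defn}. We then interpolate linearly, $\phi_t\coloneqq\phi^\prime+t(\phi-\phi^\prime)$ for $t\in[0,1]$, and write
$$\ell(\phi,\mathbf{z},k,\tau)-\ell(\phi^\prime,\mathbf{z},k,\tau)=\int_0^1\frac{\d}{\d t}\ell(\phi_t,\mathbf{z},k,\tau)\,\d t .$$
The integrand is the Gâteaux derivative of $\ell$ at $\phi_t$ in the direction $h\coloneqq\phi-\phi^\prime$. The path leaves the sphere, but each $\phi_t(z)$ is a convex combination of two unit vectors, so $\|\phi_t(z)\|_2\le1$ for all $z$. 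Consequently every inner product $\langle\phi_t(x),\phi_t(y)\rangle$ lies in $[-1,1]$. This is the only property of $\phi_t$ the argument uses, and it suffices to bound the integrand uniformly in $t$.

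Next, compute the derivative. Write $e_j\coloneqq e^{\langle\phi_t(z^a),\phi_t(z_j^-)-\phi_t(z^+)\rangle/\tau}$ and $w_j\coloneqq e_j/(1+\sum_i e_i)$, so that $\sum_j w_j\le 1$. Then
$$\frac{\d}{\d t}\ell(\phi_t,\mathbf{z},k,\tau)=\frac{1}{\tau}\sum_{j=1}^k w_j\Big[\langle h(z^a),\phi_t(z_j^-)-\phi_t(z^+)\rangle+\langle\phi_t(z^a),h(z_j^-)\rangle-\langle\phi_t(z^a),h(z^+)\rangle\Big].$$
We bound the three groups separately by Cauchy–Schwarz, using $\sum_j w_j\le1$ and the norm bounds on $\phi_t$:
\begin{itemize}
\item the anchor term contributes at most $\frac{2}{\tau}\|h(z^a)\|_2$;
\item the positive term contributes at most $\frac{1}{\tau}\|h(z^+)\|_2$;
\item the negative term contributes at most $\frac1\tau\sum_j w_j\|h(z_j^-)\|_2$.
\end{itemize}
The first two are already within the claimed constants $4/\tau$ and $1/\tau$.

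The main step is to keep the averaging structure over negatives in the third term, rather than bounding it by $\max_j\|h(z_j^-)\|_2$. Since $\sum_j w_j\le 1$, Jensen's inequality (or Hölder's) with the convex map $x\mapsto x^\beta$, $\beta\ge1$, gives
$$\sum_j w_j x_j\le\Big(\sum_j w_j x_j^\beta\Big)^{1/\beta}.$$
It then remains to show $w_j\le e^{2/\tau}/k$. To see this, multiply the numerator and denominator of $w_j$ by $e^{\langle\phi_t(z^a),\phi_t(z^+)\rangle/\tau}$ and drop the resulting positive term in the denominator:
$$w_j\le\frac{e^{\langle\phi_t(z^a),\phi_t(z_j^-)\rangle/\tau}}{\sum_i e^{\langle\phi_t(z^a),\phi_t(z_i^-)\rangle/\tau}}\le\frac{e^{1/\tau}}{k\,e^{-1/\tau}}=\frac{e^{2/\tau}}{k},$$
because all the inner products lie in $[-1,1]$. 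Substituting this bound into the Jensen step gives
$$\sum_j w_j\|h(z_j^-)\|_2\le e^{2/(\beta\tau)}\Big(\frac1k\sum_j\|h(z_j^-)\|_2^\beta\Big)^{1/\beta}.$$

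All three bounds are uniform in $t\in[0,1]$, so integrating over $t$ and taking absolute values yields the stated inequality. In fact it yields the sharper constants $2,1,1$ in place of $4,1,3$, which implies the claim. The one technical point to check is that $t\mapsto\ell(\phi_t,\mathbf{z},k,\tau)$ is continuously differentiable. This holds because it is a smooth function of finitely many polynomial-in-$t$ inner products, so the fundamental theorem of calculus applies. The step I expect to need the most care is the bound on $w_j$: the naive estimate, which ignores the cancellation of the common factor $e^{-\langle\phi_t(z^a),\phi_t(z^+)\rangle/\tau}$, gives $e^{4/\tau}/k$ rather than $e^{2/\tau}/k$.
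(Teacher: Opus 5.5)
Your proof is correct and follows essentially the paper's route. You differentiate along the segment from $\phi^\prime$ to $\phi$ (by the fundamental theorem of calculus in $t$, where the paper uses G\^{a}teaux derivatives and the mean value theorem), bound the softmax weights over the negatives by $e^{2/\tau}/k$, and apply Jensen to obtain the power mean; your $w_j$ is exactly the paper's prefactor $e^{H}/(1/k+e^{H})$ times its weights $p_j$. Your sharper constants $2,1,1$ are genuine: they come from bounding the inner products directly, whereas the paper's extra add-and-subtract regrouping of the $G_\tau$ terms inflates them to $4,1,3$.
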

A key feature of the bound in proposition \ref{prop:ell_lipschitz} is that it preserves the structure of averaging over the negative samples present in the InfoNCE loss \eqref{eq:ell_defn}; specifically the negative samples $z_1^-,\ldots,z_k^-$ enter into the bound   only through a ``power-mean'' parameterised by the inverse temperature $\beta$. We shall see in section \ref{sec:app_to_lipschitz_functions} how this averaging over negative samples transfers to bounds on the Rademacher complexity.

Let us compare the bound in proposition  \ref{prop:ell_lipschitz} to an alternative estimate available in the literature, derived from Lipschitz continuity of the   logistic loss:
$$
\ell^\mathrm{log}(v) \coloneqq  \log\left(1+\sum_{j=1}^k \exp(-v_j)\right),\qquad v = (v_1,\ldots,v_k)\in\R^k,
$$
as discussed in, e.g., \citep{lei2023generalization}. Note that when:
\begin{equation}\label{eq:v_j}
v_j =\langle\phi(z^a),\phi(z^+)-\phi(z_j^-)\rangle/\tau,
\end{equation}
we have $\ell^{\mathrm{log}}(v)-\log k = \ell(\phi,\mathbf{z},k,\tau)$.

It is known that $\ell^{\mathrm{log}}$ is $1$-Lipschitz with respect to the $\|\cdot\|_\infty$ norm on $\mathbb{R}^k$ \citep{lei2019data} (and hence also $1$-Lipschitz with respect to the $\|\cdot\|_p$ norm for any $p\geq 1$), that is:
\begin{equation}\label{eq:logistic_lipschitz}
|\ell^{\mathrm{log}}(v)-\ell^{\mathrm{log}}(v^\prime)| \leq \|v-v^\prime\|_{\infty},
\end{equation}
for all $v,v^\prime\in\R^k$. In the case \eqref{eq:v_j} note that $\ell^{\mathrm{log}}(v)$ and $\ell(\phi,\mathbf{z},k,\tau)$ differ only by an additive factor of $\log k$. Therefore, the $1$-Lipschitz property of $\ell^{\mathrm{log}}(\cdot)$ transfers to $\ell(\cdot,\mathbf{z},k,\tau)$ as follows: for any $\phi,\phi^\prime\in \BS$ let \eqref{eq:v_j} hold and let $v^\prime=(v_1^\prime,\ldots,v_k^\prime)$ be defined by replacing $\phi$   in \eqref{eq:v_j} with $\phi^\prime$. Then as a consequence of \eqref{eq:logistic_lipschitz} we have:
\begin{align}
&\left|\ell(\phi,\mathbf{z},k,\tau) - \ell(\phi^\prime,\mathbf{z},k,\tau)  \right|\nonumber \\
& \leq \frac{1}{\tau}\max_{1\leq j\leq k} \left|\langle\phi(z^a),\phi(z^+)-\phi(z_j^-)\rangle-\langle\phi^\prime(z^a),\phi^\prime(z^+)-\phi^\prime(z_j^-)\rangle\right| \label{eq:logist_estimate1} \\
&\leq \frac{\sqrt{6}}{\tau}\left(\|\phi(z^a)-\phi^\prime(z^a)\|_2^2 +\|\phi(z^+)-\phi^\prime(z^+)\|_2^2+\max_{1\leq j\leq k} \|\phi(z_j^-)-\phi^\prime(z_j^-)\|_2^2\right)^{1/2},\label{eq:logist_estimate2}
\end{align}
where the second inequality uses the fact established in \citep{lei2023generalization}[Proof of Lemma 4.3] that for any $u^a,u^+,u^-\in\Sd$,  the mapping $(u^a,u^+,u^-)\mapsto\langle u^a,u^+-u^-\rangle$ is $\sqrt{6}$-Lipschitz with respect to the $\|\cdot\|_2$ norm on $\R^{3d}$. 

Crucially in \eqref{eq:logist_estimate1} the structure of averaging over negative samples in  \eqref{eq:ell_defn} has been lost. Comparing to proposition \ref{prop:ell_lipschitz}, we may view \eqref{eq:logist_estimate1} as resorting to maximisation rather than averaging. Noting that in proposition \ref{prop:ell_lipschitz}  we are free to choose the inverse temperature parameter $\beta\geq 1$, we can take $\beta\to\infty$ there, in which case $e^{2/\beta\tau}\to 1$ and the power-mean $(k^{-1}\sum_{j=1}^k \|\phi(z_j^-)-\phi^\prime(z_j^-)\|_2^\beta)^{1/\beta}$ tends to $\max_{1\leq j\leq k} \|\phi(z_j^-)-\phi^\prime(z_j^-)\|_2$. In that limit it can be seen that the bounding quantity in proposition \eqref{prop:ell_lipschitz} and  the r.h.s. of \eqref{eq:logist_estimate2} are equivalent up to a fairly modest numerical scaling due to the elementary inequalities for scalars:  $3^{-1/2} \sum_{i=1}^3 |a_i|\leq (\sum_{i=1}^3 |a_i|^2)^{1/2}\leq  \sum_{i=1}^3 |a_i|$.  The knock-on effect of \eqref{eq:logist_estimate2} for bounding Rademacher complexity is illustrated in section \ref{sec:app_to_lipschitz_functions}.

\subsection{Application to Lipschitz embedding functions}\label{sec:app_to_lipschitz_functions}

To demonstrate how the bound in proposition \ref{prop:ell_lipschitz} can be applied, let us a consider a situation in which the class of functions $\Phi$ is of the form $\{\phi_{\theta};\theta\in\Theta\}\subset \BS$ for some parameter set $\Theta$, and $\phi_\theta$ is Lipschitz with respect to $\theta$ in the sense of the following assumption.
\begin{namedass}[C]\label{ass:Theta} The input space $\Z$ is $\R^{d_{\mathrm{in}}}$ for some $d_\mathrm{in}\geq 1$, and the parameter space is:
\begin{equation}\label{eq:Theta_defn}\Theta=\{\theta\in\R^{d_{\Theta}}:\|\theta - \theta_0\|_2\leq R\},\end{equation}
for some $d_{\Theta}\geq 1$, $\theta_0\in\R^{d_{\Theta}} $ and $R>0$. There is some finite constant $C_{\Phi}$ such that for all $\theta,\theta^\prime \in\Theta$ and $z\in \Z$,
$$
\|\phi_{\theta}(z)-\phi_{\theta^\prime}(z)\|_2\leq C_{\Phi} \|z\|_2\|\theta - \theta^\prime\|_2.
$$
\end{namedass}
The setting of assumption $\mathrm{(\nameref{ass:Theta})}$ could easily be generalised in a number of ways. We consider a finite-dimensional Euclidean parameter $\theta$ and dependence on $\|z\|_2$ to simplify the exposition which follows. There, our priority is to give a swift and easily interpretable demonstration of how $k$, $\tau$, $n$, etc. impact the empirical Rademacher complexity appearing in proposition \ref{prop:uniform}.

We shall use the following pseudo-metric on $\Phi$, associated with a realisation of tuples $\mathbf{z}_1,\ldots,\mathbf{z}_n \in\Z^{2+k}$, to bound Rademacher complexity.
$$
\rho_n^{\mathrm{info}}(\phi,\phi^\prime)\coloneqq \left(\frac{1}{n}\sum_{i=1}^n \left|\ell(\phi,\mathbf{z}_i,k,\tau) - \ell(\phi^\prime,\mathbf{z}_i,k,\tau) \right|^2\right)^{1/2}
$$
The following lemma illustrates how this pseudo-metric can be bounded in the setting of assumption $\mathrm{(\nameref{ass:Theta})}$. The proof of lemma  \ref{lem:metric} uses proposition \ref{prop:ell_lipschitz} with a particular choice of inverse temperature parameter, $\beta=2/\tau$. As noted earlier, in practice $\tau$ is usually chosen somewhere in the range $0.07-0.5$ and  choosing $\beta=2/\tau$ conveniently reduces the factor $e^{2/\beta\tau}$ in proposition \ref{prop:ell_lipschitz} to $e$.
\begin{lem}\label{lem:metric}
If $\mathrm{(\nameref{ass:Theta})}$ holds, then for any $n\geq1$, $k\geq 1$, $\tau > 0$, $\mathbf{z}_1,\ldots,\mathbf{z}_n\in\Z^{2+k}$ where $\mathbf{z}_i=(z_i^a,z_i^+,z_{i1}^-,\ldots,z_{ik}^-)$, and $\theta,\theta^\prime\in\Theta$,
\begin{equation*}
\rho_n^{\mathrm{info}}(\phi_{\theta},\phi_{\theta^\prime})\leq\frac{4 e}{\tau}  B_\tau(\mathbf{z_1},\ldots,\mathbf{z}_n)C_{\Phi}  \|\theta-\theta^\prime\|_2  , 
\end{equation*}
where $C_\Phi$ is as in assumption $\mathrm{(\nameref{ass:Theta})}$  and
$$
B_\tau(\mathbf{z}_1,\ldots,\mathbf{z}_n) \coloneqq \left(\frac{1}{n}\sum_{i=1}^n\|z_i^a\|_2^2\right)^{1/2} + \left(\frac{1}{n}\sum_{i=1}^n\|z_i^+\|_2^2\right)^{1/2} + \left(\frac{1}{nk}\sum_{i=1}^n\sum_{j=1}^k\|z_{ij}^-\|_2^{2/(1\wedge\tau)}\right)^{(1\wedge\tau)/2}.
$$
\end{lem}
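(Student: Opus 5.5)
The plan is to apply proposition \ref{prop:ell_lipschitz} tuple by tuple with a suitable inverse temperature, insert the Lipschitz property from assumption $\mathrm{(\nameref{ass:Theta})}$, and then take the empirical $L^2$ norm over $i=1,\ldots,n$ using Minkowski's and Jensen's inequalities.

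\textbf{Step 1: choice of $\beta$ and per-tuple bound.} I would take $\beta \coloneqq 2/(1\wedge\tau)$. This satisfies $\beta\geq 2\geq 1$, as proposition \ref{prop:ell_lipschitz} requires, and
\[
e^{2/(\beta\tau)} = e^{(1\wedge\tau)/\tau}\leq e .
\]
Fix $i$ and apply proposition \ref{prop:ell_lipschitz} with $\phi=\phi_\theta$ and $\phi^\prime=\phi_{\theta^\prime}$. Then bound each difference using assumption $\mathrm{(\nameref{ass:Theta})}$, i.e.\ $\|\phi_\theta(z)-\phi_{\theta^\prime}(z)\|_2\leq C_\Phi\|z\|_2\|\theta-\theta^\prime\|_2$. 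This gives
\[
|\ell(\phi_\theta,\mathbf{z}_i,k,\tau)-\ell(\phi_{\theta^\prime},\mathbf{z}_i,k,\tau)|
\leq \frac{C_\Phi\|\theta-\theta^\prime\|_2}{\tau}\Bigl(4\|z_i^a\|_2+\|z_i^+\|_2+3e\, a_i^{1/\beta}\Bigr),
\]
where $a_i\coloneqq k^{-1}\sum_{j=1}^k\|z_{ij}^-\|_2^{\beta}$.

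\textbf{Step 2: empirical $L^2$ norm.} Next I would use Minkowski's inequality in $L^2$ with respect to the uniform measure on $\{1,\ldots,n\}$. It splits $\rho_n^{\mathrm{info}}$ into three terms:
\begin{itemize}
\item $4\bigl(n^{-1}\sum_i\|z_i^a\|_2^2\bigr)^{1/2}$,
\item $\bigl(n^{-1}\sum_i\|z_i^+\|_2^2\bigr)^{1/2}$,
\item $3e\bigl(n^{-1}\sum_i a_i^{2/\beta}\bigr)^{1/2}$,
\end{itemize}
with the common prefactor $C_\Phi\|\theta-\theta^\prime\|_2/\tau$.

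\textbf{Step 3: the negative-sample term (the only non-routine step).} Since $\beta\geq2$, the map $x\mapsto x^{2/\beta}$ is concave on $[0,\infty)$. Jensen's inequality then gives
\[
n^{-1}\sum_i a_i^{2/\beta}\leq \Bigl(n^{-1}\sum_i a_i\Bigr)^{2/\beta}.
\]
Hence the third term is at most
\[
3e\Bigl(\frac{1}{nk}\sum_{i,j}\|z_{ij}^-\|_2^{2/(1\wedge\tau)}\Bigr)^{(1\wedge\tau)/2},
\]
which is exactly the third summand of $B_\tau$.

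\textbf{Step 4: constants.} Finally, the three coefficients $4$, $1$ and $3e$ are each at most $4e$, so the bound collapses to $\frac{4e}{\tau}B_\tau(\mathbf{z}_1,\ldots,\mathbf{z}_n)C_\Phi\|\theta-\theta^\prime\|_2$, as claimed.

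I expect the main point of care to be the choice of $\beta$: it must satisfy $\beta\geq 1$ (here $\beta\geq 2$) and keep $e^{2/(\beta\tau)}\leq e$. This is why the paper's informal remark about $\beta=2/\tau$ becomes $2/(1\wedge\tau)$, matching the exponent in $B_\tau$. The same condition $\beta\geq 2$ is what makes the concavity argument in Step 3 valid.
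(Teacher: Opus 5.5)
Your proposal is correct and follows the paper's proof: Proposition~\ref{prop:ell_lipschitz}, then the Lipschitz condition of assumption (C), then Minkowski's inequality over $i$, then Jensen on the negative-sample term. The only difference is the choice of $\beta$: the paper takes $\beta = 1\vee 2/\tau$ and splits into the cases $\tau<1$ and $\tau\geq 1$, using the convex power-mean direction of Jensen in the second case, whereas your $\beta = 2/(1\wedge\tau)$ covers both cases with a single concavity step (an equality when $\tau\geq 1$), which is slightly cleaner.
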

Lemma \ref{lem:metric} is put to use along with Dudley's entropy integral in the proof of the following proposition, which bounds the empirical Rademacher complexity.

\begin{prop}\label{prop:prop_dudley_apply} Assume $\mathrm{(\nameref{ass:Theta})}$ holds and let $C_{\Phi}$,  $d_{\Theta}$  and $R$ be as therein. For any  $n\geq1$, $k\geq1$, $\mathbf{z}_1,\ldots,\mathbf{z}_n \in\Z^{2+k}$  where $\mathbf{z}_i = (z_i^a,z_i^+,z_{i1}^-,\ldots,z_{ik}^-)$ and $\tau>0$,   let $B_{\tau}(\mathbf{z}_1,\ldots,\mathbf{z}_n)$  be as in assumption $\mathrm{(\nameref{ass:Theta})}$. Then,
$$
\mathbb{E}\left[ \sup_{\phi\in\Phi}\frac{1}{n}\sum_{i=1}^n\sigma_i \ell(\phi,\mathbf{z}_i,k,\tau)  \right] \leq \frac{96}{\tau}\sqrt{\frac{d_{\Theta}}{n}}\min\left\{a\sqrt{\log(3)+1},\sqrt{\log(1+2a)+1}\right\},
$$
where $a\coloneqq eR B_{\tau}(\mathbf{z}_1,\ldots,\mathbf{z}_n)C_{\Phi} $ and $\sigma_1,\ldots,\sigma_n$ are i.i.d. Rademacher variables.
\end{prop}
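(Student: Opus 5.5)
We will bound the empirical Rademacher complexity by Dudley's entropy integral for the process $X_\phi = n^{-1/2}\sum_i \sigma_i \ell(\phi,\mathbf{z}_i,k,\tau)$. This process is sub-Gaussian with respect to the pseudo-metric $\rho_n^{\mathrm{info}}$. The standard form, e.g. \citep[Ch. 5]{wainwright2019high}, gives
\[
\mathbb{E}\Bigl[\sup_{\phi\in\Phi}\tfrac{1}{n}\textstyle\sum_i\sigma_i\ell(\phi,\mathbf{z}_i,k,\tau)\Bigr]\leq \frac{c}{\sqrt{n}}\int_0^{D}\sqrt{\log N(\epsilon;\Phi,\rho_n^{\mathrm{info}})}\,\d\epsilon .
\]
Here $D$ is the $\rho_n^{\mathrm{info}}$-diameter of $\Phi$. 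For the two-sided version one first fixes some $\phi_0\in\Phi$ and uses $\mathbb{E}\sum_i\sigma_i\ell(\phi_0,\mathbf{z}_i)=0$. The constant $96$ should come out of such a statement, with $c$ of order $12$ or $24$, together with the factors collected below.

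\textbf{Step 1: diameter.} We bound $D$ in two ways. First, since $\|\phi(z)\|_2=1$, each exponent $\langle\phi(z^a),\phi(z_j^-)-\phi(z^+)\rangle/\tau$ lies in $[-2/\tau,2/\tau]$. Hence $\ell\in[\log(1/k+e^{-2/\tau}),\,\log(1/k+e^{2/\tau})]$, and more usefully $|\ell(\phi,\mathbf{z})-\ell(\phi',\mathbf{z})|\leq 4/\tau$ uniformly; this follows from the monotonicity argument in the proof of Proposition \ref{prop:uniform}, whose constant $12/\tau$ suggests exactly this sort of bound. So $D\leq 4/\tau$. Second, Lemma \ref{lem:metric} gives $D\leq \frac{4e}{\tau}B_\tau C_\Phi\cdot 2R=\frac{8a}{\tau}$.

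\textbf{Step 2: covering numbers.} By Lemma \ref{lem:metric}, the map $\theta\mapsto\phi_\theta$ is $L$-Lipschitz from $(\Theta,\|\cdot\|_2)$ into $(\Phi,\rho_n^{\mathrm{info}})$, with $L=4eB_\tau C_\Phi/\tau$. Therefore
\[
N(\epsilon;\Phi,\rho_n^{\mathrm{info}})\leq N(\epsilon/L;\Theta,\|\cdot\|_2)\leq (1+2RL/\epsilon)^{d_\Theta},
\]
using the volumetric bound for a Euclidean ball of radius $R$.

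\textbf{Step 3: evaluate the integral.} Substitute $\epsilon=(4/\tau)u$, so $RL/\epsilon = a/u$. The integrand becomes $\sqrt{d_\Theta}\cdot\frac{4}{\tau}\sqrt{\log(1+2a/u)}$, integrated over $u\in[0,\min\{1,2a\}]$. Both branches of the minimum are then obtained from elementary estimates.

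\emph{Branch $a\geq$ anything (upper limit $1$).} We use $\int_0^1\sqrt{\log(1+2a/u)}\,\d u\leq\sqrt{\int_0^1\log(1+2a/u)\,\d u}$ by Jensen. Together with $\log(1+2a/u)\leq\log(1+2a)+\log(1/u)$ for $u\le1$ and $\int_0^1\log(1/u)\,\d u=1$, this gives $\sqrt{\log(1+2a)+1}$.

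\emph{Small $a$ (upper limit $2a$).} Substitute $u=2av$ to get $2a\int_0^1\sqrt{\log(1+1/v)}\,\d v$. The same Jensen step, using $\log(1+1/v)\leq\log 2+\log(1/v)$, gives $2a\sqrt{\log 2+1}$; a little more slack in the constants produces the stated $a\sqrt{\log 3+1}$ form.

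Collecting the factors gives $\frac{C}{\tau}\sqrt{d_\Theta/n}\min\{\cdot,\cdot\}$, and the remaining work is to check that $C\le 96$.

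The main obstacle is bookkeeping rather than ideas. Specifically, one must
\begin{itemize}
\item pin down which version of Dudley's bound (constant, diameter convention, use of $\phi_0$) yields exactly $96$, and
\item get the uniform diameter bound $4/\tau$ right, since it sets both the upper integration limit and the $\log 3$ versus $\log 2$ constant.
\end{itemize}
If the diameter bound turns out to be $2/\tau$ or the Dudley constant differs, I would adjust the rescaling in Step 3 accordingly, since the numerical constants are the only thing at stake.
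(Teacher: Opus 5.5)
Your route is the paper's: Dudley's integral, transferring covering numbers from $\Theta$ to $\Phi$ via Lemma~\ref{lem:metric}, the volumetric bound, Cauchy--Schwarz/Jensen, and two regimes. Your large-$a$ branch is also exactly the paper's. With the Dudley constant $24$ the paper uses, it gives $\frac{24}{\sqrt n}\cdot\frac{4}{\tau}\sqrt{d_\Theta}\sqrt{\log(1+2a)+1}$, which is the stated $96/\tau$ term.

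The gap is in the small-$a$ branch. You truncate the integral at the diameter bound $2LR=8a/\tau$, i.e.\ $u\le 2a$, and integrate the volumetric estimate $d_\Theta\log(1+2RL/\epsilon)$ all the way up there. This gives $2a\sqrt{\log 2+1}\approx 2.60a$, but the statement needs $a\sqrt{\log 3+1}\approx 1.45a$. Your remark that ``a little more slack'' yields the stated form goes the wrong way, since slack can only enlarge a bound. No sharper evaluation of your integral helps either, because $2a\int_0^1\sqrt{\log(1+1/v)}\,\d v\ge 2a\sqrt{\log 2}\approx 1.67a$. So with $c=24$ your argument proves a strictly weaker inequality than the one claimed. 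It would close only if your Dudley constant were roughly half the paper's (e.g.\ $c=12$), and you have neither fixed nor justified that.

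The missing observation is that the entropy vanishes beyond the radius, not the diameter. For every $\theta\in\Theta$ you have $\rho_n^{\mathrm{info}}(\phi_\theta,\phi_{\theta_0})\le L\|\theta-\theta_0\|_2\le LR$. Hence the single function $\phi_{\theta_0}\in\Phi$ is an $\epsilon$-cover for all $\epsilon\ge LR$, so $\log\mathcal N(\epsilon,\Phi,\rho_n^{\mathrm{info}})=0$ there; likewise for $\epsilon\ge 4/\tau$. The integral can therefore be truncated at $\frac4\tau\wedge LR$, which is the paper's upper limit, or $u\le 1\wedge a$ in your variables. On $u\le a$ you have $1+2a/u\le 3a/u$, so Cauchy--Schwarz gives $\int_0^a\sqrt{\log(1+2a/u)}\,\d u\le\bigl(a\int_0^a[\log 3+\log(a/u)]\,\d u\bigr)^{1/2}=a\sqrt{\log 3+1}$. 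Multiplying by $\frac{24}{\sqrt n}\cdot\frac4\tau\sqrt{d_\Theta}$ gives exactly the stated branch. Both branch bounds then hold for every $a$, so the minimum follows without a case split. With this change, and committing to the constant $24$, your proof coincides with the paper's.
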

We make the following observations on the bound in proposition \ref{prop:prop_dudley_apply}
\begin{itemize}
\item On the right of the inequality, the only place that $k$ appears is in the averaging term within $B_{\tau}(\mathbf{z}_1,\ldots,\mathbf{z}_n)$. If the random sample $\mathbf{Z}_1,\ldots,\mathbf{Z}_n$ was substituted in place of $\mathbf{z}_1,\ldots,\mathbf{z}_n$, then under the i.i.d. property of $Z_{ij}^-$ in assumption $\mathrm{(\nameref{ass:dist2})}$ the law of large numbers would be applicable to this averaging term. In that sense we see the Rademacher complexity stabilises as $k\to\infty$. It can be checked that using \eqref{eq:logist_estimate2} instead of proposition \ref{prop:ell_lipschitz} results in a maximisation, rather than average, over the negative sample norms.
\item Other than the leading factor of $\tau^{-1}$, the influence  of the temperature parameter has been transferred all the way through to the inverse exponent of a power-mean of the norms of negative samples $\|z_{ij}^-\|_2$, within $B_{\tau}(\mathbf{z}_1,\ldots,\mathbf{z}_n)$. Thus as the temperature tends to zero $\tau\to0$, the bound becomes increasingly sensitive to large values of the negative sample norms $\|z_{ij}^-\|_2$, suggesting that, roughly speaking, generalisation error is bigger at lower temperatures.
\item The embedding dimension, $d$ in $\Sd$, makes no appearance in the bound of proposition \ref{prop:prop_dudley_apply}. This lack of dimension dependence can be understood as an advantage of embedding onto the sphere $\Sd$, rather than, e.g., embedding in $\mathbb{R}^d$  in an unconstrained manner.
\item To interpret the minimum term, note that $a\leq 1 \Leftrightarrow a\sqrt{\log(3)+1}\leq \sqrt{\log(1+2a)+1}$. Therefore as $a\to 0$, the bound is $O(a)$, whilst as $a\to\infty$ it is $O(\sqrt{\log a})$. In turn, if either the parameter radius $R$, data-dependent term $B_{\tau}(\mathbf{z}_1,\ldots,\mathbf{z}_n)$ or Lipschitz constant $C_{\Phi}$ were to tend to zero (with all other quantities held constant), then so does $a$, and hence the Rademacher complexity shrinks to zero.
\end{itemize}

\section{Discussion}\label{sec:discussion}

\paragraph{A combined similarity search generalisation bound.} To summarise some of the results of this work we can combine, for example, propositions \ref{prop:Q_hat}, \ref{prop:uniform} and \ref{prop:prop_dudley_apply} in the following theorem.
\begin{thm}\label{thm:combined} Assume $\mathrm{(\nameref{ass:dist2})}$,  $\mathrm{(\nameref{ass:dom})}$ and $\mathrm{(\nameref{ass:Theta})}$ with $\Phi=\{\phi_\theta;\theta\in\Theta\}$ as therein and $\xi_1,\xi_2,\ldots \simiid \data$. For any $n\geq 1$, $k\geq 1$, $\tau>0$ and $\delta\in(0,1)$ it holds with probability at least $1-\delta$ that for all $\phi\in\Phi$,
\begin{align*}
\underbrace{\int_{\Z} \mathbb{E}\left(\mathrm{CrossEnt}\left[\widehat{M}_k(z,\cdot)\| \widehat{Q}_{\tau,k}^\phi(z,\cdot)\right]\right) \data(\d z)}_{\mathrm{similarity\;search\;expected\;cross-entropy}} 
 &
 \leq \underbrace{\widehat{R}_n(\phi;k,\tau)}_{\mathrm{empirical\;risk}} \\
&
 +\underbrace{ \frac{\mathrm{const.}}{\tau}\sqrt{\frac{d_{\Theta}}{n}}\min\left\{A\sqrt{\log(3)+1},\sqrt{\log(1+2A)+1}\right\}}_{\mathrm{complexity\;penalty}}\\
& + \underbrace{\frac{\mathrm{const.}}{\tau}\sqrt{\frac{\log \frac{2}{\delta}}{2n}}}_{\mathrm{sample \;variability}} \\
&+ \underbrace{\frac{4}{\tau k}\mathbb{E}\left[\left|\frac{\d M(Z,\cdot)}{\d \data}(Z^\prime)\right|^2\right] + \log\left(1+\frac{e^{2/\tau}}{k}\right)}_{\mathrm{finite\;} k \mathrm{\;bias}},
\end{align*}
where $A\coloneqq eR B_{\tau}(\mathbf{Z}_1,\ldots,\mathbf{Z}_n)C_{\Phi}$; $d_{\Theta}$, $R$ and $C_\Phi$ are as in $\mathrm{(\nameref{ass:Theta})}$; and  $B_{\tau}(\cdots)$ is as in lemma \ref{lem:metric}.   
\end{thm}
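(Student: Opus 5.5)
The plan is to chain three earlier results, each of which holds uniformly over $\phi$ or for every realisation of the data, so they combine with no further probabilistic work beyond the single high-probability event supplied by proposition \ref{prop:uniform}.

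\emph{Step 1: replace the similarity-search quantity by the population risk.} Under $\mathrm{(\nameref{ass:dom})}$ and $\xi_1,\xi_2,\ldots\simiid\data$, proposition \ref{prop:Q_hat} holds for every $\phi\in\BS$, and in particular for every $\phi\in\Phi\subset\BS$. Dropping the absolute value gives, deterministically and for all $\phi\in\Phi$,
\begin{equation*}
\int_{\Z} \mathbb{E}\left(\mathrm{CrossEnt}\left[\widehat{M}_k(z,\cdot)\| \widehat{Q}_{\tau,k}^\phi(z,\cdot)\right]\right) \data(\d z)\leq R(\phi;k,\tau)+\frac{4}{\tau k}\mathbb{E}\left[\left|\frac{\d M(Z,\cdot)}{\d \data}(Z^\prime)\right|^2\right] + \log\left(1+\frac{e^{2/\tau}}{k}\right).
\end{equation*}
This is exactly the ``finite $k$ bias'' term. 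It does not involve the training sample, so it adds to any event for free.

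\emph{Step 2: replace the population risk by the empirical risk.} Under $\mathrm{(\nameref{ass:dist2})}$, proposition \ref{prop:uniform} gives an event of probability at least $1-\delta$ on which, simultaneously for all $\phi\in\Phi$, $R(\phi;k,\tau)$ is bounded by $\widehat{R}_n(\phi;k,\tau)$ plus twice the empirical Rademacher complexity plus $\frac{12}{\tau}\sqrt{\log(2/\delta)/(2n)}$. The last term is the ``sample variability'' term with const.$=12$.

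\emph{Step 3: bound the Rademacher complexity.} The conditional expectation in proposition \ref{prop:uniform} is over the $\sigma_i$ only, with $\mathbf{Z}_1,\ldots,\mathbf{Z}_n$ held fixed. Proposition \ref{prop:prop_dudley_apply} holds for every deterministic $\mathbf{z}_1,\ldots,\mathbf{z}_n$ under $\mathrm{(\nameref{ass:Theta})}$, so I would substitute $\mathbf{z}_i=\mathbf{Z}_i(\omega)$ pointwise. That bounds the conditional expectation by $\frac{96}{\tau}\sqrt{d_\Theta/n}\min\{A\sqrt{\log 3+1},\sqrt{\log(1+2A)+1}\}$ with $A=eRB_\tau(\mathbf{Z}_1,\ldots,\mathbf{Z}_n)C_\Phi$. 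The factor $2$ from proposition \ref{prop:uniform} then gives the ``complexity penalty'' with const.$=192$. Chaining Steps 1--3 on the event from Step 2 yields the theorem.

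The only point requiring care, and the step I expect to be the main obstacle, is justifying the pointwise substitution in Step 3. Proposition \ref{prop:prop_dudley_apply} is a statement for fixed tuples, and the conditional expectation given $\mathbf{Z}_1,\ldots,\mathbf{Z}_n$ is (a version of) that fixed-tuple expectation evaluated at the random tuples, by independence of the $\sigma_i$ from the data. Measurability of the supremum is already implicit in proposition \ref{prop:uniform}. It can be handled via separability: $\Theta$ is a compact subset of $\R^{d_\Theta}$ and $\theta\mapsto\ell(\phi_\theta,\mathbf{z},k,\tau)$ is continuous by lemma \ref{lem:metric}, so the supremum can be taken over a countable dense subset of $\Theta$.

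A minor point is that $\widehat{R}_n$ in the statement uses the same normalisation as $\ell$, which holds by the identity $\widehat{R}_n=\frac1n\sum_i\ell(\phi,\mathbf{Z}_i,k,\tau)$ noted before proposition \ref{prop:uniform}.
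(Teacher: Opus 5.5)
Your proposal is correct and follows the paper's route exactly. The paper gives no separate proof; it says the theorem comes from combining proposition \ref{prop:Q_hat} (used one-sided, deterministically), proposition \ref{prop:uniform} (the single $1-\delta$ event), and proposition \ref{prop:prop_dudley_apply} applied at the realised tuples, which yields the unspecified constants $192$ and $12$ as you computed. Your extra remark on measurability of the supremum, via Lipschitz continuity in $\theta$ and a countable dense subset of $\Theta$, is a sound detail that the paper leaves implicit.
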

This theorem illustrates the three contributions to the difference between similarity search expected cross entropy and the empirical risk $\widehat{R}_n(\phi,k,\tau)$. The sample variability and complexity penalty terms arise from proposition \ref{prop:uniform} combined with the Rademacher complexity bound in proposition \ref{prop:prop_dudley_apply}. The finite $k$ bias term comes from proposition \ref{prop:Q_hat}. The sample variability term does not depend on $k$ and is $O(n^{-1/2})$, the complexity penalty term stabilises as $k\to\infty$ because of the averaging structure within $B_{\tau}(\mathbf{Z}_1,\ldots,\mathbf{Z}_n)$ and hence is $O(n^{-1/2})$, and the bias term tends to zero as $k\to\infty$ and is $O(1/k)$. As discussed in section \ref{sec:simplified_bounds}, if using the DCL loss instead of InfoNCE, the $\log(1+e^{2/\tau}/k)$ term disappears from the finite $k$ bias. We leave it as an exercise to check that counterparts of proposition \ref{prop:uniform} and proposition \ref{prop:prop_dudley_apply} hold for the DCL risk, yielding complexity penalty and sample variability terms equal to those in theorem \ref{thm:combined} up to numerical constants.

\paragraph{Regularisation from early stopping of optimisation algorithms.} Our analysis has been algorithm-free, in the sense that we have  not considered any particular method for approximately minimising the InfoNCE risk. Never-the-less, if $\theta_0$ in $\mathrm{(\nameref{ass:Theta})}$ is regarded as an initial point for a recursive optimisation algorithm  (which optimises $\theta$ in order to approximately minimise the empirical risk associated with $\phi_\theta$) then $R$ can be interpreted as a bound on the distance from $\theta_0$ such an algorithm can move in some finite number of steps. We can see in proposition \ref{prop:prop_dudley_apply} that $R\to 0$ forces the Rademacher complexity to zero. In this sense, our results  accommodate the idea that early stopping of an optimisation algorithm has a regularising effect.

\paragraph{Neural networks.} In our analysis we have prioritised obtaining a clear and easily interpretable  demonstration of the impact of $k$, $\tau$, etc. on generalisation. Assumption $\mathrm{(\nameref{ass:Theta})}$ serves this purpose. In fact, assumption $\mathrm{(\nameref{ass:Theta})}$ is satisfied if $\phi_{\theta}$ is a fully connected multilayer perceptron with: Euclidean input domain; a $1$-Lipschitz activation function such as ReLU; zero biases; weights constrained such that the product of spectral norms of the weight matrices is less than a constant; and $\theta$ comprises all entries of all the weight matrices of the network. In this situation $d_\Theta \sim \text{width}^2\times\text{depth}$. The resulting factor of $d_{\Theta}$ in the bounds of proposition \ref{prop:prop_dudley_apply} could be ameliorated to some extent by scaling down the weight matrices (and hence $C_{\Phi}$) with network size, as is common practice, e.g., \citep{yang2021tuning}. However, for wide and/or deep networks, the explicit dependence on $d_{\Theta}$ could make the bound in proposition \ref{prop:prop_dudley_apply} vacuous unless $n$ is very large. Various techniques have been devised to obtain complexity bounds for neural networks which do not have explicit dependence on width and/or depth, e.g., \citep{neyshabur2015norm,bartlett2017spectrally,golowich2018size,lei2023generalization}. A potential topic for future research is to explore whether such techniques can be made to work together with the continuity bound in proposition \ref{prop:ell_lipschitz}. Another exciting direction is the notion of a \emph{path metric} of a neural network, e.g., \citep{gonon2025rescaling}, which can help to bound complexity for a broad class of modern neural network architectures beyond the basic format of a multilayer perceptron and achieve scaling invariance.

\paragraph{Downstream classification.} Whilst we have emphasised similarity search generalisation, the results of section \ref{sec:generalisation} do not rely on any distributional assumptions other than the tuples $\mathbf{Z}_1,\ldots,\mathbf{Z}_n$ being i.i.d. As such, they are transferable to the classification setting of \cite{saunshi2019theoretical,lei2023generalization}.

\newpage
\appendix

\section{Proofs and supporting results for section \ref{sec:contrastive_learning}}

\begin{lem}\label{lem:bias_taylor}
For any $k\geq1$, $b,c>0$ and i.i.d. random variables $X$, $X_1,\ldots,X_k$, each valued in the interval $[-c,c]$,
$$
-\log\left(1+\frac{e^c}{bk}\right) \leq  \log\left( b\mathbb{E}\left[ e^X\right]\right) - \mathbb{E}\left[\log \left(\frac{1}{k}+\frac{b}{k} \sum_{j=1}^k e^{X_j}\right)\right] \leq \frac{1}{8k} (e^{2c}-1)^2. 
$$
\end{lem}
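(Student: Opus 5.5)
Write $\mu \coloneqq \mathbb{E}[e^X]$ and $S_k\coloneqq \frac1k\sum_{j=1}^k e^{X_j}$, so that $\mathbb{E}[S_k]=\mu$ and $e^{-c}\le S_k,\mu\le e^{c}$. The quantity to bound is $\log(b\mu)-\mathbb{E}[\log(1/k+bS_k)]$. The plan is to handle the two inequalities separately. The lower bound comes from the extra $1/k$ inside the logarithm, and the upper bound comes from the concavity gap of $\log$ (Jensen), which is controlled by a variance term of order $1/k$.

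\emph{Lower bound.} Since $\log$ is concave, Jensen gives
\[
\mathbb{E}[\log(1/k+bS_k)]\le \log(1/k+b\mu).
\]
Hence
\[
\log(b\mu)-\mathbb{E}[\log(1/k+bS_k)]\ \ge\ -\log\Bigl(1+\tfrac{1}{kb\mu}\Bigr).
\]
Using $\mu\ge e^{-c}$, we have $1/(kb\mu)\le e^{c}/(bk)$, which gives the claimed lower bound $-\log(1+e^c/(bk))$.

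\emph{Upper bound.} First drop the $1/k$: since $1/k\ge 0$ and $\log$ is increasing,
\[
\mathbb{E}[\log(1/k+bS_k)]\ge \log b+\mathbb{E}[\log S_k].
\]
So it suffices to show $\log\mu-\mathbb{E}[\log S_k]\le \frac{1}{8k}(e^{2c}-1)^2$. Taylor expand $\log$ around $\mu$ with Lagrange remainder:
\[
\log S_k=\log\mu+\frac{S_k-\mu}{\mu}-\frac{(S_k-\mu)^2}{2\eta^2}
\]
for some $\eta$ between $S_k$ and $\mu$, so $\eta\ge e^{-c}$. Taking expectations kills the linear term, and we get
\[
\log\mu-\mathbb{E}[\log S_k]\le \frac{e^{2c}}{2}\,\mathrm{Var}(S_k)=\frac{e^{2c}}{2k}\,\mathrm{Var}(e^X).
\]
Popoviciu's inequality for a variable in $[e^{-c},e^c]$ gives $\mathrm{Var}(e^X)\le (e^c-e^{-c})^2/4$. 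Then
\[
\frac{e^{2c}}{2k}\cdot\frac{(e^c-e^{-c})^2}{4}=\frac{(e^{2c}-1)^2}{8k},
\]
which is exactly the stated constant.

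The main point of care is matching the constant in the upper bound. The remainder must be bounded using $\eta\ge e^{-c}$ (not via $\mu$ alone), and Popoviciu must be combined with the identity $e^{2c}(e^c-e^{-c})^2=(e^{2c}-1)^2$. Measurability and integrability are immediate since all quantities are bounded.
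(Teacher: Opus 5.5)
Your proof is correct and follows essentially the same route as the paper. The lower bound combines Jensen's inequality for $y\mapsto\log(1/k+y)$ with $\mu\ge e^{-c}$ to control the $1/k$ gap, and the upper bound uses a second-order Taylor expansion with the remainder point bounded below by $e^{-c}$, together with Popoviciu's inequality. The only difference is cosmetic: you drop the $1/k$ by monotonicity before expanding plain $\log$, whereas the paper expands $f(y)=\log(1/k+y)$ directly and uses $(1/k+\xi)^{-2}\le \xi^{-2}$; both yield the same constant $(e^{2c}-1)^2/(8k)$.
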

\begin{proof}
Denote $\widehat{\mu}\coloneqq k^{-1} \sum_{j=1}^k e^{X_j}$ and $\mu \coloneqq \mathbb{E}[e^X]$. Note $\widehat{\mu},\mu  \in[e^{-c},e^c]$.  With $f(y)\coloneqq\log(1/k +y)$ we have:
\begin{equation}\log \left( b\mathbb{E}\left[ e^X\right]\right) - \mathbb{E}\left[\log \left(\frac{1}{k}+\frac{b}{k} \sum_{j=1}^k e^{X_j}\right)\right] = \log (b\mu) - f(b\mu) +f(b\mu) - \mathbb{E}[f(b\widehat{\mu})]. \label{eq:bias_add_takeaway} \end{equation}
For the first difference on the r.h.s. of \eqref{eq:bias_add_takeaway},
\begin{equation}\label{eq:log_bmu_bound}
0\geq \log (b\mu) - f(b\mu) = \log (b\mu) - \log\left[b\mu\left(1+\frac{1}{kb\mu}\right)\right] \geq - \log\left(1+\frac{e^c}{bk}\right).
\end{equation}

For the second difference on the r.h.s. of \eqref{eq:bias_add_takeaway}, 
by Taylor expansion of $y\mapsto f(y)$ about $b\mu$, there exists $\xi \in [be^{-c}, be^c]$ such that:
\begin{align*}
\log\left(\frac{1}{k}+b\widehat{\mu}\right) & =\log\left(\frac{1}{k}+b\mu\right)+\frac{b(\widehat{\mu}-\mu)}{\frac{1}{k}+b\mu}-\frac{b^2}{2(\frac{1}{k}+\xi)^{2}}\left(\widehat{\mu}-\mu\right)^{2}\\
 & \geq \log\left(\frac{1}{k}+b\mu\right)+\frac{b( 
 \widehat{\mu}-\mu)}{\frac{1}{k}+b\mu}-\frac{1}{2 e^{-2c}}\left(
 \widehat{\mu}-\mu\right)^{2}.
\end{align*}
Therefore, using $\mathbb{E}[\widehat{\mu}]=\mu$,
\[
\mathbb{E}\left[f(b\widehat{\mu})\right]\geq f(b\mu)-\frac{1}{2 e^{-2c}}\mathbb{E}\left[\left(\widehat{\mu}-\mu\right)^{2}\right].
\]
Using Popoviciu's inequality on variances,
\[
\mathbb{E}\left[\left(\widehat{\mu}-\mu\right)^{2}\right]=\frac{1}{k}\mathrm{Var}\left[e^{X}\right]\leq\frac{1}{4k}\left(e^{c}-e^{-c}\right)^{2}.
\]
Concavity of $f$ and Jensen's inequality implies $\mathbb{E}[f(b\widehat{\mu})] \leq f(b\mu)$, so we have: 
\begin{align}
0\leq f(b\mu) - \mathbb{E}\left[f(b\widehat{\mu})\right] & \leq \frac{1}{8 k}\left(e^{2c}-1\right)^{2}.\label{eq:bmu_bound}
\end{align}
The proof is completed by combining \eqref{eq:log_bmu_bound} and \eqref{eq:bmu_bound} with \eqref{eq:bias_add_takeaway}.
\end{proof}

\begin{proof}[Proof of proposition \ref{prop:bias_simple}]

Fix any $z^a,z^+ \in\Z$. Let $Z_1^-,\ldots, Z_k^-\simiid \data$. Applying lemma \ref{lem:bias_taylor} with $b=e^{-\langle\phi(z^a),\phi(z^+)\rangle/\tau}$, $c= 1/\tau$, $X_j=e^{\langle \phi(z^a),\phi(Z_j^-)\rangle/\tau }$, and using $b\geq e^{-1/\tau}$,
\begin{align*}
& -\log\left(1+\frac{e^{2/\tau}}{k}\right) \\
&\leq  \log\left(\int_{\Z}e^{\langle\phi(z^a),\phi(z^-)-\phi(z^+)\rangle/\tau}\data(\d z^-)\right) - \mathbb{E}\left[\log \left(\frac{1}{k}+\frac{1}{k} \sum_{j=1}^k e^{\langle\phi(z^a),\phi(Z_j^-)-\phi(z^+)\rangle/\tau}\right)\right]\\
&\leq \frac{1}{8k} (e^{2/\tau}-1)^2.
\end{align*}
Now let  $Z^a\sim \data$ and $Z^+\sim M(Z^a,\cdot)$,  independent of $Z_1^-,\ldots,Z_k^-$. 
The proof is completed by  substituting $Z^a$ and $Z^+$ in place of $z^a$ and $z^+$ respectively, and taking expectation.
\end{proof}

\begin{lem}\label{lem:self_norm_is} For any two probability measures $\nu,\pi \in\P(\Z)$ such that $\nu \ll \pi$, any measurable function $f:\Z\to[-1,1]$, any $m\geq 1$ and $\xi_1,\ldots,\xi_m\simiid \pi$,
$$
\left| \mathbb{E}\left[\frac{\sum_{j=1}^m f(\xi_j)\frac{\d\nu}{\d\pi}(\xi_j)}{\sum_{j=1}^m \frac{\d\nu}{\d\pi}(\xi_j)}\right] - \mathbb{E}_{\nu}[f(\xi)]\right|\leq \frac{4}{m}
\mathbb{E}_{\pi}\left[\left|\frac{\d\nu}{\d\pi}(\xi)\right|^2\right],
$$
where $\mathbb{E}_{\nu}$ and $\mathbb{E}_{\pi}$ denote expectation with respectively $\xi\sim \nu$ and $\xi\sim \pi$.
\end{lem}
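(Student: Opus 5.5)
Write $w\coloneqq\frac{\d\nu}{\d\pi}$, $w_j\coloneqq w(\xi_j)$, $S\coloneqq\frac1m\sum_{j=1}^m w_j$ and $N\coloneqq\frac1m\sum_{j=1}^m f(\xi_j)w_j$. Then $\mathbb{E}[S]=1$ and $\mathbb{E}[N]=\mathbb{E}_\nu[f(\xi)]\eqqcolon I$. The self-normalised estimator is $N/S$ on the event $\{S>0\}$. On $\{S=0\}$ it is $0/0$, and we adopt any convention valued in $[-1,1]$. The idea is the standard bias analysis of self-normalised importance sampling: linearise the ratio around $S=1$. Write
$$
\frac{N}{S}-I=(N-IS)+(N-IS)\Bigl(\frac1S-1\Bigr)=(N-IS)+\Bigl(\frac{N}{S}-I\Bigr)(1-S).
$$
The first term has zero mean, because $\mathbb{E}[N-IS]=I-I=0$. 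So the bias equals $\mathbb{E}[(N/S-I)(1-S)]$.

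To control this remaining term, substitute the same identity once more:
$$
\Bigl(\frac NS-I\Bigr)(1-S)=(N-IS)(1-S)+\Bigl(\frac NS-I\Bigr)(1-S)^2.
$$
Since $|f|\le1$, $N/S$ is a convex combination of values of $f$, hence lies in $[-1,1]$, and $|I|\le 1$. Therefore $|N/S-I|\le 2$, and the second piece has expectation at most $2\,\mathbb{E}[(1-S)^2]$ in absolute value.

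For the first piece, note that $N-IS=\frac1m\sum_j (f(\xi_j)-I)w_j$ and $1-S=-\frac1m\sum_j(w_j-1)$ are sums of i.i.d. centred terms. Hence their cross moment reduces to the diagonal:
$$
\mathbb{E}[(N-IS)(1-S)]=-\frac1m\mathbb{E}_\pi[(f-I)w(w-1)].
$$
Its absolute value is at most $\frac1m\,\mathbb{E}_\pi[|f-I|\,w\,|w-1|]\le \frac{2}{m}\,\mathbb{E}_\pi[w|w-1|]$. Likewise $\mathbb{E}[(1-S)^2]=\frac1m\mathrm{Var}_\pi(w)\le\frac1m\mathbb{E}_\pi[w^2]$.

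Finally, bound $\mathbb{E}_\pi[w|w-1|]\le \mathbb{E}_\pi[w^2]+\mathbb{E}_\pi[w]$. Since $\mathbb{E}_\pi[w]=1\le \mathbb{E}_\pi[w^2]$ by Jensen, this gives $\mathbb{E}_\pi[w|w-1|]\le 2\,\mathbb{E}_\pi[w^2]$. A cruder combination such as this yields a constant like $6$ rather than $4$, so the bounds need tightening. Two sharper estimates suffice. First, $w|w-1|\le w^2$ whenever $w\ge \tfrac12$, and $w|w-1|\le \tfrac14$ otherwise, which is at most $\mathbb{E}[w^2]/4$-compatible in aggregate; alternatively use Cauchy--Schwarz, $\mathbb{E}[w|w-1|]\le (\mathbb{E}w^2)^{1/2}(\mathrm{Var}\,w)^{1/2}\le\mathbb{E}w^2$. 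Second, combine the two pieces: $\frac{2}{m}\mathbb{E}w^2+\frac{2}{m}\mathbb{E}w^2=\frac4m\mathbb{E}_\pi[w^2]$, which is the claim.

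The event $\{S=0\}$ needs care, since $1/S$ is undefined there. The plan is to avoid dividing by $S$ altogether: the identities above are used only in multiplied-through form, so $N/S\cdot S=N$ holds on $\{S>0\}$. On $\{S=0\}$ all $w_j=0$, so $N=0$ and $1-S=1$, and the decomposition $\frac NS-I=(N-IS)+(\frac NS-I)(1-S)$ still holds with any value assigned to $N/S$. This remains true provided the convention keeps $|N/S-I|\le2$. Checking the constant $4$ through the Cauchy--Schwarz step is the main point to watch; everything else is routine.
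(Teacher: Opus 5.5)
Your argument is correct and is essentially the paper's proof. With $A=N-IS$ and $B=S$, your two-step identity is exactly the paper's expansion $1/B=2-B+(B-1)^2/B$, and both pieces are bounded by $\frac{2}{m}\mathbb{E}_\pi[w^2]$ as there. The only difference is in the cross term: the paper uses $\mathbb{E}_\pi[(f-I)w]=0$ to rewrite it as $-\frac1m\mathbb{E}_\pi[(f-I)w^2]$, which avoids your Cauchy--Schwarz step. That step is valid, but your ``$w\ge\frac12$'' pointwise estimate does not by itself yield the constant $4$ and should be dropped. Your multiplied-through handling of $\{S=0\}$ is a small improvement, since the paper's proof divides by $B$ without comment.
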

\begin{proof} 
Define  $A\coloneqq m^{-1} \sum_{j=1}^m \frac{\d\nu}{\d\pi}(\xi_j)\left(f(\xi_j) -\mathbb{E}_{\nu}[f(\xi)]\right) $ and $B\coloneqq m^{-1} \sum_{j=1}^m \frac{\d\nu}{\d\pi}(\xi_j)$.  Then 
$$
\frac{\sum_{j=1}^m f(\xi_j)\frac{\d\nu}{\d\pi}(\xi_j)}{\sum_{j=1}^m \frac{\d\nu}{\d\pi}(\xi_j)} - \mathbb{E}_{\nu}[f(\xi)] = \frac{A}{B},
$$
and the quantity we seek to bound is $|\mathbb{E}[A/B]|$.

We have:
$$
\frac{1}{B} = 2- B +\frac{(B-1)^2}{B}, 
$$
and using the fact that $\mathbb{E}[A]=0$,
\begin{equation}\label{eq:E[A/B]_bound}
\mathbb{E}\left[\frac{A}{B}\right] = -\mathbb{E}[AB] +\mathbb{E}\left[\frac{A}{B}(B-1)^2\right].
\end{equation}
Using the facts that $\xi_1,\ldots\xi_m\simiid\pi$ and $\mathbb{E}\left[\frac{\d \nu }{\d \pi}(\xi_j) f(\xi_j)\right] = \mathbb{E}_{\nu}[f(\xi)]$, the expectation of every cross term in the product-of-sums $AB$ is zero, so we have:
$$
\mathbb{E}[AB] = \mathbb{E}\left[\frac{1}{m^2}\sum_{j=1}^m \left|\frac{\d \nu }{\d \pi}(\xi_j) \right|^2 \left(f(\xi_j) - \mathbb{E}_{\nu}[f(\xi)] \right) \right],
$$
and since $|f(z)|\leq 1$ for all $z\in\Z$, we obtain:
\begin{equation}\label{eq:E[AB]_bound}
|\mathbb{E}[AB] | \leq \frac{2}{m} \mathbb{E}_{\pi}\left[\left|\frac{\d \nu}{\d \pi}(\xi)\right|^2\right].
\end{equation}

Again using $|f(z)|\leq 1$ we have $|A|\leq 2B$,
hence: 
\begin{equation}\label{eq:E[B^2]_bound}
\left|\mathbb{E}\left[\frac{A}{B}(B-1)^2\right]\right|\leq 2 \mathbb{E}[(B-1)^2] = \frac{2}{m}\mathbb{E}_{\pi}\left[\left|\frac{\d \nu}{\d \pi}(\xi)-1\right|^2\right]\leq \frac{2}{m} \mathbb{E}_{\pi}\left[\left|\frac{\d \nu}{\d \pi}(\xi)\right|^2\right].
\end{equation}

The proof is completed by applying the triangle inequality to \eqref{eq:E[A/B]_bound}, then applying the bounds \eqref{eq:E[AB]_bound} and \eqref{eq:E[B^2]_bound}.
\end{proof}

\begin{proof}[Proof of proposition \ref{prop:Q_hat}]
Define:
\begin{equation}\label{eq:R_tilde_defn}\widetilde{R}(\phi;k,\tau) \coloneqq \mathbb{E}\left[ \log \left( \sum_{j=1}^k e^{\langle\phi(Z^a),\phi(Z_{j}^-) - \phi(Z^+)\rangle/\tau}\right)\right] - \log k,
\end{equation}
where $Z^a\sim \data$, $Z^+\sim M(Z^a,\cdot)$ and $Z_1^-,\ldots,Z_k^-\simiid\data$,
and decompose:
\begin{align}
& \int_{\Z} \mathbb{E}\left(\mathrm{CrossEnt}\left[\widehat{M}_k(z,\cdot)\| \widehat{Q}_{\tau,k}^\phi(z,\cdot)\right]\right) \data(\d z) - R(\phi;k,\tau)\nonumber \\ & = \int_{\Z} \mathbb{E}\left(\mathrm{CrossEnt}\left[\widehat{M}_k(z,\cdot)\| \widehat{Q}_{\tau,k}^\phi(z,\cdot)\right]\right) \data(\d z) -\widetilde{R}(\phi;k,\tau) \label{eq:R_decomp1}\\
&+ \widetilde{R}(\phi;k,\tau) - R(\phi;k,\tau). \label{eq:R_decomp2} 
\end{align}

In order to bound the difference in  \eqref{eq:R_decomp1}, let us first write out an expression for $\widetilde{R}(\phi;k,\tau)$, as follows:
\begin{align}
& \widetilde{R}(\phi;k,\tau)\nonumber\\
& = -\frac{1}{\tau}\mathbb{E}[\langle\phi(Z^a),\phi(Z^+)\rangle] + \mathbb{E}\left[ \log \left(\frac{1}{k} \sum_{j=1}^k e^{\langle\phi(Z^a),\phi(Z_{j}^-)\rangle/\tau}\right)\right]\nonumber\\
& = -\frac{1}{\tau} \int_{\Z}\int_{\Z} \langle\phi(z),\phi(z^+)\rangle M(z,\d z^+) \data(\d z)+ \mathbb{E}\left[ \log \left(\frac{1}{k} \sum_{j=1}^k e^{\langle\phi(Z^a),\phi(Z_{j}^-)\rangle/\tau}\right)\right],\label{eq:R_tilde_decomp}
\end{align}
where the first equality is just rearrangement of \eqref{eq:R_tilde_defn}, the second equality holds since $Z^a\sim\data$ and $Z^+\sim M(Z^a,\cdot)$.

Recalling the definitions: $$
\widehat{Q}_{\tau,k}^\phi(z,\d z^\prime)\coloneqq \frac{\sum_{j=1}^k e^{\langle\phi(z),\phi(\xi_j)\rangle/\tau}\delta_{\xi_j}(\d z^\prime)}{\sum_{j=1}^k e^{\langle\phi(z),\phi(\xi_j)\rangle/\tau}},\qquad \widehat{M}_k(z,\d z^\prime)\coloneqq \frac{\sum_{j=1}^k \frac{\d M(z,\cdot)}{\d \data} (\xi_j) \delta_{\xi_j}(\d z^\prime)}{\sum_{j=1}^k \frac{\d M(z,\cdot)}{\d \data} (\xi_j)}.
$$
and
$\widehat{\pi}_{\mathrm{data},k}\coloneqq k^{-1} \sum_{j=1}^k \delta_{\xi_j}$, where $\xi_1,\ldots,\xi_k\simiid \data$,
we have:
\begin{equation}\label{eq:dQ_hat}
\frac{\d \widehat{Q}_{\tau,k}^\phi(z,\cdot)}{\d \widehat{\pi}_{\mathrm{data},k}}(z^\prime) = \frac{e^{\langle\phi(z),\phi(z^\prime)\rangle/\tau}}{\frac{1}{k}\sum_{j=1}^k e^{\langle\phi(z),\phi(\xi_j)\rangle/\tau}}\quad\text{for}\quad z^\prime \in\{\xi_1,\ldots,\xi_k\},
\end{equation}
noting $\{\xi_1,\ldots,\xi_k\}$ is a subset of, or equal to, the support of $\widehat{M}_k(z,\cdot)$.
Therefore
\begin{align*}
&\int_{\Z} \mathbb{E}\left(\mathrm{CrossEnt}\left[\widehat{M}_k(z,\cdot)\| \widehat{Q}_{\tau,k}^\phi(z,\cdot)\right]\right) \data(\d z) \\
&= \int_{\Z} \mathbb{E}\left[-\int_{\Z} \log \frac{\d \widehat{Q}_{\tau,k}^\phi(z,\cdot)}{\d \widehat{\pi}_{\mathrm{data},k}}(z^\prime)\widehat{M}_k(z,\d z^\prime)\right] \data(\d z) \\
& = -\frac{1}{\tau}\int_{\Z} \mathbb{E}\left[\int_{\Z}  \langle\phi(z),\phi(z^\prime)\rangle\widehat{M}_k(z,\d z^\prime)\right] \data(\d z) +  \mathbb{E}\left[\log\left(\frac{1}{k}\sum_{j=1}^k e^{\langle\phi(Z^a),\phi(\xi_j)\rangle/\tau}\right)\right] \\
& = -\frac{1}{\tau}\int_{\Z}\mathbb{E}\left[\frac{\sum_{j=1}^k\langle\phi(z),\phi(\xi_j)\rangle \frac{\d M(z,\cdot)}{\d \data}(\xi_j)}{\sum_{j=1}^k \frac{\d M(z,\cdot)}{\d \data}(\xi_j)}\right] \data(\d z) +  \mathbb{E}\left[\log\left(\frac{1}{k}\sum_{j=1}^k e^{\langle\phi(Z^a),\phi(Z_j^-)\rangle/\tau}\right)\right], 
\end{align*}
where the first equality holds by definition of $\mathrm{CrossEnt}\left[\widehat{M}_k(z,\cdot)\| \widehat{Q}_{\tau,k}^\phi(z,\cdot)\right]$; the second equality is obtained by substituting in \eqref{eq:dQ_hat}; the third equality holds by substituting in the definition of $\widehat{M}_k$ and using the fact that $\xi_1,\ldots,\xi_k$ and $Z_1^-,\ldots,Z_k^-$ are identically distributed. Comparing with \eqref{eq:R_tilde_decomp}, we find:
\begin{multline*}
\left|\int_{\Z} \mathbb{E}\left(\mathrm{CrossEnt}\left[\widehat{M}_k(z,\cdot)\| \widehat{Q}_{\tau,k}^\phi(z,\cdot)\right]\right) \data(\d z) -\widetilde{R}(\phi;k,\tau)\right|\\
\leq \frac{1}{\tau}\int _{Z}\left|\mathbb{E}\left[\frac{\sum_{j=1}^k\langle\phi(z),\phi(\xi_j)\rangle \frac{\d M(z,\cdot)}{\d \data}(\xi_j)}{\sum_{j=1}^k \frac{\d M(z,\cdot)}{\d \data}(\xi_j)}\right]  - \int_{\Z} \langle\phi(z),\phi(z^+)\rangle M(z,\d z^+) \right| \data(\d z).
\end{multline*}
In order to control the integrand for any fixed $z$, we apply lemma \ref{lem:self_norm_is} with $\nu\coloneqq M(z,\cdot)$, $\pi\coloneqq \data$ and $f(\xi)\coloneqq\langle\phi(z),\phi(\xi)\rangle$, which satisfies $|f(\xi)|\leq 1$ as required since $\phi\in\BS$, yielding:
$$
\left|\mathbb{E}\left[\frac{\sum_{j=1}^k\langle\phi(z),\phi(\xi_j)\rangle \frac{\d M(z,\cdot)}{\d \data}(\xi_j)}{\sum_{j=1}^k \frac{\d M(z,\cdot)}{\d \data}(\xi_j)}\right]  - \int_{\Z} \langle\phi(z),\phi(z^+)\rangle M(z,\d z^+) \right| \leq \frac{4}{k}\int_{\Z} \left|\frac{\d M(z,\cdot)}{\d \data}(z^\prime) \right|^2\data(\d z^\prime),
$$
and in turn:
\begin{multline*}
\left|\int_{\Z} \mathbb{E}\left(\mathrm{CrossEnt}\left[\widehat{M}_k(z,\cdot)\| \widehat{Q}_{\tau,k}^\phi(z,\cdot)\right]\right) \data(\d z) -\widetilde{R}(\phi;k,\tau)\right|\\
\leq \frac{4}{k\tau }\int_{\Z}\int_{\Z} \left|\frac{\d M(z,\cdot)}{\d \data}(z^\prime) \right|^2\data(\d z^\prime) \data(\d z).
\end{multline*}
This completes our treatment of the difference in \eqref{eq:R_decomp1}. For the difference in  \eqref{eq:R_decomp2}, recalling the definitions of $R(\phi;k,\tau)$ and $\widetilde{R}(\phi;k,\tau)$ in \eqref{eq:R_k_defn} and \eqref{eq:R_tilde_defn}, we have:
\begin{align*}
0&\leq R(\phi;k,\tau) - \widetilde{R}(\phi;k,\tau)\\
&=\mathbb{E}\left[\log\left(1+\frac{1}{\sum_{j=1}^k e^{\langle\phi(Z^a),\phi(Z_j^-)-\phi(Z^+)\rangle/\tau}}\right)\right]\\
&\leq \log\left(1+\frac{e^{2/\tau}}{k}\right),
\end{align*}
where the first inequality uses monotonicity of $\log$, the equality uses $\log(1/k+c)-\log(c) = \log(1+1/kc)$, and the second inequality uses $\phi\in\BS$.

Having thus obtained bounds on the absolute values of the differences in \eqref{eq:R_decomp1}-\eqref{eq:R_decomp2}, the proof is completed by applying the triangle inequality there.

\end{proof}

\begin{lem}\label{lem:determ_bound_R_Tilde} For any $\phi\in\BS$, $k\geq1$ and $\tau >0$,
$$
\int_{\Z}\mathrm{CrossEnt}[M(z,\cdot)\| Q_\tau^\phi(z,\cdot)]\data(\d z)- \widetilde{R}(\phi;k,\tau)  \leq \frac{2}{\tau}.
$$
\end{lem}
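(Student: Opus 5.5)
Both quantities share the same alignment term, which cancels, and what remains is a Jensen-type gap that can be bounded pointwise using $\phi\in\BS$.

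First, write out both sides using \eqref{eq:cross_ent_defn} and \eqref{eq:R_tilde_decomp}. The integrated cross-entropy equals
$$
-\frac{1}{\tau}\int_{\Z}\int_{\Z}\langle\phi(z),\phi(z^+)\rangle M(z,\d z^+)\data(\d z) + \int_{\Z}\log\int_{\Z}e^{\langle\phi(z),\phi(z^\prime)\rangle/\tau}\data(\d z^\prime)\data(\d z).
$$
By \eqref{eq:R_tilde_decomp}, $\widetilde{R}(\phi;k,\tau)$ has the same first term. Subtracting, the first terms cancel exactly, and the difference becomes
$$
\mathbb{E}\left[\log\int_{\Z}e^{\langle\phi(Z^a),\phi(z^\prime)\rangle/\tau}\data(\d z^\prime) - \log\left(\frac{1}{k}\sum_{j=1}^k e^{\langle\phi(Z^a),\phi(Z_j^-)\rangle/\tau}\right)\right],
$$
where $Z^a\sim\data$ is independent of $Z_1^-,\ldots,Z_k^-\simiid\data$.

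Second, bound the integrand pointwise. Since $\|\phi(z)\|_2=1$ for every $z$, Cauchy--Schwarz gives $\langle\phi(z),\phi(z^\prime)\rangle\in[-1,1]$. Hence the first logarithm is at most $1/\tau$, because the integral is bounded by $e^{1/\tau}$. The second logarithm is at least $-1/\tau$, because the empirical average is at least $e^{-1/\tau}$. So the difference is at most $2/\tau$ for every realisation of $Z^a$ and the negatives, and taking expectations gives the claim for all $k\geq1$.

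The only point needing care is that the alignment terms cancel exactly. This holds because both quantities integrate $\langle\phi(z),\phi(z^+)\rangle$ against the same measure, $\data(\d z)M(z,\d z^+)$, with no extra assumption needed. Integrability is automatic, since every term involved is bounded.
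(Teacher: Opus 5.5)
Your proposal is correct and follows essentially the same route as the paper: cancel the common alignment term $-\frac{1}{\tau}\mathbb{E}[\langle\phi(Z^a),\phi(Z^+)\rangle]$. Then bound the remaining log-ratio between $\int_{\Z} e^{\langle\phi(Z),\phi(z^-)\rangle/\tau}\data(\d z^-)$ and the empirical average $\frac{1}{k}\sum_{j=1}^k e^{\langle\phi(Z),\phi(Z_j^-)\rangle/\tau}$ pointwise by $\log(e^{1/\tau}/e^{-1/\tau})=2/\tau$, using $\|\phi(z)\|_2=1$.
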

\begin{proof}
\begin{align*}
& \int_{\Z}\mathrm{CrossEnt}[M(z,\cdot)\| Q_\tau^\phi(z,\cdot)]\data(\d z)- \widetilde{R}(\phi;k,\tau) \\
& = \mathbb{E}\left[\log \int_{\Z} e^{\langle\phi(Z),\phi(z^-)-\phi(Z^+)\rangle/\tau }\data(\d z^-)\right]- \mathbb{E}\left[\log \frac{1}{k}\sum_{j=1}^k e^{\langle\phi(Z),\phi(Z_j^-)-\phi(Z^+)\rangle/\tau }\right]\\
&  = \mathbb{E}\left[\log \int_{\Z} e^{\langle\phi(Z),\phi(z^-) \rangle/\tau }\data(\d z^-)\right]- \mathbb{E}\left[\log \frac{1}{k}\sum_{j=1}^k e^{\langle\phi(Z),\phi(Z_j^-) \rangle/\tau }\right]\\
& = \mathbb{E}\left[\log \frac{\int_{\Z} e^{\langle\phi(Z),\phi(z^-) \rangle/\tau }\data(\d z^-)}{\frac{1}{k}\sum_{j=1}^k e^{\langle\phi(Z),\phi(Z_j^-) \rangle/\tau }}\right] \leq \log \frac{e^{1/\tau}}{e^{-1/\tau}} = \frac{2}{\tau},
\end{align*}
where the inequality holds since $\|\phi(z)\|_2=1$ for all $z$.  
\end{proof}

\subsection{Calculations for section \ref{sec:simplified_bounds}}\label{app:calcs_for_DCL}

The bound in \eqref{eq:R_tilde_prop2} is obtained as part of the proof of proposition \ref{prop:Q_hat}, see \eqref{eq:R_decomp1} in particular. To check the bound in \eqref{eq:R_tilde_prop1}, write out the difference:
\begin{align}
&\int_{\Z}\mathrm{CrossEnt}[M(z,\cdot)\| Q_\tau^\phi(z,\cdot)]\data(\d z)- \widetilde{R}(\phi;k,\tau)\nonumber \\
& = -\frac{1}{\tau}\int_{\Z}\langle\phi(z),\phi(z^+)\rangle \data(\d z) M(z,\d z^+) + \int_{\Z} \log \left[\int_{\Z} e^{\langle\phi(z),\phi(z^-)\rangle/\tau } \data(\d z^-)\right] \data(\d z)\nonumber \\
&\quad +\frac{1}{\tau}\int_{\Z}\langle\phi(z),\phi(z^+)\rangle \data(\d z) M(z,\d z^+) - \mathbb{E}\left[\log\left( \frac{1}{k}\sum_{j=1}^k e^{\langle\phi(Z),\phi(Z_j^-)\rangle/\tau } \right) \right]\nonumber \\
& = \int_{Z} \log \left[\int_{\Z} e^{\langle\phi(z),\phi(z^-)\rangle/\tau } \data(\d z^-)\right] -  \mathbb{E}\left[\log\left( \frac{1}{k}\sum_{j=1}^k e^{\langle\phi(z),\phi(Z_j^-)\rangle/\tau } \right) \right]  \data(\d z)\nonumber\\
& = \int_{Z} \log
 \mu(z)  - \mathbb{E}[\log \widehat{\mu}(z) ] \,\data(\d z),\label{eq:cross_ent_id_ex}
\end{align}
where the final equality holds with the shorthand notation:
$$
\mu(z)\coloneqq \int_{\Z} e^{\langle\phi(z),\phi(z^-)\rangle/\tau } \data(\d z^-),\qquad \widehat{\mu}(z)\coloneqq  \frac{1}{k}\sum_{j=1}^k e^{\langle\phi(z),\phi(Z_j^-)\rangle/\tau }.
$$

By Jensen's inequality, $\log \mu(z) \geq  \mathbb{E}[\log \widehat{\mu}(z) ] $. The  upper-bound in \eqref{eq:R_tilde_prop1} is derived by making a Taylor expansion of $y\mapsto \log(y)$ similarly to as in the proof of lemma \ref{lem:bias_taylor}. The details are omitted, but in order to perform calculations for a specific example, consider the Taylor expansion up to third order (with $z$ fixed):
\begin{align*}
\log \widehat{\mu}(z) &= \log \mu(z) + \frac{\widehat{\mu}(z) - \mu(z)}{\mu(z)} - \frac{(\widehat{\mu}(z) - \mu(z))^2}{2 \mu(z)^2} + \frac{(\widehat{\mu}(z) - \mu(z))^3}{3\xi^3},
\end{align*}
for some $\xi$ on the line segment between $\mu(z)$ and $\widehat{\mu}(z)$.

Using $\mu(z)\vee \widehat{\mu}(z) \leq e^{1/\tau}$, we thus have:
$$
\log \mu(z) - \mathbb{E}[\log \widehat{\mu}(z) ] \geq   \frac{\mathbb{E}[(\widehat{\mu}(z) - \mu(z))^2]}{2 \mu(z)^2} - \frac{\mathbb{E}[|\widehat{\mu}(z) - \mu(z)|^3]}{3 e^{3/\tau}}.
$$

Since $Z_1^-,\ldots,Z_k^-$ are i.i.d., the Marcinkiewicz–Zygmund inequality gives $\mathbb{E}[|\widehat{\mu}(z) - \mu(z)|^3] = O(k^{-3/2})$ uniformly in $z$, and by direct calculation,
$$
\mathbb{E}[(\widehat{\mu}(z) - \mu(z))^2] = \frac{1}{n} \mathrm{Var}[e^{\langle\phi(z),\phi(Z_1^-)\rangle/\tau}],
$$
where $Z_1^-\sim \data$.

This shows that 
\begin{equation}\label{eq:bias_asymp_lower_bound}
k \log \mu(z) - k\mathbb{E}[\log \widehat{\mu}(z) ] \geq   \frac{1}{2}\frac{\mathrm{Var}[e^{\langle\phi(z),\phi(Z_1^-)\rangle/\tau}]}{\mathbb{E}[e^{\langle\phi(z),\phi(Z_1^-)\rangle/\tau}]^2}- O(k^{-1/2}),
\end{equation}
as $k\to\infty$.

\subsection{An example}\label{app:example}

Now let us construct an example for which we shall lower bound $\mathrm{Var}[e^{\langle\phi(z),\phi(Z_1^-)\rangle/\tau}]/\mathbb{E}[e^{\langle\phi(z),\phi(Z_1^-)\rangle/\tau}]^2$. The idea of the construction is to make $|\langle\phi(z),\phi(Z_1^-)\rangle|$ close to zero with probability close to $1$, and otherwise $\langle\phi(z),\phi(Z_1^-)\rangle=1$.

Assume $e^{1/\tau} >2$ and define 
\begin{equation}\label{eq:eps_defn}
\epsilon \coloneqq \frac{1}{\lceil e^{1/\tau}-1\rceil},
\end{equation}
so that $\epsilon \in(0,1)$. Let $\delta\in(0,1)$ and let $d(\delta)$ be large enough that there exist $1/\epsilon$ vectors in $\Sd$ which are $\delta$-orthogonal, i.e., vectors $u_1,\ldots,u_{1/\epsilon}$ such that for $i\neq j$, $|\langle u_i,u_j\rangle|\leq \delta$. Over the course of the following construction we shall consider taking $\delta\to 0$ and when we do it will be automatically assumed that $d$ grows suitably fast as $\delta$ shrinks that $1/\epsilon$ $\delta$-orthogonal vectors exist ($d=1/\epsilon$ is sufficient, since in that case the existence of $1/\epsilon$ orthogonal vectors in $\Sd$ is trivial, but we allow for $\delta$-orthogonality rather than strict orthogonality to emphasise that $d=1/\epsilon$ is not necessary for the construction).

Suppose that $\data$ and $\phi$ are such that for $Z\sim\data$, the random vector $\phi(Z)$ is uniformly distributed on the set $\{u_1,\ldots,u_{1/\epsilon}\}$. It follows that for any $z\in \Z$ and $Z_j^- \sim \data$, $\langle\phi(z),\phi(Z_j^-)\rangle=1$ with probability $\epsilon$, and with probability $1-\epsilon$, $\langle\phi(z),\phi(Z_j^-)\rangle=X_0$ for some random variable $X_0$ such that $|X_0|\leq \delta$.

We have:
$$
\frac{\mathrm{Var}[e^{\langle\phi(z),\phi(Z_1^-)\rangle/\tau}]}{\mathbb{E}[e^{\langle\phi(z),\phi(Z_1^-)\rangle/\tau}]^2} = \frac{\epsilon(1-\epsilon)\left(e^{1/\tau} - \mathbb{E}[e^{X_0/\tau}]\right)^2 + (1-\epsilon)\mathrm{Var}[e^{X_0/\tau}]}{\left(\epsilon e^{1/\tau} + (1-\epsilon)\mathbb{E}[e^{X_0/\tau}]\right)^2}.
$$

As $\delta\to 0$ (with $d$ increasing as necessary for the required number of $\delta$-orthogonal vectors to exist), the r.h.s. of the above tends to:
$$
\frac{\epsilon(1-\epsilon)\left(e^{1/\tau} - 1\right)^2  }{\left(1 + \epsilon (e^{1/\tau} -1)\right)^2}.
$$
Therefore for any $a\in(0,1)$ we can choose $\delta>0$ small enough that:
$$
\frac{\mathrm{Var}[e^{\langle\phi(z),\phi(Z_1^-)\rangle/\tau}]}{\mathbb{E}[e^{\langle\phi(z),\phi(Z_1^-)\rangle/\tau}]^2} \geq (1-a) \frac{\epsilon(1-\epsilon)\left(e^{1/\tau} - 1\right)^2  }{\left(1 + \epsilon (e^{1/\tau} -1)\right)^2}.
$$
Using $e^{1/\tau}>2$ and \eqref{eq:eps_defn}  we have:  $$\frac{1}{2}\leq \frac{e^{1/\tau}-1}{e^{1/\tau}}\leq \epsilon(e^{1/\tau}-1)\leq 1,$$ and so:
\begin{align*}
\frac{\mathrm{Var}[e^{\langle\phi(z),\phi(Z_1^-)\rangle/\tau}]}{\mathbb{E}[e^{\langle\phi(z),\phi(Z_1^-)\rangle/\tau}]^2} &\geq (1-a) \frac{(1-\epsilon)\left(e^{1/\tau} - 1\right)^2  }{4 e^{1/\tau}}\\
& \geq \frac{(1-a)}{8 } \left( e^{1/\tau}-1-\frac{e^{1/\tau}-1 }{\lceil e^{1/\tau}-1\rceil}\right)\\
&\geq \frac{(1-a)}{8}(e^{1/\tau}-2).
\end{align*}
Combining with \eqref{eq:cross_ent_id_ex} and  \eqref{eq:bias_asymp_lower_bound} we find that for this example:
$$
\liminf _{k\to\infty}\left(\int_{\Z}\mathrm{CrossEnt}[M(z,\cdot)\| Q_\tau^\phi(z,\cdot)]\data(\d z)- \widetilde{R}(\phi;k,\tau)\right) \geq \frac{(1-a)}{16}(e^{1/\tau}-2),
$$
from which \eqref{eq:ratio_bound} follows by combining with \eqref{eq:R_tilde_prop2}.

\section{Proofs and supporting results for section \ref{sec:generalisation}}

\begin{lem}\label{lem:ell_determ_bound} For any $k\geq1$, $\tau>0$, $\mathbf{z}\in\Z^{2+k}$, and $\phi,\phi^\prime\in\BS$,
$$
\left|\ell(\phi,\mathbf{z},k,\tau)-\ell(\phi^\prime,\mathbf{z},k,\tau)\right| \leq \frac{4}{\tau}.
$$
\end{lem}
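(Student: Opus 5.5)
The plan is to show that for every fixed $\mathbf{z}$ and $\phi\in\BS$ the loss $\ell(\phi,\mathbf{z},k,\tau)$ lies in an interval of length at most $4/\tau$ that does not depend on $\phi$. The bound on the difference for any pair $\phi,\phi^\prime$ then follows at once.

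For the argument of the logarithm in \eqref{eq:ell_defn}, I will use $\|\phi(z)\|_2=1$ for all $z$. By Cauchy--Schwarz,
$$
|\langle\phi(z^a),\phi(z_j^-)-\phi(z^+)\rangle|\leq |\langle\phi(z^a),\phi(z_j^-)\rangle|+|\langle\phi(z^a),\phi(z^+)\rangle|\leq 2,
$$
so each exponential lies in $[e^{-2/\tau},e^{2/\tau}]$. The average $\frac{1}{k}\sum_{j=1}^k e^{\langle\phi(z^a),\phi(z_j^-)-\phi(z^+)\rangle/\tau}$ therefore also lies in $[e^{-2/\tau},e^{2/\tau}]$.

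Writing $S(\phi)$ for this average, we have $\ell(\phi,\mathbf{z},k,\tau)=\log(1/k+S(\phi))$. Since $y\mapsto\log(1/k+y)$ is increasing,
$$
\log\left(\tfrac1k+e^{-2/\tau}\right)\leq \ell\leq \log\left(\tfrac1k+e^{2/\tau}\right).
$$
The difference of two values of $\ell$ is thus at most $\log\frac{1/k+e^{2/\tau}}{1/k+e^{-2/\tau}}$. The map $c\mapsto \frac{c+A}{c+B}$ is decreasing in $c\geq 0$ when $A\geq B$, so this ratio is at most $e^{2/\tau}/e^{-2/\tau}=e^{4/\tau}$, and the difference is at most $4/\tau$.

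No step is a real obstacle. The only point needing care is the additive $1/k$ inside the logarithm, and that is handled by the monotonicity of $(c+A)/(c+B)$ noted above. As an alternative, I could bound each of $\ell(\phi)$ and $\ell(\phi^\prime)$ separately relative to the DCL version without the $1/k$, but the direct interval argument is simpler.
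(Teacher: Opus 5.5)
Your proof is correct and takes essentially the same route as the paper. Both confine $\ell(\phi,\mathbf{z},k,\tau)$ to the $\phi$-independent interval $[\log(1/k+e^{-2/\tau}),\log(1/k+e^{2/\tau})]$ and bound the log-ratio of its endpoints by $4/\tau$; the paper does that last step by factoring out $e^{2/\tau}$ and comparing numerator and denominator, while you use the monotonicity of $c\mapsto (c+A)/(c+B)$, which is equally valid.
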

\begin{proof}
Since $\|\phi\|_{2,\infty}=1$, we have:
$$
\log\left(\frac{1}{k}+e^{-2/\tau}\right)\leq\ell(\phi,\mathbf{z},k,\tau)\leq \log\left(\frac{1}{k}+e^{2/\tau}\right),
$$
with the same inequality holding with $\phi$ replaced by $\phi^\prime$. The claim of the lemma then follows by bounding:
\begin{align*}
 |\ell(\phi,\mathbf{z},k,\tau)- \ell(\phi^\prime ,\mathbf{z},k,\tau)|&\leq  \log\left(\frac{1+k e^{2/\tau}}{1+k e^{-2/\tau}}\right)\\
&=\log\left(e^{2/\tau}\left[\frac{1+k e^{2/\tau}}{e^{2/\tau}+k }\right]\right)\\
&\leq \log\left(e^{2/\tau}\left[\frac{e^{2/\tau}+k e^{2/\tau}}{1+k }\right]\right)
=\frac{4}{\tau},
\end{align*}
where the inequality uses $e^{2/\tau}\geq 1$.

\end{proof}

\begin{proof}[Proof of proposition \ref{prop:uniform}.] Noting that $R(\phi;k,\tau)=\mathbb{E}\left[\widehat{R}_n(\phi;k,\tau)\right]$, proposition \ref{prop:uniform} is almost a direct  application of \citep[Thm. 3.3]{mohri2018foundations}. The latter theorem applies to sample averages of functions with co-domain $[0,1]$, whereas in the present setting we have lower and upper bounds:
$$
a\coloneqq\log\left(\frac{1}{k}+e^{-2/\tau}\right)\leq\ell(\phi,\mathbf{z}_i,k,\tau)\leq \log\left(\frac{1}{k}+e^{2/\tau}\right)=:b,
$$
which hold for any $\phi\in\BS$ and any $\mathbf{z}_i\in\Z^{2+k}$.

To obtain a function with co-domain $[0,1]$ denote:
$$
g(\phi,\mathbf{z}_i)\coloneqq \frac{\ell(\phi,\mathbf{z}_i,k,\tau)-a}{b-a},
$$
where the dependence on $\tau,k$ is hidden for ease of presentation. An application of \citep[Thm. 3.3]{mohri2018foundations} to  $n^{-1}\sum_{i=1}^n g(\phi,\mathbf{Z}_i) $ gives that with probability at least $1-\delta$, for all $\phi\in\Phi$,
\begin{equation}\label{eq:Mohri_ulln}
\mathbb{E}[g(\phi,\mathbf{Z}_1)] \leq \frac{1}{n}\sum_{i=1}^n g(\phi,\mathbf{Z}_i) + 2 \mathbb{E}\left[\left.\sup_{\phi\in\Phi}\frac{1}{n}\sum_{i=1}^n\sigma_i g(\phi,\mathbf{Z}_i)\right|\mathbf{Z}_1,\ldots,\mathbf{Z}_n\right] + 3\sqrt{\frac{\log \frac{2}{\delta}}{2n}}. 
\end{equation}
Since Rademacher complexity of a class of functions is invariant to adding a constant scalar to every member of the class, and factorising out $b-a$, we have:
$$
\mathbb{E}\left[\left.\sup_{\phi\in\Phi}\frac{1}{n}\sum_{i=1}^n\sigma_i g(\phi,\mathbf{Z}_i)\right|\mathbf{Z}_1,\ldots,\mathbf{Z}_n\right] = \frac{1}{b-a} \mathbb{E}\left[\left.\sup_{\phi\in\Phi}\frac{1}{n}\sum_{i=1}^n\sigma_i \ell(\phi,\mathbf{Z}_i,k,\tau)\right|\mathbf{Z}_1,\ldots,\mathbf{Z}_n\right].
$$
Substituting into \eqref{eq:Mohri_ulln}, multiplying both sides of the inequality by $b-a$,  adding $a$ to both sides and recalling the definitions \eqref{eq:R_k_defn} and \eqref{eq:R_hat_defn} gives:
$$
R(\phi;k,\tau)\leq \widehat{R}_n(\phi;k,\tau)+ 2 \mathbb{E}\left[\left.\sup_{\phi\in\Phi}\frac{1}{n}\sum_{i=1}^n\sigma_i \ell(\phi,\mathbf{Z}_i,k,\tau)\right|\mathbf{Z}_1,\ldots,\mathbf{Z}_n\right] + 3(b-a)\sqrt{\frac{\log \frac{2}{\delta}}{2n}}.
$$
The proof is completed by upper-bounding $b-a$ as in the proof of lemma \ref{lem:ell_determ_bound}.

\end{proof}

\subsection{Supporting results and proof of proposition \ref{prop:ell_lipschitz}}

A G\^{a}teaux derivative can be thought of as a generalisation of the directional derivative in Euclidean space.  We shall consider G\^{a}teaux derivatives of functionals mapping $\BR$ into $\mathbb{R}$. For such a functional, say $H:\BR\to\mathbb{R}$, the G\^{a}teaux derivative of $H$ at $f\in \BR$ in direction $\eta\in\BR$ is the limit (if it exists):
$$
\delta H(f)[\eta]\coloneqq \lim_{\epsilon \to 0} \frac{H(f+\epsilon \eta)-H(f)}{\epsilon}.
$$

To prepare for the proof of proposition \ref{prop:ell_lipschitz}, we need the following definitions. For any $\tau>0$, $x,y\in\mathcal{Z}$ and $\mu\in\P(\Z)$,
define the functionals $H_{\tau}(\cdot,x,y,\mu):\BR\to\mathbb{R}$
and $G_{\tau}(\cdot,\cdot,x,y,\mu):\BR\times \BR\to\mathbb{R}^{d}$,
\begin{align}
H_{\tau}(f,x,y,\mu) & \coloneqq\log\int_{\mathcal{Z}}e^{\left\langle f(x),f(z)-f(y)\right\rangle /\tau}\mu(\d z).\label{eq:H_defn}\\
G_{\tau}(f,\eta,\mu,x) & \coloneqq\dfrac{\int_{\mathcal{Z}}\eta(z)e^{\left\langle f(x),f(z)\right\rangle /\tau}\mu(\d z)}{\int_{\mathcal{Z}}e^{\left\langle f(x),f(z)\right\rangle /\tau}\mu(\d z)},\label{eq:G_defn}
\end{align}
where in the numerator of \eqref{eq:G_defn} the vector-valued function $\eta$ is integrated elementwise.

\begin{lem}\label{lem:deltaH}
For any $\tau>0$, $x,y\in\mathcal{Z}$ and $\mu\in\mathcal{P}(\mathcal{Z})$,
the G\^{a}teaux derivative of the functional $H_{\tau}(\cdot,x,y,\mu)$
at a point $f\in \BR$, in direction $\eta\in \BR$,
is:
\[
\delta H_{\tau}(f,x,y,\mu)[\eta]\coloneqq\frac{1}{\tau}\left[\left\langle \eta(x),G_{\tau}(f,f,x,\mu)-f(y)\right\rangle +\left\langle f(x),G_{\tau}(f,\eta,\mu,x)-\eta(y)\right\rangle \right].
\]
\end{lem}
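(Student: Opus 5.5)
The plan is a direct computation of the one-sided difference quotient, followed by a limit taken under the integral sign. Fix $f,\eta\in\BR$, $x,y\in\Z$ and $\mu\in\P(\Z)$, and for real $\epsilon$ write
\[
g_\epsilon(z)\coloneqq \langle f(x)+\epsilon\eta(x),\, f(z)+\epsilon\eta(z)-f(y)-\epsilon\eta(y)\rangle/\tau .
\]
Expanding bilinearly gives $g_\epsilon(z)=g_0(z)+\epsilon a(z)+\epsilon^2 b(z)$, where
\[
a(z)=\big[\langle \eta(x),f(z)-f(y)\rangle+\langle f(x),\eta(z)-\eta(y)\rangle\big]/\tau
\]
and $b(z)=\langle \eta(x),\eta(z)-\eta(y)\rangle/\tau$. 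Because $f$ and $\eta$ are bounded in $\|\cdot\|_{2,\infty}$, Cauchy--Schwarz shows that $g_0$, $a$ and $b$ are all uniformly bounded in $z$. Specifically, $|a|\le 2(\|\eta\|\|f\|+\|f\|\|\eta\|)/\tau$, with a similar bound for $b$.

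Next, set $I(\epsilon)\coloneqq\int e^{g_\epsilon(z)}\mu(\d z)$. Then $H_\tau(f+\epsilon\eta,x,y,\mu)=\log I(\epsilon)$, and $I(\epsilon)$ is strictly positive and finite. I will show $I$ is differentiable at $0$ with $I'(0)=\int a(z)e^{g_0(z)}\mu(\d z)$. The difference quotient is
\[
\frac{e^{g_\epsilon}-e^{g_0}}{\epsilon}=e^{g_0}\,\frac{e^{\epsilon a+\epsilon^2 b}-1}{\epsilon}.
\]
For $|\epsilon|\le 1$ this is bounded uniformly in $z$, by the mean value theorem together with the uniform bounds on $g_0$, $a$ and $b$. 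It also converges pointwise to $a e^{g_0}$. Since $\mu$ is a probability measure, dominated convergence (with a constant dominating function) gives the claim. The chain rule for $\log$ then yields $\delta H_\tau(f,x,y,\mu)[\eta]=I'(0)/I(0)$.

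It remains to identify $I'(0)/I(0)$ with the stated formula. The factor $e^{-\langle f(x),f(y)\rangle/\tau}$ appears in both numerator and denominator and cancels, leaving weights proportional to $e^{\langle f(x),f(z)\rangle/\tau}\mu(\d z)$. Terms in $a$ that do not depend on $z$, namely $-\langle\eta(x),f(y)\rangle$ and $-\langle f(x),\eta(y)\rangle$, pass through the normalised average unchanged. For the $z$-dependent terms, linearity of the inner product lets us pull the fixed vectors $\eta(x)$ and $f(x)$ outside the elementwise integrals. This turns $\int\langle\eta(x),f(z)\rangle w(z)\mu(\d z)/\int w\,\d\mu$ into $\langle \eta(x),G_\tau(f,f,\mu,x)\rangle$, and likewise turns the $\eta(z)$ term into $\langle f(x),G_\tau(f,\eta,\mu,x)\rangle$. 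Collecting terms gives exactly the expression in Lemma \ref{lem:deltaH}.

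The only non-routine step is the justification of the interchange of limit and integral. It is handled by the uniform boundedness that comes from working in $\BR$, so I expect no real obstacle. The remaining care needed is in the bookkeeping: matching the argument order of $G_\tau$ and confirming that the limit exists as a two-sided limit $\epsilon\to0$.
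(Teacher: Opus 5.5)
Your proposal is correct and follows essentially the same route as the paper. Both arguments form the difference quotient, use the uniform bounds from $\|f\|_{2,\infty},\|\eta\|_{2,\infty}<\infty$ together with $\mu$ being a probability measure to differentiate under the integral by dominated convergence, apply the chain rule for $\log$, and rewrite the result in terms of $G_\tau$. The only cosmetic difference is that the paper first splits off $-\frac{1}{\tau}\langle f(x),f(y)\rangle$ and differentiates it separately, whereas you keep $f(y)$ inside the exponent and let the constant factor $e^{-\langle f(x),f(y)\rangle/\tau}$ cancel in $I'(0)/I(0)$.
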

\begin{proof}
For brevity throughout the proof we write ``derivative'' instead
of ``G\^{a}teaux derivative''. From (\ref{eq:H_defn}) we have:
\begin{equation}
H_{\tau}(f,x,y,\mu)=-\frac{1}{\tau}\left\langle f(x),f(y)\right\rangle +\log\int_{\mathcal{Z}}e^{\left\langle f(x),f(z)\right\rangle /\tau}\mu(\mathrm{d}z).\label{eq:H_decomp}
\end{equation}
Considering the first term on the l.h.s. of (\ref{eq:H_decomp}), for any $\epsilon>0$,
\[
\frac{1}{\tau}\left\langle f(x)+\epsilon\eta(x),f(y)+\epsilon\eta(y)\right\rangle -\frac{1}{\tau}\left\langle f(x),f(y)\right\rangle =\frac{\epsilon}{\tau}\left[\left\langle f(x),\eta(y)\right\rangle +\left\langle \eta(x),f(y)\right\rangle \right]+\frac{\epsilon^{2}}{\tau}\left\langle \eta(x),\eta(y)\right\rangle ,
\]
then using Cauchy-Schwartz and $\|\eta\|_{2,\infty}<\infty$, the
derivative of $f\mapsto\frac{1}{\tau}\left\langle f(x),f(y)\right\rangle $
in direction $\eta$ is:
\begin{equation}
\lim_{\epsilon\to0}\frac{\frac{1}{\tau}\left\langle f(x)+\epsilon\eta(x),f(y)+\epsilon\eta(y)\right\rangle -\frac{1}{\tau}\left\langle f(x),f(y)\right\rangle }{\epsilon}=\frac{1}{\tau}\left[\left\langle f(x),\eta(y)\right\rangle +\left\langle \eta(x),f(y)\right\rangle \right].\label{eq:H_deriv_term_1}
\end{equation}

To find the derivative of the second term on the r.h.s. of (\ref{eq:H_decomp})
we use the chain rule. So first consider the functional $f\mapsto \Gamma(f)\coloneqq e^{\left\langle f(x),f(z)\right\rangle /\tau}$ (where dependence on $\tau$, $x$ and $z$ is suppressed from the notation).
Using (\ref{eq:H_deriv_term_1}) with $y$ replaced by $z$, the derivative
of $\Gamma$ at $f$
in direction $\eta$ is:
\begin{equation}\label{eq:deriv_of_e_to_prod}
\delta\Gamma(f)[\eta]\coloneqq\frac{1}{\tau}\left[\left\langle f(x),\eta(z)\right\rangle +\left\langle \eta(x),f(z)\right\rangle \right]e^{\left\langle f(x),f(z)\right\rangle /\tau}.
\end{equation}

Our next objective is to show that the derivative of $f\mapsto\int_{\mathcal{Z}}e^{\left\langle f(x),f(z)\right\rangle /\tau}\mu(\mathrm{d}z)$ is given by \eqref{eq:deriv_of_e_to_prod} with $z$ integrated out under $\mu$. For this purpose we seek to apply the dominated convergence theorem, as follows.  Fix any $\epsilon\in(0,1]$,  $f\in\BR$ and $\eta\in \BR$.
By the mean value theorem there exists $c\in[0,1]$   such that by evaluating $\Gamma(\cdot)[\epsilon \eta]$ at the point:
$(1-c)f+c\left(f+\epsilon\eta\right)=f+\epsilon c\eta$,
\begin{align*}
&\frac{1}{\epsilon}\left|e^{\left\langle f(x)+\epsilon\eta(x),f(z)+\epsilon\eta(z)\right\rangle /\tau}-e^{\left\langle f(x),f(z)\right\rangle /\tau}\right|\\ 
& =\frac{1}{\epsilon}\left|\delta\Gamma(f+\epsilon c\eta)[\epsilon \eta]\right|\\
& =\frac{1}{\tau\epsilon}\left|\left\langle f(x)+\epsilon c\eta(x),\epsilon\eta(z)\right\rangle +\left\langle \epsilon\eta(x),f(z)+\epsilon c\eta(z)\right\rangle \right|e^{\left\langle f(x)+\epsilon c\eta(x),f(z)+\epsilon c\eta(z)\right\rangle /\tau}\\
 & =\frac{1}{\tau}\left|\left\langle f(x)+\epsilon c\eta(x),\eta(z)\right\rangle +\left\langle \eta(x),f(z)+\epsilon c\eta(z)\right\rangle \right|e^{\left\langle f(x)+\epsilon c\eta(x),f(z)+\epsilon c\eta(z)\right\rangle /\tau}\\
 & \leq\frac{1}{\tau}\left[\|f+\epsilon c\eta\|_{2,\infty}\|\eta\|_{2,\infty}+\|\eta\|_{2,\infty}\|f+\epsilon c\eta\|_{2,\infty}\right]e^{(\|f\|_{2,\infty}+\|\eta\|_{2,\infty})^2/\tau}\\
 & \leq\frac{2}{\tau}\left(\|f\|_{2,\infty}+\|\eta\|_{2,\infty}\right)\|\eta\|_{2,\infty}e^{(\|f\|_{2,\infty}+\|\eta\|_{2,\infty})^2/\tau}<\infty.
\end{align*}
Since $\epsilon$ was any value in $(0,1]$, the dominated convergence theorem allows interchange of integration and differentiation such that from \eqref{eq:deriv_of_e_to_prod} the derivative of
$f\mapsto\int_{\mathcal{Z}}e^{\left\langle f(x),f(z)\right\rangle /\tau}\mu(\mathrm{d}z)$
at $f$ in direction $\eta$ is:
\[
\int_{\mathcal{Z}}\frac{1}{\tau}\left[\left\langle f(x),\eta(z)\right\rangle +\left\langle \eta(x),f(z)\right\rangle \right]e^{\left\langle f(x),f(z)\right\rangle /\tau}\mu(\mathrm{d}z).
\]
By one further application of the chain rule, the derivative of $f\mapsto\log\int_{\mathcal{Z}}e^{\left\langle f(x),f(z)\right\rangle /\tau}\mu(\mathrm{d}z)$
in direction $\eta$ is:
\begin{multline}
\dfrac{\int_{\mathcal{Z}}\frac{1}{\tau}\left[\left\langle f(x),\eta(z)\right\rangle +\left\langle \eta(x),f(z)\right\rangle \right]e^{\left\langle f(x),f(z)\right\rangle /\tau}\mu(\mathrm{d}z)}{\int_{\mathcal{Z}}e^{\left\langle f(x),f(z)\right\rangle /\tau}\mu(\mathrm{d}z)}\\
=\frac{1}{\tau}\left[\left\langle f(x),G_{\tau}(f,\eta,\mu,x)\right\rangle +\left\langle \eta(x),G_{\tau}(f,f,\mu,x)\right\rangle \right],\label{eq:H_deriv_term_2}
\end{multline}
where the definition \eqref{eq:G_defn} has been used. Recalling (\ref{eq:H_decomp}), the proof is completed by subtracting
(\ref{eq:H_deriv_term_1}) from (\ref{eq:H_deriv_term_2}).
\end{proof}

\begin{lem}\label{lem:fractional_power}
For any $k \geq 1$, $\tau>0$, $\beta \geq 1$, $x_1,\ldots,x_k\in[-1,1]$ and $a_1,\ldots,a_k \geq 0$,
$$
\frac{\sum_{j=1}^k e^{x_j/\tau} a_j}{\sum_{j=1}^k e^{x_j/\tau}} \leq e^{2/(\beta\tau)}\left(\frac{1}{k}\sum_{j=1}^k  a_j^{\beta}\right)^{1/\beta}. 
$$
\end{lem}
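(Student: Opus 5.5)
The plan is to write the softmax-weighted average as an ordinary average against a reweighted empirical measure, and then apply H\"older's inequality in the form $\mathbb{E}[Wa]\le \|W\|_{\beta'}\|a\|_\beta$. Set $w_j\coloneqq e^{x_j/\tau}/\bigl(\frac1k\sum_{i=1}^k e^{x_i/\tau}\bigr)$. With this choice the left-hand side equals $\frac1k\sum_{j=1}^k w_j a_j$, and the weights satisfy $\frac1k\sum_j w_j=1$. Since every $x_j\in[-1,1]$, each weight obeys $0\le w_j\le e^{1/\tau}/e^{-1/\tau}=e^{2/\tau}$.

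First dispose of the case $\beta=1$: here $\frac1k\sum_j w_ja_j\le \max_j w_j\cdot\frac1k\sum_j a_j\le e^{2/\tau}\frac1k\sum_j a_j$, which is exactly the claim. For $\beta>1$, let $\beta'=\beta/(\beta-1)$ be the conjugate exponent and apply H\"older's inequality with respect to the uniform probability measure on $\{1,\ldots,k\}$:
$$
\frac1k\sum_{j=1}^k w_ja_j\le\Bigl(\frac1k\sum_{j=1}^k w_j^{\beta'}\Bigr)^{1/\beta'}\Bigl(\frac1k\sum_{j=1}^k a_j^{\beta}\Bigr)^{1/\beta}.
$$

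The main point is to bound the first factor by $e^{2/(\beta\tau)}$, and not merely by the crude bound $e^{2/\tau}$. This is where the normalisation $\frac1k\sum_j w_j=1$ enters. Write $w_j^{\beta'}=w_j\cdot w_j^{\beta'-1}$ and bound $w_j^{\beta'-1}\le e^{2(\beta'-1)/\tau}$. Then
$$
\frac1k\sum_{j=1}^k w_j^{\beta'}\le e^{2(\beta'-1)/\tau}\cdot\frac1k\sum_{j=1}^k w_j=e^{2(\beta'-1)/\tau}.
$$
Raising this to the power $1/\beta'$ gives the exponent $2(\beta'-1)/(\beta'\tau)=2/(\beta\tau)$, because $(\beta'-1)/\beta'=1/\beta$. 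Combining this with the H\"older bound yields the lemma.

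An alternative route would be Jensen's inequality applied to the probability weights $w_j/k$, giving $\sum_j (w_j/k)a_j\le(\sum_j (w_j/k)a_j^\beta)^{1/\beta}$, followed by $w_j\le e^{2/\tau}$ inside the power. This produces the same factor $e^{2/(\beta\tau)}$ and serves as a check on the computation.
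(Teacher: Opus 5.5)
Your proof is correct, and it takes a slightly different route from the paper. The difference is in where the softmax weights go.

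The paper applies Jensen's inequality to $t\mapsto t^\beta$ under the probability weights $p_j=e^{x_j/\tau}/\sum_i e^{x_i/\tau}$, giving $\sum_j p_j a_j\le(\sum_j p_j a_j^\beta)^{1/\beta}$. It then uses the pointwise bound $p_j\le e^{2/\tau}/k$ inside the power. This is exactly the alternative you sketch in your last paragraph.

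Your main argument instead keeps $a$ under the uniform measure on $\{1,\ldots,k\}$ and applies H\"older. You then control the density $w_j=kp_j$ of the softmax weights relative to that measure by interpolation: $\|w\|_{\beta'}\le\|w\|_1^{1/\beta'}\|w\|_\infty^{1/\beta}$, with $\|w\|_1=1$ and $\|w\|_\infty\le e^{2/\tau}$.

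The paper's route is shorter. It needs no conjugate exponent and no separate case for $\beta=1$, where Jensen is trivially an equality.

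Your route makes clear where the exponent $1/\beta$ comes from: it is the interpolation weight placed on the $L^\infty$ bound of $w$. It also separates the dependence on the $x_j$ from the dependence on the $a_j$ before any crude bounding is done.

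Both arguments end with the same factor, $(\max_j kp_j)^{1/\beta}\le e^{2/(\beta\tau)}$, so neither gives a sharper constant.
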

\begin{proof}
Denote $p_j \coloneqq e^{x_j/\tau} /\sum_{i=1}^k e^{x_i/\tau}$.
Since $\beta \geq 1$, Jensen's inequality gives:
$$
\left(\sum_{j=1}^k p_j a_j\right)^{\beta} \leq \sum_{j=1}^k p_j a_j^{\beta}.
$$
Combining this inequality with the fact that $p_j\leq e^{2/\tau}/k$ gives: 
$$
\sum_{j=1}^k p_j a_j \leq \left(\sum_{j=1}^k p_j a_j^{\beta}\right)^{1/\beta}\leq e^{2/(\beta\tau)}\left(\frac{1}{k}\sum_{j=1}^k  a_j^{\beta}\right)^{1/\beta}.
$$

\end{proof}

\begin{lem}\label{lem:G_phi_phi} For any $k\geq 1$, $\tau>0$, $(z^a,z_1^-,\ldots,z_k^-)\in\Z^{1+k}$, $\phi,\phi^\prime\in\BS$, $f\in\BR$ such that $\|f\|_{2,\infty}\leq 1$,  and $\beta\geq 1$,
$$
\|G_\tau(f,\phi-\phi^\prime,\widehat{\pi},z^a)\|_2 \leq e^{2/{\beta\tau}}\left(\frac{1}{k}\sum_{j=1}^k \|\phi(z_j^-)-\phi^\prime(z_j^-)\|_2^{\beta}\right)^{1/\beta},
$$
where $\widehat{\pi}\coloneqq k^{-1}\sum_{j=1}^k \delta_{z_j^-}$.
\end{lem}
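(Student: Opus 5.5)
Since $\widehat{\pi}$ is the empirical measure of $z_1^-,\ldots,z_k^-$, I would start by writing $G_\tau$ from \eqref{eq:G_defn} explicitly as a finite weighted average:
$$
G_\tau(f,\phi-\phi^\prime,\widehat{\pi},z^a) = \frac{\sum_{j=1}^k e^{x_j/\tau}\bigl(\phi(z_j^-)-\phi^\prime(z_j^-)\bigr)}{\sum_{j=1}^k e^{x_j/\tau}},\qquad x_j\coloneqq \langle f(z^a),f(z_j^-)\rangle .
$$
The factors $1/k$ cancel between numerator and denominator. The hypothesis $\|f\|_{2,\infty}\leq 1$ together with Cauchy--Schwarz gives $|x_j|\leq \|f(z^a)\|_2\|f(z_j^-)\|_2\leq 1$, so $x_j\in[-1,1]$ for every $j$.

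Next I would move the Euclidean norm inside the average. The weights $p_j\coloneqq e^{x_j/\tau}/\sum_i e^{x_i/\tau}$ are nonnegative and sum to one, so the triangle inequality gives
$$
\|G_\tau(f,\phi-\phi^\prime,\widehat{\pi},z^a)\|_2\leq \frac{\sum_{j=1}^k e^{x_j/\tau} a_j}{\sum_{j=1}^k e^{x_j/\tau}},\qquad a_j\coloneqq \|\phi(z_j^-)-\phi^\prime(z_j^-)\|_2\geq 0 .
$$

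Finally I would apply lemma \ref{lem:fractional_power} with these $x_j\in[-1,1]$, these $a_j\geq 0$ and the given $\beta\geq 1$. That lemma bounds the right-hand side by $e^{2/(\beta\tau)}\bigl(k^{-1}\sum_j a_j^\beta\bigr)^{1/\beta}$, which is exactly the claimed bound.

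There is no real obstacle here. The only points needing care are checking that $|x_j|\leq 1$ follows from $\|f\|_{2,\infty}\leq 1$ (as opposed to $f\in\BS$), and handling the vector-valued average with the triangle inequality before invoking the scalar lemma.
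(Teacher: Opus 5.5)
Your proposal is correct and matches the paper's proof: both write $G_\tau(f,\phi-\phi^\prime,\widehat{\pi},z^a)$ as the softmax-weighted average $\sum_{j=1}^k p_j\,[\phi(z_j^-)-\phi^\prime(z_j^-)]$, apply the triangle inequality, and then invoke lemma \ref{lem:fractional_power} with $x_j=\langle f(z^a),f(z_j^-)\rangle\in[-1,1]$, which holds because $\|f\|_{2,\infty}\leq 1$. Your explicit Cauchy--Schwarz check that $|x_j|\leq 1$ is a small detail the paper leaves implicit.
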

\begin{proof}
As in the statement, fix any $k\geq 1$, $\tau>0$, $(z^a,z_1^-,\ldots,z_k^-)\in\Z^{1+k}$, $\phi,\phi^\prime\in\BS$ and $f\in\BR$ such that $\|f\|_{2,\infty}\leq 1$. Define the shorthand:
$$
p_{j}\coloneqq \frac{e^{\langle f(z^a),f(z_{j}^-) \rangle/\tau}}{\sum_{l=1}^k e^{\langle f(z^a),f(z_{\ell}^-) \rangle/\tau}},\qquad j=1,\ldots,k.
$$
From the definition of $G_{\tau}$ in \eqref{eq:G_defn},
$$
G_\tau(f,\phi-\phi^\prime,\widehat{\pi},z^a) = \sum_{j=1}^k p_j \left[\phi(z_j^-) - \phi^\prime(z_j^-)\right].
$$
Now choose any $\beta\geq 1$. By an application of the triangle inequality for the $\|\cdot\|_2$ norm and lemma \ref{lem:fractional_power} with there $x_j=\langle f(z^a),f(z_j^-)\rangle$ (so that $x_j\in[-1,1]$ as required since by assumption of the present lemma $\|f\|_{2,\infty}\leq 1$), 
\begin{equation}\label{eq:metric_bound_tem_1a}
\|G_\tau(f,\phi-\phi^\prime,\widehat{\pi},z^a)\|_2\leq \sum_{j=1}^k p_{j}  \left\| \phi(z_{j}^-)- \phi^{\prime}(z_{j}^-)\right\|_2 \leq e^{2/(\beta\tau)} \left(\frac{1}{k}\sum_{j=1}^k \left\| \phi(z_{j}^-)- \phi^{\prime}(z_{j}^-)\right\|_2^{\beta} \right)^{1/\beta}.
\end{equation}

\end{proof}

\begin{proof}[Proof of proposition \ref{prop:ell_lipschitz}]
Fix any $k$, $\tau$ and $\mathbf{z}=(z^a,z^+,z_1^-,\ldots,z_k^-)\in\Z^{2+k}$ as in the statement of the theorem. To lighten notation in the proof, for any $f\in \BR$ define the shorthand $\ell(f)\equiv\ell(f,\mathbf{z},k,\tau)$ and $H(f)\equiv H_{\tau}(f,z^a,z^+,\widehat{\pi}^-)$, with $\widehat{\pi}^-\coloneqq k^{-1}\sum_{j=1}^k \delta_{z_{j}^+}$ and where $H_{\tau}$ is defined in \eqref{eq:H_defn}. Observe then:
$$
\ell(f) = \log\left(\frac{1}{k}+e^{H(f)}\right).
$$
Denoting by  $\delta \ell(f)[\eta]$,    $\delta e^{H(f)}[\eta]$ and $\delta H(f)[\eta]$ the G\^{a}teaux derivatives of respectively $f\mapsto\ell(f)$, $f\mapsto e^{H(f)}$ and $f\mapsto H(f)$ at a point $f\in\BR$ and in direction $\eta\in\BR$, the chain rule gives:
\begin{equation}\label{eq:delta_ell}
\delta \ell(f)[\eta] = \frac{0+\delta e^{H(f)}[\eta]}{1/k +e^{H(f)}}= \frac{e^{H(f)}}{1/k +e^{H(f)}}\delta H(f)[\eta]. 
\end{equation}
Now fix any $\phi,\phi^\prime\in \BS$. By the mean value theorem and \eqref{eq:delta_ell}, there exists $c\in[0,1]$, such that, with
$\xi\coloneqq (1-c)\phi+c\phi^{\prime}\in\BR$, 
\begin{align}
  \ell(\phi)-\ell(\phi^\prime) =\delta \ell(\xi)[\phi-\phi^{\prime}] = \frac{e^{H(\xi)}}{1/k +e^{H(\xi)}} \delta H(\xi)[\phi-\phi^{\prime}],\label{eq:ell-ell}
 \end{align}
 and by applying lemma \ref{lem:deltaH} with there $f=\xi$, $\eta=\phi-\phi^\prime$, $x=z^a$, $y=z^+$, $\mu = \widehat{\pi}^-$,
 \begin{align}
  &\tau\, \delta H(\xi)[\phi-\phi^{\prime}]\nonumber\\
 & =  \left[\left\langle \phi(z^a)-\phi^\prime(z^a),G_{\tau}(\xi,\xi,z^a,\widehat{\pi})-\xi(z^+)\right\rangle +\left\langle \xi(z^a),G_{\tau}(\xi,\phi-\phi^\prime,\widehat{\pi},z^a)-\phi(z^+) + \phi^\prime(z^+)\right\rangle \right]\nonumber \\
 & = \left\langle \phi(z^a)-\phi^\prime(z^a),-\xi(z^+)\right\rangle+
 \left\langle \phi(z^+) - \phi^\prime(z^+),-\xi(z^a)\right\rangle\nonumber \\
 &\quad + \left\langle \phi(z^a) - \phi^\prime(z^a),G_{\tau}(\xi,\xi,\widehat{\pi},z^a)\right\rangle
 + \left\langle \xi(z^a),G_{\tau}(\xi,\phi-\phi^\prime,\widehat{\pi},z^a)\right\rangle
 \nonumber \\
 &= \left\langle \phi(z^a)-\phi^\prime(z^a),-\xi(z^+)\right\rangle+
 \left\langle \phi(z^+) - \phi^\prime(z^+),-\xi(z^a)\right\rangle\label{eq:delta_H_1}\\
 &\quad+\left\langle \phi(z^a) - \phi^\prime(z^a),G_{\tau}(\xi,\xi,\widehat{\pi},z^a) - G_{\tau}(\xi,\phi-\phi^\prime,\widehat{\pi},z^a)\right\rangle\label{eq:delta_H_2}\\
&\quad+\left\langle\xi(z^a)+ \phi(z^a) - \phi^\prime(z^a), G_{\tau}(\xi,\phi-\phi^\prime,\widehat{\pi},z^a)\right\rangle.\label{eq:delta_H_3}
\end{align}

We shall apply the Cauchy-Schwartz inequality to each of the inner-products in \eqref{eq:delta_H_1}-\eqref{eq:delta_H_3}. In preparation, observe that since $\|\phi\|_{2,\infty} = \|\phi^\prime\|_{2,\infty}=1$, we have:
\begin{equation}\label{eq:xi_diff}
\|\xi\|_{2,\infty}\leq 1\quad\mathrm{and}\quad\|\xi-\phi+\phi^\prime\|_{2,\infty}\vee\|\xi+\phi-\phi^\prime\|_{2,\infty}\leq 3.
\end{equation}

Recalling the definition of $G_{\tau}$ in \eqref{eq:G_defn}, note that $G_{\tau}$ is linear in its second argument. Combined with Jensen's inequality, this gives:
\begin{equation}\label{eq:G_diff_bound}
\|G_{\tau}(\xi,\xi,\widehat{\pi},z^a) - G_{\tau}(\xi,\phi-\phi^\prime,\widehat{\pi},z^a)\|_2 = \|G_{\tau}(\xi,\xi-\phi+\phi^\prime,\widehat{\pi},z^a) \|_2 \leq \|\xi-\phi+\phi^\prime\|_{2,\infty}\leq 3. 
\end{equation}

Since $\|\xi\|_{2,\infty}\leq 1$ we may apply lemma \ref{lem:G_phi_phi} with there $f=\xi$ to give, for any $\beta\geq 1$,
\begin{equation}\label{eq:G_phi_phi_apply}
\|G_{\tau}(\xi,\phi-\phi^\prime,\widehat{\pi},z^a)\|_2 \leq  e^{2/{\beta\tau}}\left(\frac{1}{k}\sum_{j=1}^k \|\phi(z_j^-)-\phi^\prime(z_j^-)\|_2^{\beta}\right)^{1/\beta}.
\end{equation}

Combining \eqref{eq:ell-ell}; the fact $e^{H(\xi)}/(1/k + e^{H(\xi)})\leq 1$; application of the Cauchy-Schwartz inequality to each of the inner-products in \eqref{eq:delta_H_1}-\eqref{eq:delta_H_3}; and the bounds \eqref{eq:xi_diff}, \eqref{eq:G_diff_bound} and \eqref{eq:G_phi_phi_apply} gives: 
\begin{multline*}
|\ell(\phi)-\ell(\phi^\prime)|
 \\ \leq \frac{4}{\tau}\|\phi(z^a)-\phi^\prime(z^a)\|_2\nonumber  +\frac{1}{\tau}\|\phi(z^+)-\phi^\prime(z^+)\|_2 \nonumber  +\frac{3}{\tau}  e^{2/{\beta\tau}}\left(\frac{1}{k}\sum_{j=1}^k \|\phi(z_j^-)-\phi^\prime(z_j^-)\|_2^{\beta}\right)^{1/\beta}.
 \end{multline*}
\end{proof}

\subsection{Other proofs for section \ref{sec:generalisation}}

\begin{proof}[Proof of lemma \ref{lem:metric}]
It follows from proposition \ref{prop:ell_lipschitz} that:
\begin{multline*}
\left|\ell(\phi,\mathbf{z}_i,k,\tau) - \ell(\phi^\prime,\mathbf{z}_i,k,\tau)  \right| \\
\leq \frac{4}{\tau} \left(\|\phi(z_i^a) - \phi^\prime(z_i^a)\|_2+\|\phi(z_i^+) - \phi^\prime(z_i^+)\|_2+e^{2/\beta\tau}\left(\frac{1}{k}\sum_{j=1}^k  \|\phi(z_{ij}^-) - \phi^\prime (z_{ij}^-)\|_{2}^{\beta}\right)^{1/\beta}\right).
\end{multline*}
By applying the Lipschitz condition in assumption $\mathrm{(\nameref{ass:Theta})}$,
\begin{multline}
\left|\ell(\phi,\mathbf{z}_i,k,\tau) - \ell(\phi^\prime,\mathbf{z}_i,k,\tau)  \right| \\
\leq \frac{4}{\tau} C_{\Phi} \|\theta-\theta^\prime\|_2\left(\|z_i^a\|_2+\|z_i^+\|_2+e^{2/\beta\tau}\left(\frac{1}{k}\sum_{j=1}^k  \|z_{ij}^- \|_{2}^{\beta}\right)^{1/\beta}\right).\label{eq:ell_diff_proof}
\end{multline}
Applying the triangle inequality for the $\|\cdot\|_2$ norm in $\R^n$,
\begin{multline}\label{eq:data_bound}
\left(\frac{1}{n}\sum_{i=1}^n\left[\|z_i^a\|_2+\|z_i^+\|_2+e^{2/\beta\tau}\left(\frac{1}{k}\sum_{j=1}^k  \|z_{ij}^- \|_{2}^{\beta}\right)^{1/\beta}\right]^2\right)^{1/2}\\
\leq \left(\frac{1}{n}\sum_{i=1}^n\|z_i^a\|_2^2\right)^{1/2} +\left(\frac{1}{n}\sum_{i=1}^n\|z_i^+\|_2^2\right)^{1/2} +e^{2/\beta\tau}\left[\frac{1}{n}
\sum_{i=1}^n\left(\frac{1}{k}\sum_{j=1}^k\|z_{ij}^-\|_2^{\beta}\right)^{2/\beta}\right]^{1/2}.
\end{multline}

Now set $\beta = 1 \vee 2/\tau$. When $\tau<1$ we have $\tau =2/\beta<1$ and $\beta\tau = \tau\vee 2=2$. In this case, by Jensen's inequality,
$$
e^{2/\beta\tau}\left[\frac{1}{n}
\sum_{i=1}^n\left(\frac{1}{k}\sum_{j=1}^k\|z_{ij}^-\|_2^{\beta}\right)^{2/\beta}\right]^{1/2} \leq  
e \left(\frac{1}{nk}
\sum_{i=1}^n \sum_{j=1}^k\|z_{ij}^-\|_2^{\beta}\right)^{1/\beta} = e \left(\frac{1}{nk}
\sum_{i=1}^n\sum_{j=1}^k\|z_{ij}^-\|_2^{2/\tau}\right)^{\tau/2}.
$$

On the other hand, when $\tau \geq 1$, we have $\beta\in[1,2]$, $2/\beta \geq 1$ and $\beta\tau = \tau \vee 2\geq 2$. In this case, by Jensen's inequality,
$$
e^{2/\beta\tau}\left[\frac{1}{n}
\sum_{i=1}^n\left(\frac{1}{k}\sum_{j=1}^k\|z_{ij}^-\|_2^{\beta}\right)^{2/\beta}\right]^{1/2} \leq  
e \left[\frac{1}{nk}
\sum_{i=1}^n \sum_{j=1}^k\|z_{ij}^-\|_2^{2}\right]^{1/2}. $$
Combining the two cases: $\tau<1$ and $\tau\geq 1$,
\begin{equation}\label{eq:tau_both_cases}
e^{2/\beta\tau}\left[\frac{1}{n}
\sum_{i=1}^n\left(\frac{1}{k}\sum_{j=1}^k\|z_{ij}^-\|_2^{\beta}\right)^{2/\beta}\right]^{1/2} \leq e \left[\frac{1}{nk}
\sum_{i=1}^n \sum_{j=1}^k\|z_{ij}^-\|_2^{2/(1\wedge\tau)}\right]^{(1\wedge\tau)/2}.
\end{equation}
On both sides of \eqref{eq:ell_diff_proof}, take the square, then the arithmetic average over the index $i=1,\ldots,n$, then take the square root. Combined with \eqref{eq:data_bound} and \eqref{eq:tau_both_cases} this gives:
$$
\rho^{\mathrm{info}}_n(\phi_\theta,\phi_{\theta^\prime}) \\ \leq \frac{4e}{\tau} B_\tau(\mathbf{z}_1,\ldots,\mathbf{z}_n) C_\Phi  \|\theta-\theta^\prime\|_2. 
$$
\end{proof}

\begin{proof}[Proof of proposition \ref{prop:prop_dudley_apply}.] Let $\Phi=\{\phi_\theta;\theta\in\Theta\}$ as per assumption $\mathrm{(\nameref{ass:Theta})}$.  As shorthand notation, let us absorb various quantities in the statement of lemma \ref{lem:metric} into a constant $L$ such that for all $\theta,\theta^\prime \in\Theta$:
\begin{equation}\label{eq:L_lipschitz_shortand}
\rho^{\mathrm{info}}(\phi_\theta,\phi_{\theta^\prime} ) \leq L \|\theta-\theta^\prime\|_2.
\end{equation}

If for some $N\geq 1$ and $\epsilon>0$, $\{\theta^1,\ldots,\theta^N\}$ is an $\epsilon/L$-cover of $\Theta$ with respect to the $\|\cdot-\cdot\|_2$ distance, then it follows from \eqref{eq:L_lipschitz_shortand} that $\{\phi_{\theta^1},\ldots,\phi_{\theta^N}\}$ is an $\epsilon$-cover of $\Phi$ with respect to $\rho^{\mathrm{info}}$. In turn, the associated covering numbers of $\Phi$ and $\Theta$ obey:
$$
\mathcal{N}(\epsilon,\Phi,\rho^{\mathrm{info}}) \leq \mathcal{N}(\epsilon/L,\Theta,\|\cdot-\cdot\|_2).
$$
Combining this inequality with Dudley's entropy integral, e.g., \citep[eq. 5.48]{wainwright2019high},
\begin{align}
&\mathbb{E}\left[ \sup_{\phi\in\Phi}\frac{1}{n}\sum_{i=1}^n\sigma_i \ell(\phi,\mathbf{z}_i,k,\tau)  \right]\nonumber \\
&\leq  \frac{24}{\sqrt{n}} \int_{0}^{\frac{4}{\tau}\wedge LR} \sqrt{\log\mathcal{N}(\epsilon,\Phi,\rho^{\mathrm{info}})}\, \d \epsilon\nonumber \\
&\leq \frac{24}{\sqrt{n}} \int_{0}^{\frac{4}{\tau}\wedge LR} \sqrt{\log\mathcal{N}(\epsilon/L,\Theta,\|\cdot-\cdot\|_2)}\, \d \epsilon, \label{eq:integral_bound}
\end{align}
where $\sigma_1,\ldots,\sigma_n$ are i.i.d. Rademacher variables; the integral upper-limit term $4/\tau$ in the first inequality holds because by lemma \ref{lem:ell_determ_bound}, $\rho^{\mathrm{info}}(\phi,\phi^\prime)\leq 4/\tau$ for all $\phi,\phi^\prime\in\Phi\subset\BS$; the integral upper-limit $LR$ appears since by combining \eqref{eq:L_lipschitz_shortand} with the definition of $\Theta$ to we have $\rho^{\mathrm{info}}(\phi_{\theta},\phi_{\theta^\prime})\leq LR$.

We have the standard Euclidean volumetric estimate: $\mathcal{N}(\epsilon,\Theta,\|\cdot-\cdot\|_2) \leq (1+2R/\epsilon)^{d_{\Theta}} $, see, e.g., \citep[eq. 5.9]{wainwright2019high}. Writing $U\coloneqq 4/\tau \wedge LR  $, Cauchy-Schwartz gives:
\begin{align*}
\int_0^U \sqrt{\log\mathcal{N}(\epsilon/L,\Theta,\|\cdot-\cdot\|_2)}\, \d \epsilon  &\leq \left(\int_{0}^U 1^2\, \d \epsilon \right)^{1/2} \left(\int_0^U \log\mathcal{N}(\epsilon/L,\Theta,\|\cdot-\cdot\|_2) \right)^{1/2} \, \d \epsilon  \\
&\leq  \sqrt{U d_{\Theta}} \left(\int_0^U\log(1+2RL/\epsilon)\, \d \epsilon\right)^{1/2}\\
& = \sqrt{U d_{\Theta}} \left[U \log\left(1+\frac{2RL}{U}\right) + 2RL \log\left(1+\frac{U}{2RL}\right) \right]^{1/2} \\
&\leq U\sqrt{ d_{\Theta}} \left[ \log\left(1+\frac{2RL}{U}\right) + 1 \right]^{1/2}, 
\end{align*}
where the equality uses the fact that the anti-derivative in question is $\epsilon \log(1+2RL/\epsilon) +2RL\log(1+\epsilon/2RL)$, and the final inequality uses $\log(1+x)\leq x$ for $x\geq 0$.

Returning to \eqref{eq:integral_bound} in the case $4/\tau \leq RL$, we have $U=4/\tau$ and:
\begin{align*}
\mathbb{E}\left[ \sup_{\phi\in\Phi}\frac{1}{n}\sum_{i=1}^n\sigma_i \ell(\phi,\mathbf{z}_i,k,\tau)  \right] &\leq \frac{24\sqrt{d_{\Theta}}}{\sqrt{n}}\frac{4}{\tau} \left[\log\left(1+\frac{2RL}{4/\tau}\right)+1\right]^{1/2}\\
&=  \frac{24\sqrt{d_{\Theta}}}{\sqrt{n}}\frac{4}{\tau} \left[\log\left(1+2a \right)+1\right]^{1/2},
\end{align*}
where  $L = 4eB_{\tau}(\mathbf{z}_1,\ldots,\mathbf{z}_n)C_{\Phi}/\tau$ (from lemma \ref{lem:metric}) has been used and $a\coloneqq eR B_{\tau}(\mathbf{z}_1,\ldots,\mathbf{z}_n)C_{\Phi} $. 

On the other hand, if $4/\tau > RL$, we have $U=RL$ and 
\begin{align*}
\mathbb{E}\left[ \sup_{\phi\in\Phi}\frac{1}{n}\sum_{i=1}^n\sigma_i \ell(\phi,\mathbf{z}_i,k,\tau)  \right] & \leq \frac{24\sqrt{d_{\Theta}}}{\sqrt{n}}RL \left[\log(3)+1\right]^{1/2}  \\
&\leq \frac{24\sqrt{d_{\Theta}}}{\sqrt{n}}\frac{4}{\tau}a[\log(3)+1]^{1/2}.
\end{align*}

Since $4/\tau >RL \Leftrightarrow 1 > eR B_{\tau}(\mathbf{z}_1,\ldots,\mathbf{z}_n)C_{\Phi}\Leftrightarrow 1>a$, we obtain
$$
\mathbb{E}\left[ \sup_{\phi\in\Phi}\frac{1}{n}\sum_{i=1}^n\sigma_i \ell(\phi,\mathbf{z}_i,k,\tau)  \right] \leq \frac{96}{\tau}\sqrt{\frac{d_{\Theta}}{n}}\min\left\{a\sqrt{\log(3)+1},\sqrt{\log(1+2a)+1}\right\}.
$$

\end{proof}

\bibliographystyle{plainnat}
\bibliography{refs_CL}

\begin{thebibliography}{25}
\providecommand{\natexlab}[1]{#1}
\providecommand{\url}[1]{\texttt{#1}}
\expandafter\ifx\csname urlstyle\endcsname\relax
  \providecommand{\doi}[1]{doi: #1}\else
  \providecommand{\doi}{doi: \begingroup \urlstyle{rm}\Url}\fi

\bibitem[Bartlett et~al.(2017)Bartlett, Foster, and
  Telgarsky]{bartlett2017spectrally}
Peter~L Bartlett, Dylan~J Foster, and Matus~J Telgarsky.
\newblock Spectrally-normalized margin bounds for neural networks.
\newblock \emph{{Advances in Neural Information Processing Systems}}, 30, 2017.

\bibitem[Chen et~al.(2020)Chen, Kornblith, Norouzi, and Hinton]{chen2020simclr}
Ting Chen, Simon Kornblith, Mohammad Norouzi, and Geoffrey Hinton.
\newblock A simple framework for contrastive learning of visual
  representations.
\newblock In \emph{Proceedings of the 37th International Conference on Machine
  Learning}, volume 119 of \emph{Proceedings of Machine Learning Research},
  pages 1597--1607. {PMLR}, 2020.

\bibitem[Ghanooni et~al.(2024)Ghanooni, Mustafa, Lei, Lin, and
  Kloft]{ghanooni2024generalization}
Naghmeh Ghanooni, Waleed Mustafa, Yunwen Lei, Anthony~Widjaja Lin, and Marius
  Kloft.
\newblock Generalization bounds with logarithmic negative-sample dependence for
  adversarial contrastive learning.
\newblock \emph{Transactions on Machine Learning Research}, 2024.

\bibitem[Golowich et~al.(2018)Golowich, Rakhlin, and Shamir]{golowich2018size}
Noah Golowich, Alexander Rakhlin, and Ohad Shamir.
\newblock Size-independent sample complexity of neural networks.
\newblock In \emph{Proceedings of the 31st Conference On Learning Theory},
  pages 297--299. PMLR, 2018.

\bibitem[Gonon et~al.(2025)Gonon, Brisebarre, Riccietti, and
  Gribonval]{gonon2025rescaling}
Antoine Gonon, Nicolas Brisebarre, Elisa Riccietti, and R{\'e}mi Gribonval.
\newblock A rescaling-invariant lipschitz bound based on path-metrics for
  modern relu network parameterizations.
\newblock In \emph{International Conference on Machine Learning}, pages
  20047--20074. PMLR, 2025.

\bibitem[He et~al.(2020)He, Fan, Wu, Xie, and Girshick]{he2020moco}
Kaiming He, Haoqi Fan, Yuxin Wu, Saining Xie, and Ross Girshick.
\newblock Momentum contrast for unsupervised visual representation learning.
\newblock In \emph{Proceedings of the {IEEE/CVF} Conference on Computer Vision
  and Pattern Recognition ({CVPR})}, pages 9729--9738. {IEEE}, 2020.
\newblock \doi{10.1109/cvpr42600.2020.00975}.

\bibitem[Henaff(2020)]{henaff2020data}
Olivier Henaff.
\newblock Data-efficient image recognition with contrastive predictive coding.
\newblock In \emph{International Conference on Machine Learning}, pages
  4182--4192. PMLR, 2020.

\bibitem[Hieu and Ledent(2025)]{hieu2025generalization}
Nong~Minh Hieu and Antoine Ledent.
\newblock Generalization analysis for supervised contrastive representation
  learning under non-iid settings.
\newblock In \emph{International Conference on Machine Learning}, pages
  23179--23218. PMLR, 2025.

\bibitem[Khosla et~al.(2020)Khosla, Teterwak, Wang, Sarna, Tian, Isola,
  Maschinot, Liu, and Krishnan]{khosla2020supervised}
Prannay Khosla, Piotr Teterwak, Chen Wang, Aaron Sarna, Yonglong Tian, Phillip
  Isola, Aaron Maschinot, Ce~Liu, and Dilip Krishnan.
\newblock Supervised contrastive learning.
\newblock \emph{{Advances in Neural Information Processing Systems}},
  33:\penalty0 18661--18673, 2020.

\bibitem[Lei et~al.(2019)Lei, Dogan, Zhou, and Kloft]{lei2019data}
Yunwen Lei, {\"U}r{\"u}n Dogan, Ding-Xuan Zhou, and Marius Kloft.
\newblock Data-dependent generalization bounds for multi-class classification.
\newblock \emph{IEEE Transactions on Information Theory}, 65\penalty0
  (5):\penalty0 2995--3021, 2019.

\bibitem[Lei et~al.(2023)Lei, Yang, Ying, and Zhou]{lei2023generalization}
Yunwen Lei, Tianbao Yang, Yiming Ying, and Ding-Xuan Zhou.
\newblock Generalization analysis for contrastive representation learning.
\newblock In \emph{International Conference on Machine Learning}, pages
  19200--19227. PMLR, 2023.

\bibitem[Maurer(2016)]{maurer2016vector}
Andreas Maurer.
\newblock A vector-contraction inequality for rademacher complexities.
\newblock In \emph{International Conference on Algorithmic Learning Theory},
  pages 3--17. Springer, 2016.

\bibitem[Mohri et~al.(2018)Mohri, Rostamizadeh, and
  Talwalkar]{mohri2018foundations}
Mehryar Mohri, Afshin Rostamizadeh, and Ameet Talwalkar.
\newblock \emph{Foundations of Machine Learning}.
\newblock MIT press, 2018.

\bibitem[Neyshabur et~al.(2015)Neyshabur, Tomioka, and
  Srebro]{neyshabur2015norm}
Behnam Neyshabur, Ryota Tomioka, and Nathan Srebro.
\newblock Norm-based capacity control in neural networks.
\newblock In \emph{Conference on Learning Theory}, pages 1376--1401. PMLR,
  2015.

\bibitem[Radford et~al.(2021)Radford, Kim, Hallacy, Ramesh, Goh, Agarwal,
  Sastry, Askell, Mishkin, Clark, Krueger, and Sutskever]{radford2021clip}
Alec Radford, Jong~Wook Kim, Chris Hallacy, Aditya Ramesh, Gabriel Goh,
  Sandhini Agarwal, Girish Sastry, Amanda Askell, Pamela Mishkin, Jack Clark,
  Gretchen Krueger, and Ilya Sutskever.
\newblock Learning transferable visual models from natural language
  supervision.
\newblock In \emph{Proceedings of the 38th International Conference on Machine
  Learning}, volume 139 of \emph{Proceedings of Machine Learning Research},
  pages 8748--8763. {PMLR}, 2021.
\newblock \doi{10.48550/arxiv.2103.00020}.

\bibitem[Saunshi et~al.(2019)Saunshi, Plevrakis, Arora, Khodak, and
  Khandeparkar]{saunshi2019theoretical}
Nikunj Saunshi, Orestis Plevrakis, Sanjeev Arora, Mikhail Khodak, and
  Hrishikesh Khandeparkar.
\newblock A theoretical analysis of contrastive unsupervised representation
  learning.
\newblock In \emph{International Conference on Machine Learning}, pages
  5628--5637. PMLR, 2019.

\bibitem[Tian et~al.(2020)Tian, Krishnan, and Isola]{tian2020contrastive}
Yonglong Tian, Dilip Krishnan, and Phillip Isola.
\newblock Contrastive multiview coding.
\newblock In \emph{European conference on computer vision}, pages 776--794.
  Springer, 2020.

\bibitem[van~den Oord et~al.(2018)van~den Oord, Li, and
  Vinyals]{vandenOord2018cpc}
A{\"a}ron van~den Oord, Yazhe Li, and Oriol Vinyals.
\newblock Representation learning with contrastive predictive coding.
\newblock \emph{arXiv preprint arXiv:1807.03748}, 2018.

\bibitem[Wainwright(2019)]{wainwright2019high}
Martin~J Wainwright.
\newblock \emph{High-dimensional statistics: A non-asymptotic viewpoint},
  volume~48.
\newblock Cambridge university press, 2019.

\bibitem[Wang and Isola(2020{\natexlab{a}})]{wang2020understanding}
Tongzhou Wang and Phillip Isola.
\newblock Understanding contrastive representation learning through alignment
  and uniformity on the hypersphere.
\newblock In \emph{Proceedings of the 37th International Conference on Machine
  Learning}. PMLR, 2020{\natexlab{a}}.

\bibitem[Wang and Isola(2020{\natexlab{b}})]{wang2020understandingv10}
Tongzhou Wang and Phillip Isola.
\newblock Understanding contrastive representation learning through alignment
  and uniformity on the hypersphere, 2020{\natexlab{b}}.
\newblock URL \url{https://arxiv.org/abs/2005.10242v10}.

\bibitem[Wu et~al.(2018)Wu, Xiong, Yu, and Lin]{wu2018unsupervised}
Zhirong Wu, Yuanjun Xiong, Stella~X Yu, and Dahua Lin.
\newblock Unsupervised feature learning via non-parametric instance
  discrimination.
\newblock In \emph{Proceedings of the IEEE conference on computer vision and
  pattern recognition}, pages 3733--3742, 2018.

\bibitem[Yang et~al.(2021)Yang, Hu, Babuschkin, Sidor, Liu, Farhi, Ryder,
  Pachocki, Chen, and Gao]{yang2021tuning}
Ge~Yang, Edward Hu, Igor Babuschkin, Szymon Sidor, Xiaodong Liu, David Farhi,
  Nick Ryder, Jakub Pachocki, Weizhu Chen, and Jianfeng Gao.
\newblock Tuning large neural networks via zero-shot hyperparameter transfer.
\newblock \emph{{Advances in Neural Information Processing Systems}},
  34:\penalty0 17084--17097, 2021.

\bibitem[Yeh et~al.(2022)Yeh, Hong, Hsu, Liu, Chen, and
  LeCun]{yeh2022decoupled}
Chun-Hsiao Yeh, Cheng-Yao Hong, Yen-Chi Hsu, Tyng-Luh Liu, Yubei Chen, and Yann
  LeCun.
\newblock Decoupled contrastive learning.
\newblock In \emph{European conference on computer vision}, pages 668--684.
  Springer, 2022.

\bibitem[Zimmermann et~al.(2021)Zimmermann, Sharma, Schneider, Bethge, and
  Brendel]{zimmermann2021contrastive}
Roland~S. Zimmermann, Yash Sharma, Steffen Schneider, Matthias Bethge, and
  Wieland Brendel.
\newblock Contrastive learning inverts the data generating process.
\newblock In \emph{International Conference on Machine Learning}, pages
  12979--12990. PMLR, 2021.

\end{thebibliography}

\end{document}